\documentclass[10pt,journal,compsoc]{IEEEtran}

\usepackage[american]{babel}
\usepackage[utf8]{inputenc}
\usepackage[T1]{fontenc}
\usepackage{amsmath,amsthm,amssymb}
\usepackage{mathtools}
\usepackage{mathrsfs}
\usepackage{fca}
\usepackage{dsfont}  
\usepackage{cancel}
\usepackage[colorlinks,citecolor=blue,urlcolor=black,hidelinks,linktocpage]{hyperref}
\usepackage[all]{hypcap}
\usepackage{booktabs}
\usepackage{cleveref}
\let\cref\Cref
\usepackage{ marvosym }
\newtheorem{thm}{Theorem}[section]
\newtheorem{corollary}[thm]{Corollary}
\newtheorem{proposition}[thm]{Propostion}
\newtheorem{lemma}[thm]{Lemma}
\newtheorem{definition}[thm]{Definition}
\newtheorem{problem}[thm]{Problem}

\theoremstyle{definition}

\theoremstyle{remark}
\newtheorem{remarks}[thm]{Remark}

\usepackage{nicefrac}

\renewcommand{\epsilon}{\varepsilon}
\renewcommand{\phi}{\varphi}
\newcommand\logeq{\mathrel{\vcentcolon\Leftrightarrow}}
\DeclareMathOperator{\Th}{Th}
\DeclareMathOperator{\Ext}{Ext}
\DeclareMathOperator{\Int}{Int}
\DeclareMathOperator{\conf}{conf}
\DeclareMathOperator{\supp}{sup}
\newcommand{\N}{\mathbb{N}}
\newcommand{\Scon}{\mathbb{S}}
\newcommand{\Tcon}{\mathbb{T}}
\newcommand{\Ocon}{\mathbb{O}}

\usepackage{adjustbox}
\usepackage{tikz}
\usetikzlibrary{positioning}
\usepackage{enumerate}

\usepackage[shrink=25,babel,kerning=true]{microtype}
\usepackage{paralist}
\usepackage{todonotes}
\presetkeys{todonotes}{color=blue!5}{}

\usepackage[ruled,vlined,linesnumbered]{algorithm2e}
\usepackage{algorithmic}

%

\usepackage[style=ieee,sorting=nyt,backend=bibtex,doi=false,isbn=false,url=false]{biblatex}
\addbibresource{paper.bib}

\usepackage{graphicx}


%

\usepackage{fixltx2e}
\usepackage{stfloats}

\usepackage{url}

\hyphenation{op-tical net-works semi-conduc-tor}


\newcommand{\pkcore}{$pq$-core\ }
\newcommand{\pkcorenw}{$pq$-core}

\newcommand{\pkcores}{$pq$-cores\ }
\newcommand{\pkcoresnw}{$pq$-cores}
\newcommand{\pkleq}[1]{\leq_{#1}}
\DeclarePairedDelimiter{\abs}{\lvert}{\rvert}


\begin{document}
%
\title{Knowledge Cores in Large Formal Contexts}
%
%
%
%

\author{Tom~Hanika, Johannes~Hirth
  \IEEEcompsocitemizethanks{\IEEEcompsocthanksitem T.~Hanika \& J.~Hirth were
    with the Knowledge \& Data Engineering Group, Dep.\ of Electrical Engineering and Computer Science, University of Kassel.\protect\\
    E-mail: tom.hanika@cs.uni-kassel.de, hirth@cs.uni-kassel.de.\protect\\
    Authors are given in alphabetical order.  No priority in authorship is
    implied.}
  \thanks{Manuscript received xx, xxxx; revised xx, xxxx.}}

%
%

\markboth{Journal of XXXX,~Vol.~XX, No.~X, X~2020}%
{Shell \MakeLowercase{\textit{et al.}}: Bare Advanced Demo of IEEEtran.cls for IEEE Computer Society Journals}
%



\IEEEtitleabstractindextext{%
\begin{abstract}
  Knowledge computation tasks are often infeasible for large data sets. This is
  in particular true when deriving knowledge bases in formal concept analysis
  (FCA). Hence, it is essential to come up with techniques to cope with this
  problem. Many successful methods are based on random processes to reduce the
  size of the investigated data set. This, however, makes them hardly
  interpretable with respect to the discovered knowledge. Other approaches
  restrict themselves to highly supported subsets and omit rare and interesting
  patterns. An essentially different approach is used in network science, called
  $k$-cores. These are able to reflect rare patterns if they are well connected
  in the data set. In this work, we study $k$-cores in the realm of FCA by
  exploiting the natural correspondence to bi-partite graphs.  This structurally
  motivated approach leads to a comprehensible extraction of knowledge cores
  from large formal contexts data sets.
\end{abstract}

\begin{IEEEkeywords}
  $k$-Cores, Bi-Partite~Graphs, Formal~Concept~Analysis, Lattices, Implications, Knowledge~Base
\end{IEEEkeywords}}

\maketitle

\IEEEdisplaynontitleabstractindextext

%
\IEEEpeerreviewmaketitle

\ifCLASSOPTIONcompsoc
\IEEEraisesectionheading{\section{Introduction}\label{sec:introduction}}
\else
\section{Introduction}
\label{sec:introduction}
\fi

Large (binary) relational data sets are a demanding challenge for contemporary
knowledge discovery methods using formal concept analysis~\cite{fca-book}. This
is due to the fact that many considered problems in this realm are
computationally intractable, e.g., enumerating formal concepts, i.e., closed
sets, or computing the canonical
base~\cite{kuznetsov2004icd,journals/dam/DistelS11} of the implicational
theory. A cause is the potentially exponential large output size of knowledge
discovery processes. For example, large knowledge bases may be incomprehensible
to human readers. Different methods were developed to adapt to the growth of
data sets. Sophisticated algorithms employ filtering for data reduction. For
example, formal concepts can be filtered by their support in the data set. This
is done in Apriori like techniques~\cite{1401887,stumme2002efficient}. More
recent methods consider the minimum description
length~\cite{cispa2924}. However, all these approaches are unable to cope with
large relational data sets for two reasons: first, they cannot discover rare
combinations of attributes that are (comparatively) highly supported in the data
set; secondly, computations require an infeasible amount of steps. Moreover,
random approaches do not succeed either in these cases, since low supported
combinations are unlikely to be sampled. Other techniques, such as feature
combination or object clustering \cite{conf/cla/CodocedoTA11,
  journals/eswa/AswanikumarS10} lack in meaningfulness.

In general, there are two approaches to overcome the requirements of
large data sets with respect to knowledge discovery. One line of
research is to introduce novel knowledge features apart from closed
sets and their related notions. This may lead to results which are not
accessible to well studied knowledge procedures, e.g., from formal
concept analysis. The other well investigated practice is to develop
data reduction procedures that reduce the data sets significantly. For
example, latent semantic analysis or unsupervised clustering of
attributes~\cite{conf/cla/CodocedoTA11, journals/eswa/AswanikumarS10}
is often applied. This, however, does often lead to
unexplainable features.

Here we step in by translating a graph theoretic notion for data set reduction,
i.e., $k$-Cores by Seidman~\cite{Seidman1983}, to the realm of formal concept
analysis. The inviolable constraint for our investigation is to maintain
interpretability as well as explainability of knowledge with respect to the
original data set. To this end we study theoretically as well as experimentally
the impact of the core reduction process on the conceptual knowledge. Using this we
demonstrate a principle method to discover \emph{interesting cores of knowledge}
in large data sets. In detail, we give a formal overview of to be defined
\mbox{\pkcores} and their reduction effects on conceptual structures and implicational
theories. Furthermore, we provide valuations for choosing interesting cores in
large relational data sets.

We complement our findings by introducing knowledge transformation
algorithms. For a given data set and an initial \pkcore they are able to provide
a computationally efficient navigation process in the emerging knowledge
structure of all \pkcores. Finally, we argue that our methods are able to cope
with arbitrary subsets of binary relational data.

The rest of our work is structured as follows. In~\cref{thefca} we first
recollect common notations from formal concept analysis and introduce cores in
formal contexts thereafter in~\cref{sec:coresfca}. The related formal concept
lattice and canonical base are investigated in~\cref{sec:lattice}
and~\cref{sec:canbase}. This is followed by an extensive experimental study
in~\cref{coreChar} and \cref{larger} which is concluded by a presentation of efficient algorithms
for \pkcores in~\cref{algorithms}. After a discussion of related work
in~\cref{relwork} we conclude with~\cref{sec:conclusion}.

\section{Formal Concept Analysis}
\label{thefca}
Formal concept analysis (FCA) deals with binary relational data
sets~\cite{Wille1982, fca-book}. These are represented in \emph{formal context}
$(G,M,I)$ where the finite sets $G$ and $M$ are called \emph{objects} and
\emph{attributes}, respectively. The binary relation $I$ between these sets is
called \emph{incidence}, where $(g,m) \in I$ is interpreted as ``object $g$ has
attribute $m$''. Two derivation operators emerge on the power sets of $G$ and
$M$: $\cdot’:\mathcal{P}(G)\to\mathcal{P}(M)$ where $A\mapsto A'\coloneqq\{m\in M
\mid \forall g \in A: (g,m)\in I\}$ and $\cdot’:\mathcal{P}(M)\to\mathcal{P}(G)$
dually. Composing the two operators leads to two \emph{closure operators} (i.e.,
idempotent, monotone, and extensive maps) on $\mathcal{P}(G)$ and
$\mathcal{P}(M)$. We investigate in this work \emph{induced sub-contexts}, i.e.,
$\mathbb{S}=(H,N,J)$ with $H\subseteq G$, $N\subseteq M$, and $J=I\cap (H\times
N)$, denoted by $\mathbb{S}\leq \context$. When multiple formal contexts are in
play we often use the incidence relation for indicating a derivation, e.g.,
$\{g\}^{I}$ for a derivation of $g\in G$ in $\context$ and $\{g\}^{J}$ for a
derivation of $g\in H$ in $\mathbb{S}$. A \emph{formal concept} is a pair
$(A,B)\in\mathcal{P}(G)\times \mathcal{P}(M)$ with $A'=B$ and $A = B'$. We call
$A$ the \emph{extent} and $B$ the \emph{intent} of $(A,B)$ and denote with
$\Ext(\context)$ and $\Int(\context)$ the sets of all extents and intents
respectively. The set of all formal concepts of $\context$ is denoted by 
$\mathfrak{B}(\context)$. This set can be ordered by $\leq$ where $(A,B)\leq
(C,D)\logeq A\subseteq C $ for $(A,B),(C,D)\in\mathfrak{B}(\context)$. The
ordered set of all formal concepts is denoted by $\BV(\context)$. The
fundamental theorem of FCA states that $\BV(\context)$ is a (complete) lattice.
Furthermore, we investigate \emph{implications} in this work, i.e., $A\to B$,
where $A,B\subseteq M$. We say $A\to B$ is valid iff $B'\subseteq A'$. The set
of all valid implications is denoted by $\Th(\context)$. Usually, one does work
with a base of the theory, e.g., Duquenne–Guigues-Base~\cite{implbase}
(\emph{canonical base}), denoted by $\mathcal{C}_{\context}$. It can be computed
using \emph{pseudo-intents}, i.e., $P\subseteq M$ with $P \neq P''$ and
$Q''\subsetneq P$ holds for every \mbox{pseudo-intent} $Q\subsetneq P$. The
recursive nature of this definition is by design. Despite beeing the minimal
base of of the implications from $\Th(\context)$, the set of all
{pseudo-intents} can still be exponential in the size of the
context~\cite{kuznetsov2004icd}.

\subsection{Cores in Formal Contexts}
\label{sec:coresfca}
Our theory on \pkcores is based on bipartite cores 
by~\citeauthor{conf/apvis/AhmedBFHMM07} \cite[Section
3.1]{conf/apvis/AhmedBFHMM07}.  We
translated their approach to the realm of formal concept analysis, exploiting the natural
correspondence between bipartite graphs and formal contexts. This results in the
following definition.

\begin{definition}
  Let $\context=(G,M,I),\mathbb{S}=(H,N,J)$ be formal contexts with
  $\mathbb{S}\leq\mathbb{K}$. We call $\mathbb{S}$ a $pq$-\emph{core} of
  $\context$ for $p,q\in \mathbb{N}$, iff
  \begin{itemize}[i)]
  \item $\mathbb{S}$ is $pq$-\emph{dense}, i.e.,\\ \phantom{a} \hfill $\forall g
    \in H, \forall m \in N: \abs{\{g\}^J} \geq p\wedge\abs{\{m\}^J} \geq q$
  \item $\mathbb{S}$ is \emph{maximal}, i.e., \\ \phantom{a} \hfill
    $\nexists O\leq\context: \mathbb{O}\ pq\text{-dense}\wedge \Scon\neq \mathbb{O}\wedge \Scon\leq \mathbb{O}$
  \end{itemize}

\end{definition}

We denote this by $\mathbb{S}\leq_{p,q}\context$. In particular we call
contexts $\mathbb{S}$ with $\mathbb{S}\leq_{0,q}\context$ an
\emph{attribute-core} and $\mathbb{S}\leq_{p,0}\context$ an \emph{object-core}.

\begin{proposition}[Uniqueness]
  Let $\context$ be a formal context and $p,q\in \mathbb{N}$. Then there
  exists only one $S\leq \context$ with $S\pkleq{p,q} \context$.
\end{proposition}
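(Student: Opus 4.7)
The plan is to prove uniqueness by showing that the $pq$-dense induced sub-contexts of $\context$ are closed under (pairwise, hence arbitrary) union. Once this is established, the unique $pq$-core must be the union of all $pq$-dense sub-contexts, which is well-defined since the empty induced sub-context is trivially $pq$-dense.

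First I would set up the right notion of union for induced sub-contexts: for $\Scon_1=(H_1,N_1,J_1)$ and $\Scon_2=(H_2,N_2,J_2)$ with $\Scon_i\leq\context$, let $\Scon_1\cup\Scon_2 \coloneqq (H_1\cup H_2,\,N_1\cup N_2,\,I\cap((H_1\cup H_2)\times(N_1\cup N_2)))$. Then $\Scon_1\cup\Scon_2$ is again an induced sub-context of $\context$, and a simple monotonicity observation holds: if $\Scon\leq\Tcon\leq\context$ and $g$ is an object of $\Scon$, then $\{g\}^{J_{\Scon}}\subseteq \{g\}^{J_{\Tcon}}$, because enlarging the attribute set can only add derivations; analogously for attributes.

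The key step is: if $\Scon_1$ and $\Scon_2$ are both $pq$-dense, then so is $\Scon_1\cup\Scon_2$. Indeed, any object $g$ of $\Scon_1\cup\Scon_2$ lies in $H_1$ or $H_2$; say $g\in H_1$. Then by the monotonicity observation applied to $\Scon_1\leq\Scon_1\cup\Scon_2$, we have $\abs{\{g\}^{J_1}}\leq \abs{\{g\}^{J_{\Scon_1\cup\Scon_2}}}$, and the left side is already $\geq p$ by $pq$-density of $\Scon_1$. The argument for attributes is dual. Hence $\Scon_1\cup\Scon_2$ is $pq$-dense.

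Now suppose $\Scon_1\pkleq{p,q}\context$ and $\Scon_2\pkleq{p,q}\context$. Both are $pq$-dense, so $\Scon_1\cup\Scon_2$ is $pq$-dense and contains both as induced sub-contexts. Maximality of $\Scon_1$ forces $\Scon_1=\Scon_1\cup\Scon_2$, and the same for $\Scon_2$; thus $\Scon_1=\Scon_2$. The main technical point to be careful with is the monotonicity lemma for derivations under enlargement of the sub-context, since the derivation operator depends on which attribute/object set one is working in; once that is cleanly stated, the rest is short.
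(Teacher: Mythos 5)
Your proof is correct and follows essentially the same route as the paper: form the union of two putative $pq$-cores, show it is still $pq$-dense via monotonicity of derivations under enlargement, and invoke maximality. Your version is in fact slightly cleaner, since you take the \emph{induced} incidence $I\cap((H_1\cup H_2)\times(N_1\cup N_2))$ on the union (so that the maximality clause, which quantifies over induced sub-contexts, applies directly), whereas the paper uses $J\cup L$, which need not be the induced incidence.
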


\begin{proof}
  Let $\mathbb{S}=(H,N,J)$ and $\mathbb{T}=(U,V,L)$ be two different
  formal contexts with $\mathbb{S}\leq\context$ and
  $\mathbb{T}\leq\context$. Furthermore, for some $p,q\in\mathbb{N}$
  we have that $\mathbb{S}{\leq_{p,q}}\context$ and
  $T{\leq_{p,q}}\context$. Construct the context
  $\mathbb{D}=(H\cup U,N\cup V, J\cup L)$. Then it follows
  that
  \[ \forall g \in H\cup U, \forall m \in N\cup V: \abs{\{g\}^{J\cup L}} \geq p
    \wedge \ \abs{\{m\}^{J\cup L}} \geq q \] Hence, $\mathbb{D}$ is $pq$-dense
  and a $\mathbb{S},\mathbb{T}$ are proper sub-contexts of $\mathbb{D}$. This
  contradicts the maximality of $\mathbb{S}$ and $\mathbb{T}$.
\end{proof}

Based on this result we refer to $\mathbb{S}\,{\leq_{pq}}\,\mathbb{K}$ as
\emph{the} \pkcorenw. We depict the formal context of an example \pkcore
in~\cref{fig:example-core}. On the left is the formal context of the prominent
``Living beings and Water'' example from~\cite{fca-book} and on the
right is the $4,3$-core of it. We observe that the objects ``Bean'' and
``Leech'' as well as the attributes ``suckles its offspring'' and ``two seed
leafs'' are removed. Even though $|\{\text{Bean}\}’|\geq 4$ it is removed by a
cascading effect triggered by the removal of the attribute ``two seed leaves''.

\begin{figure}[t]
  \begin{minipage}[t]{0.54\linewidth}
    \vspace*{-\dimexpr\baselineskip\relax}
    \adjustbox{trim=6ex 0ex,scale=0.65}{\begin{cxt}
    \cxtName{}
    \att{1}
    \att{2}
    \att{3}
    \att{4}
    \att{5}
    \att{6}
    \att{7}
    \att{\textcolor{red}8}
    \att{\textcolor{red}9}
    \obj{...XXX..X}{\textcolor{red}1}
    \obj{xxx..x...}{2}
    \obj{xx.x.x.X.}{3}
    \obj{xxxx.x...}{4}
    \obj{X.X..X...}{\textcolor{red}5}
    \obj{...xxxx..}{6}
    \obj{..xxxxx..}{7}
    \obj{..x.xxx..}{8}
  \end{cxt}}
\end{minipage}\hskip0pt\hfill\hskip0pt
\begin{minipage}[t]{0.46\linewidth}
  \vspace*{-\dimexpr\baselineskip\relax}
  \adjustbox{trim=5ex 0ex,scale=0.65}{
    \begin{cxt}
    \cxtName{}
    \att{1}
    \att{2}
    \att{3}
    \att{4}
    \att{5}
    \att{6}
    \att{7}
    \obj{xxx..x.}{2}
    \obj{xx.x.x.}{3}
    \obj{xxxx.x.}{4}
    \obj{...xxxx}{6}
    \obj{..xxxxx}{7}
    \obj{..x.xxx}{8}
  \end{cxt}}
\end{minipage}
\begin{tikzpicture}[baseline, anchor=north]
  \node (c) at (0,-0.1) [inner sep=0pt, align=left,text width=\linewidth,execute at begin node=\setlength{\baselineskip}{8.5pt}] {\scriptsize
  \textbf{Attributes}: 1.~Can move around, 2.~has limbs, 3.~lives in water,
  4.~lives on land, 5.~needs chlorophyll, 6.~needs water, 7.~one seed
  leafs, 8.~suckles its offspring, 9.~two seed leafs; \textbf{Objects}: 1.~Bean, 2.~Bream, 3.~Dog, 4.~Frog, 5.~Leech, 6.~Maize, 7.~Reed, 8.~Spike-weed};
  \end{tikzpicture}
  \label{fig:example-core}
  \caption{\emph{Living Creatures and Water} context (l) and it's $4,3$-core (r)}
\end{figure}

\section{Concept Lattices of $pq$-Cores }
\label{sec:lattice}
In this section we investigate the relation of the concept lattice for a \pkcore
to the concept lattice of the originating formal context. We investigate in
particular the influence of the parameters $p$ and $q$. The computation of the
\pkcore for some $p,q$ can be understood as a sequential removal of objects and
attributes in arbitrary order. Based on this observation we analyze the impact
of object and attribute removal on concept lattices. To this end, we first take
a look at a proposition about structural embeddings. For some
$X\subseteq\mathfrak{B}(\context)$ we use the notation $\bigvee X$ for the
\emph{supremum} of $X$ in $\BV(\context)$ and $\bigwedge X$ for the
\emph{infimum} of $X$ in $\BV(\context)$,  cf.~\cite{fca-book}.

\begin{proposition}[{\cite[Proposition~31 on page~98]{fca-book}}]\ \\
  \label{prop:embed}
  Let $\context = (G,M,I)$, $\mathbb{T} = (U,M,L)$, and $\mathbb{S} = (G,N,J)$,
  be formal contexts with $\mathbb{T}\leq\context$ and
  $\mathbb{S}\leq\context$. Then the mapping $\BV(\mathbb{T})\rightarrow
  \BV(\context)$ where $(A,B)$ is mapped to the formal concept $(B^{I},B)$ is a
  $\bigvee$-preserving order-embedding of $\BV(\mathbb{T})$ in
  $\BV(\context)$. Dually, the map $\BV( \mathbb{S})\rightarrow
  \BV(\context)$ with $(A,B)\mapsto (A,A^{I})$ is a {$\bigwedge$-preserving} order
  embedding of $\BV(\mathbb{S})$ in $\BV(\context)$.
\end{proposition}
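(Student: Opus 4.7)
The plan is to verify three things for the first map $\phi\colon\BV(\mathbb{T})\to\BV(\context)$, $(A,B)\mapsto (B^{I},B)$: well-definedness of the image, the order-embedding property, and preservation of suprema. The dual statement for $\mathbb{S}$ then follows by swapping the roles of objects and attributes throughout.

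For well-definedness, the critical observation is that $\mathbb{T}=(U,M,L)$ shares its attribute set with $\context$ and $L=I\cap(U\times M)$. Consequently $A^{L}=A^{I}$ for every $A\subseteq U$, so every intent $B$ of $\mathbb{T}$ arises as $A^{I}$ for some $A\subseteq U\subseteq G$. This already makes $B$ an intent of $\context$, so $(B^{I},B)\in\mathfrak{B}(\context)$ and the map lands where it is claimed to.

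For the order-embedding part I would use that in any concept lattice, $(A_{1},B_{1})\leq (A_{2},B_{2})$ iff $B_{2}\subseteq B_{1}$. Anti-monotonicity of $\cdot^{I}$ on $\mathcal{P}(M)$ then yields $B_{1}^{I}\supseteq B_{2}^{I}$ as soon as $B_{2}\subseteq B_{1}$, so the images compare correctly in $\BV(\context)$; conversely, reading off the intents of the images recovers the original comparison. This simultaneously establishes injectivity and order-reflection.

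For $\bigvee$-preservation, the fundamental theorem supplies that the supremum of a family $\{(A_{i},B_{i})\}_{i}$ in $\BV(\mathbb{T})$ has intent $\bigcap_{i}B_{i}$, so its $\phi$-image is $\bigl((\bigcap_{i}B_{i})^{I},\,\bigcap_{i}B_{i}\bigr)$. The supremum of the images $\{(B_{i}^{I},B_{i})\}_{i}$ in $\BV(\context)$ likewise has intent $\bigcap_{i}B_{i}$, hence is the very same concept. The only real subtlety is careful book-keeping of the two derivation operators across the contexts; once one sees that intents of $\mathbb{T}$ coincide with intents of $\context$ (and, dually, extents of $\mathbb{S}$ with extents of $\context$), the remaining work reduces to the fundamental theorem and standard monotonicity, so this is what I would expect to be the only delicate step.
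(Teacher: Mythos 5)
Your proof is correct. The paper gives no proof of this proposition at all --- it is quoted from Ganter and Wille's textbook (Proposition~31) --- and your argument (since $\mathbb{T}$ and $\context$ share the attribute set $M$, every intent of $\mathbb{T}$ is an intent of $\context$; concepts are determined by their intents; and suprema have intent $\bigcap_i B_i$ in both lattices by the fundamental theorem) is exactly the standard one, with the dual half following by exchanging objects and attributes.
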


For $\context$ we observe that~\cref{prop:embed} is not applicable since a
\pkcore has potentially a modified set of objects and attributes with respect to
$\context$. Nonetheless, we can still exploit~\cref{prop:embed} in the following
way. First, there exists an order-embedding of $\BV(H,M,I\cap H\times M)$ into
$\BV (\context)$. Secondly, there is an order-embedding from $\BV(\Scon)$ into
$\BV(H,M,I\cap H\times M)$. Hence, it is easy to see that the composition of the
two maps results in an order-embedding from $\BV(\Scon)$ into $\BV(\context)$.
However, suprema and infima are not necessarily preserved. Nonetheless, the
existence of the order-embedding does in particular imply that a significant
amount of structural (conceptual) information is preserved by the \pkcore with
respect to the lattice $\BV(\context)$ and $p,q\in\N$.

In the following we want to investigate more thoroughly how concepts change when
objects/attributes are deleted or added. We start with recalling a fact
from~\cite[\pno~99]{fca-book} which is related to~\cite[Proposition~30 on
\pno~98]{fca-book}. It describes how attribute closures alter when attributes
are removed.

\begin{proposition}[Deleting Attributes]
\label{prop:1}
Let $\context = (G,M,I)$ and $\mathbb{S}=(G,N,J)$ be formal contexts with
$\mathbb{S}\leq \context$. Then,
\begin{enumerate}[i)]
\item $\forall D \in\Int(\context): (D\cap N)\in\Int(\mathbb{S})$
\item $\forall D \in \Int(\mathbb{S})\exists B\in\Int(\context): D^{J} = (B\cap N)^{I}$.
\end{enumerate}

\end{proposition}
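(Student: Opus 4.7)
The plan is to exploit the fact that, since $\Scon=(G,N,J)$ is an induced sub-context of $\context=(G,M,I)$ with the same object set, the incidence $J$ is just the restriction of $I$ to $G\times N$. This lets me relate the derivation operators of the two contexts by a single key identity, from which both parts follow almost immediately.

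First I would establish the following auxiliary facts, which I expect will carry the entire argument:
\begin{enumerate}[a)]
\item For every $A\subseteq G$, $A^{J}=A^{I}\cap N$. This is a direct unfolding of the definitions: $m\in A^{J}$ iff $m\in N$ and $(g,m)\in I$ for all $g\in A$.
\item For every $Y\subseteq N$, $Y^{J}=Y^{I}$. Again immediate, since for $m\in Y\subseteq N$ the relations $J$ and $I$ agree on $G\times\{m\}$.
\end{enumerate}

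For part (i), I would take $D\in\Int(\context)$ and write $D=A^{I}$ for some $A\subseteq G$ (such $A$ exists, e.g.\ $A=D^{I}$, since intents are precisely the images of the closure composition). Applying fact (a) gives $A^{J}=A^{I}\cap N=D\cap N$, which directly exhibits $D\cap N$ as an intent of $\Scon$.

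For part (ii), I would take $D\in\Int(\Scon)$ and write $D=A^{J}$ for some $A\subseteq G$. The natural candidate is $B\coloneqq A^{I}\in\Int(\context)$. Using fact (a), $B\cap N=A^{I}\cap N=A^{J}=D$. Then using fact (b) on $D\subseteq N$, $(B\cap N)^{I}=D^{I}=D^{J}$, which is exactly the claimed equality. The only conceptual care needed is to keep track of whether a derivation is being taken in $\context$ or in $\Scon$; this is where fact (b) does the work, and I expect it to be the one step a careless reader might overlook. No nontrivial obstacle is anticipated beyond this bookkeeping.
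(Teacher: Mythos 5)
Your proof is correct and takes essentially the same route as the paper: your witness $B=A^{I}$, instantiated with $A=D^{J}$, is exactly the paper's $D^{JI}$, and your identity (a) ($A^{J}=A^{I}\cap N$) is precisely the content of the textbook facts the paper cites for part (i) and for transferring $D^{J}$ into $\Ext(\context)$. The only difference is that you make explicit the final step $D^{I}=D^{J}$ for $D\subseteq N$ (your fact (b)), which the paper leaves implicit when it concludes that $D^{JI}$ is ``as required''; this makes your version self-contained but not substantively different.
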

\begin{proof}
  \begin{inparaenum}[i)]
  \item We refer the reader to~\cite[\pno\,99]{fca-book}
  \item We know that $D^{J}\in\Ext(\Scon)$ and via~\cite[Proposition~30]{fca-book} it
    follows that $D^J\in \Ext(\context)$ . Therefore, we know that $D^{JI}\in
    \Int(\context)$. With $\Scon \leq \context$ we can infer that $D=D^{JJ}\subseteq
    D^{JI}$ and $D^{JI}\cap N = D$. Hence, with $D^{JI}$ there exists a $B$ as
    required in ii).
  \end{inparaenum}
\end{proof}

\begin{corollary}[Adding Attributes]
\label{prop:2}
Let $\context = (G,M,I)$ and $\mathbb{S}=(G,N,J)$ be formal contexts where 
$\mathbb{S}\leq\context$ is true. Then,
\begin{enumerate}[i)]
\item $\forall D\in\Int(\Scon)\exists B \in \Int(\context): B\cap N = D$.
\item $\forall D\in\Int(\Scon)\setminus \Int(\context)\exists
  B\in\Int(\context)\setminus\Int(\Scon): B\cap N=D $
\end{enumerate}

\end{corollary}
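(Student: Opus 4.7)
My plan is to obtain both parts of the corollary by reusing the explicit witness already constructed in the proof of \cref{prop:1}~ii). Given $D\in\Int(\Scon)$, that proof exhibits $D^{JI}$ as an intent of $\context$ with the property $D^{JI}\cap N = D$. So the natural candidate is $B\coloneqq D^{JI}$. I would simply take this $B$ and record that part~i) is immediate: $B\in\Int(\context)$ holds by construction, and the equation $B\cap N = D$ is exactly what the proof of \cref{prop:1}~ii) establishes (using $D = D^{JJ}$, i.e., that $D$ is already closed in $\Scon$).

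For part~ii), I would keep the same $B$ and verify that $B\notin\Int(\Scon)$ whenever $D\notin\Int(\context)$. This is a short contradiction argument: any element of $\Int(\Scon)$ is by definition a subset of $N$, so if $B\in\Int(\Scon)$ then $B\subseteq N$, whence $B = B\cap N = D$. But then $D = B \in\Int(\context)$, contradicting the standing hypothesis $D\in\Int(\Scon)\setminus\Int(\context)$. Hence $B\in\Int(\context)\setminus\Int(\Scon)$, which is precisely what is required.

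I do not expect a real obstacle here; the corollary is essentially a dual reading of \cref{prop:1}. The only point I would take care to make explicit is the identification $D^{J} = D^{I}$ (valid because $D\subseteq N$ and $J = I\cap(G\times N)$), which lets one interpret $B = D^{JI}$ equivalently as the $\context$-closure of $D$ and makes the relation $D\subseteq B$ and $B\cap N = D$ transparent. Beyond that, both parts reduce to one-line observations, so the bulk of the work has already been done by \cref{prop:1}.
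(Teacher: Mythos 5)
Your proposal is correct and follows essentially the same route as the paper: part~i) reuses the witness $B=D^{JI}$ from the proof of \cref{prop:1}~ii), and part~ii) derives the same contradiction (if $B\in\Int(\Scon)$ then $B\subseteq N$, so $B=B\cap N=D$ would force $D\in\Int(\context)$). The only cosmetic difference is that you argue directly about the fixed witness $B$ rather than assuming no suitable $B$ exists in $\Int(\context)\setminus\Int(\Scon)$; the underlying reasoning is identical.
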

\begin{proof}
  \begin{inparaenum}[i)]
    \item Use construction of $B$ from~\cref{prop:1}, part ii).
    \item Assume there is no $B$ in $\Int(\context)\setminus\Int(\Scon)$ with
      $B\cap N=D$. From i) we can then draw that $B\in
      \Int(\Scon)\cap\Int(\context)$. With $B=B\cap N=D$ this yields the
      contradiction $D\in\Int(\context)$.
  \end{inparaenum}
\end{proof}

Based on the insights so far we may draw a lemma that will drive our to be
proposed \pkcorenw-algorithm. It will employ an identity: For
$\context=(G,M,I)$ and $\Scon=(G,N,J)$ is $\Int(\context) = (\Int(\Scon) \cup
\Int(\context)\setminus\Int(\Scon)) \setminus
(\Int(\Scon)\setminus\Int(\context))$.

\begin{lemma}
  \label{lem:algo}
  Let $\context=(G,M,I)$ and $\Scon=(G,N,J)$ be formal contexts with
  $\Scon\leq\context$. Given $\Int(\Scon)$ we can compute
  $\Int(\context)$ in output polynomial time in size of
  $\Int(\context)\setminus\Int(\Scon)\cup \Int(\Scon)\setminus\Int(\context)$.
\end{lemma}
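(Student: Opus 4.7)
The plan is to turn the displayed set-theoretic identity into an algorithm by (i) detecting which elements of $\Int(\Scon)$ survive in $\Int(\context)$ and (ii) enumerating the genuinely new intents in $\Int(\context)\setminus\Int(\Scon)$, doing both with closure computations in $\context$ that are each polynomial in $|G|$ and $|M|$.

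First I would make a single sweep through the given $\Int(\Scon)$. For every $D\in\Int(\Scon)$ we have $D\subseteq N$, hence $D^{J}=D^{I}$, so $D\in\Int(\context)$ if and only if $D=D^{JI}$. One closure computation per $D$ therefore decides membership of $D$ in $\Int(\Scon)\cap\Int(\context)$ versus $\Int(\Scon)\setminus\Int(\context)$. Moreover, whenever the test fails, the element $D^{JI}$ is by construction an intent of $\context$ which, by \cref{prop:2}\,i) and \cref{prop:1}, satisfies $D^{JI}\cap N=D\neq D^{JI}$ and is therefore a concrete witness in $\Int(\context)\setminus\Int(\Scon)$. This step already realises the ``removal'' side of the identity and produces at least one witness in the ``addition'' side per removed intent.

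The main obstacle is that the map $D\mapsto D^{JI}$ is not in general surjective onto $\Int(\context)\setminus\Int(\Scon)$: there can be new intents $B$ whose trace $B\cap N$ lies in $\Int(\Scon)\cap\Int(\context)$. To catch these, I would process the freshly available attributes $M\setminus N$ incrementally, inserting them one at a time and updating the current intent collection using the standard FCA attribute-addition step, where for each $m$ one forms the closures $(B\cup\{m\})^{II}$ of the already known intents $B$ and inserts those that are genuinely new. Using \cref{prop:1} and \cref{prop:2} it is easy to verify that every $B\in\Int(\context)$ appears in this way at the stage in which the last of its attributes in $M\setminus N$ is inserted, so nothing is missed.

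For the complexity bookkeeping I would argue that each closure computation is polynomial in the context size and that the number of such computations is at most $O((|\Int(\Scon)|+|\Int(\context)\setminus\Int(\Scon)|)\cdot|M\setminus N|)$: every existing intent triggers at most a constant number of closures per inserted attribute and every newly produced intent is charged to the output. Combined with the initial sweep that detects $\Int(\Scon)\setminus\Int(\context)$, this gives total time polynomial in $|\Int(\Scon)|$ together with the symmetric difference $\Int(\context)\setminus\Int(\Scon)\cup\Int(\Scon)\setminus\Int(\context)$, as claimed. Correctness is then the immediate specialisation of the identity $\Int(\context)=(\Int(\Scon)\cup(\Int(\context)\setminus\Int(\Scon)))\setminus(\Int(\Scon)\setminus\Int(\context))$ stated before the lemma.
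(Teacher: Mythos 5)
Your argument is correct, but it takes a genuinely different algorithmic route from the paper. The paper's proof runs \texttt{next\_closure} on $\context$ under a lectic order in which every attribute of $M\setminus N$ precedes every attribute of $N$, starting from the seed $N$: the closures lectically above $N$ are exactly the $\context$-intents meeting $M\setminus N$, and these are precisely $\Int(\context)\setminus\Int(\Scon)$, so the new intents are enumerated with polynomial delay without ever touching $\Int(\Scon)\cap\Int(\context)$. The discarded intents $\Int(\Scon)\setminus\Int(\context)$ are then recovered as the traces $X\cap N$ of those new intents $X$ for which $(X\cap N)^{II}\neq X\cap N$ --- the reverse direction of your first sweep. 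You replace both steps: a direct closedness test $D=D^{II}$ on every given $D\in\Int(\Scon)$, and an incremental attribute-insertion phase for the new intents. This is sound, and your complexity bound holds because the number of intents is non-decreasing under attribute addition, so every intermediate collection is bounded by $|\Int(\Scon)|+|\Int(\context)\setminus\Int(\Scon)|$, i.e.\ by input plus output. What the paper's route buys is polynomial delay per element of $\Int(\context)\setminus\Int(\Scon)$ with no $|M\setminus N|\cdot|\Int(\Scon)\cap\Int(\context)|$ overhead on the unchanged part; what yours buys is that it dispenses with the lectic-order argument and obtains $\Int(\Scon)\setminus\Int(\context)$ by the most direct possible test. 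One point you should make explicit in the insertion phase: either take all closures in the full context $\context$ (so every inserted set is already a $\context$-intent and the only non-intents ever present are the seeds flagged by your first sweep), or, if you close in the intermediate contexts, also replace each old intent $D$ by its new closure when it ceases to be closed --- merely ``inserting what is new'' without that replacement would leave stale non-intents in the collection that your final identity does not remove.
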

\begin{proof}
  We use the well-known \texttt{next\_closure} algorithm
  from~\cite{ganter2010basic}. We choose some order $\leq_{M}$ on $M$ such that
  $\forall m\in M\setminus N\forall n\in N: m\leq_{M} n$. We start the algorithm
  with $N$, which is the largest closure in $\Int(\Scon)$. The set
  $\Int(\context)\setminus\Int(\Scon)$ can be computed output polynomial by
  \texttt{next\_closure}, since for every element $X$ of the output we have
  $X\cap (M\setminus N)\neq\emptyset$. From~\cref{prop:2} we know that for every
  $Y\in\Int(\Scon)\setminus\Int(\context)$ there is a
  $Z\in\Int(\context)\setminus\Int(\Scon)$ with $Y\cap N=Z$. We construct the
  set $\Int(\Scon)\setminus\Int(\context)$ by $\{X\cap N\mid
  X\in\Int(\context)\setminus\Int(\Scon)\wedge(X\cap N)^{II}\neq (X\cap
  N)\}$. From~\cref{prop:2} we find that this construction yields at least all
  closures in $\Int(\Scon)\setminus\Int(\context)$ and from~\cref{prop:1} we
  know that the construction is limited to elements of $\Int(\Scon)$, again
  limited by the predicate in the construction to only those from
  $\Int(\Scon)\setminus\Int(\context)$. Altogether, we have output polynomial
  cost for $\Int(\context)\setminus\Int(\Scon)$ and one additional polynomial
  time check for every element of this set.
\end{proof}
Another identity that is useful in the experimental section is
$(\Int(\context)\cup (\Int(\Scon)\setminus\Int(\context))\setminus
(\Int(\context)\setminus\Int(\Scon))=\Int(\Scon)$. Using this the proof
from~\cref{lem:algo} can also be used to show that computing $\Int(\Scon)$
given $\Int(\context)$ is possible in output polynomial time in the
size of $\Int(\context)\setminus\Int(\Scon)\cup
\Int(\Scon)\setminus\Int(\context)$.
Since we want to explain the relation of \pkcores lattices to the concept
lattice of the original lattice we may state how we derive also the extents.

\begin{corollary}
  \label{cor:extents}
  Let $\context=(G,M,I)$ and $\Scon=(G,N,J)$ be formal contexts with
  $\Scon\leq\context$. Given $\mathfrak{B}(\Scon)$ we can compute
  $\mathfrak{B}(\context)$ in output polynomial time in size of
  $\Int(\context)\setminus\Int(\Scon)\cup \Int(\Scon)\setminus\Int(\context)$.
\end{corollary}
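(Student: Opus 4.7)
The plan is to reduce the claim directly to \cref{lem:algo}. First I would extract $\Int(\Scon)$ from the given $\mathfrak{B}(\Scon)$ in linear time, and then invoke the algorithm of \cref{lem:algo} to produce $\Int(\context)$ in output polynomial time in $|\Int(\context)\setminus\Int(\Scon)\cup\Int(\Scon)\setminus\Int(\context)|$. This already delivers the intent part of $\mathfrak{B}(\context)$.

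For the extents I would use a structural observation: any $B\in\Int(\Scon)\cap\Int(\context)$ satisfies $B\subseteq N$, and since $J=I\cap(G\times N)$ the derivations coincide, i.e., $B^{I}=B^{J}$. Hence the extent of every common intent is already available from $\mathfrak{B}(\Scon)$ and can be transferred verbatim. For each new intent $B\in\Int(\context)\setminus\Int(\Scon)$ I would compute $B^{I}$ directly via the derivation operator in time $O(|G|\cdot|M|)$, and the extents attached to intents in $\Int(\Scon)\setminus\Int(\context)$ are simply discarded since they do not belong to $\mathfrak{B}(\context)$.

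Summing up, the intent step stays within the bound of \cref{lem:algo}, while the extent step contributes one polynomial-time derivation per element of $\Int(\context)\setminus\Int(\Scon)$. The overall running time is therefore polynomial in the input $\mathfrak{B}(\Scon)$ plus $|\Int(\context)\setminus\Int(\Scon)\cup\Int(\Scon)\setminus\Int(\context)|$, which meets the claimed output polynomial bound.

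The main obstacle is conceptual rather than algorithmic: at first glance one might fear having to recompute extents for all of $\Int(\context)\cap\Int(\Scon)$, which could exceed the allowed budget. The identity $B^{I}=B^{J}$ on common intents removes this issue, and once it is in place the reduction to \cref{lem:algo} is immediate.
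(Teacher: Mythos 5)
Your proposal is correct and follows essentially the same route as the paper: invoke \cref{lem:algo} for the intents, observe that extents of intents in $\Int(\Scon)\cap\Int(\context)$ carry over unchanged (which the paper asserts and you justify via $B^{I}=B^{J}$ for $B\subseteq N$), and compute $X^{I}$ only for the new intents $X\in\Int(\context)\setminus\Int(\Scon)$. Your explicit justification of the ``extents remain unchanged'' step is a small but welcome addition to what the paper leaves implicit.
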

The only task one has to do for this is to additionally compute $X^{I}$ for
$X\in \Int(\context)\setminus \Int(\Scon)$, since all the extents from intents
ind $\Int(\Scon)\cap\Int(\context)$ remain unchanged. We also see that all
results in this section about attribute operations can be translated to object
operations through duality.

After the theoretical consideration on the impact of adding/removing
attributes to formal contexts we now want to look into the dependence of
\pkcores to removing objects.

\begin{proposition}[Object Cores]
\label{prop:3}
For two formal contexts $\context$ and $\Scon$ with
$\Scon\pkleq{p,0}\context$ and $\mathcal{F}\coloneqq\{B \in \Int(\context) \mid
\abs{B}\geq p\}$ the equality

\[\{\bigcap\mathcal{X}\mid \mathcal{X}\subseteq\mathcal{F}\} = \Int(\Scon)\quad\text{holds.} \]
\end{proposition}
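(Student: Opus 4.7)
The plan is to first pin down the structure of the $p,0$-core explicitly, then deduce the set equality from two closure-operator identities. Because the density condition on attributes is vacuous when $q=0$, no attribute is ever forced out; and because $J\subseteq I$, any surviving object $g$ must already satisfy $|\{g\}^{I}|\geq p$ in $\context$. Combined with the uniqueness statement proved above, the $p,0$-core must therefore coincide with $\Scon=(H,M,I\cap(H\times M))$ where $H=\{g\in G:|\{g\}^{I}|\geq p\}$. Note as a consequence that $\{g\}^{J}=\{g\}^{I}$ for every $g\in H$.

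The crux of the argument is the inclusion $\mathcal{F}\subseteq\Int(\Scon)$. Given $B\in\Int(\context)$ with $|B|\geq p$, I would observe that every $g\in B^{I}$ satisfies $B\subseteq\{g\}^{I}$, hence $|\{g\}^{I}|\geq|B|\geq p$, so $g\in H$. Therefore $B^{I}\subseteq H$, which gives $B^{J}=B^{I}\cap H=B^{I}$, and using $\{g\}^{J}=\{g\}^{I}$ for $g\in H$, one computes $B^{JJ}=\bigcap_{g\in B^{I}}\{g\}^{J}=\bigcap_{g\in B^{I}}\{g\}^{I}=B^{II}=B$. Hence $B\in\Int(\Scon)$.

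From here the two inclusions of the stated equality are immediate. For $\supseteq$: since $\mathcal{F}\subseteq\Int(\Scon)$ and intents of any formal context form a closure system (so they are closed under arbitrary intersections), every $\bigcap\mathcal{X}$ with $\mathcal{X}\subseteq\mathcal{F}$ lies in $\Int(\Scon)$. For $\subseteq$: any $D\in\Int(\Scon)$ equals $D^{JJ}=\bigcap_{g\in D^{J}}\{g\}^{J}=\bigcap_{g\in D^{J}}\{g\}^{I}$; each object intent $\{g\}^{I}$ with $g\in H$ is an intent of $\context$ (object intents are always closed, since $A^{III}=A^{I}$) of cardinality at least $p$, hence lies in $\mathcal{F}$. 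The degenerate case $D^{J}=\emptyset$, forcing $D=M$, is handled by the convention that the empty intersection in $\mathcal{P}(M)$ is $M$.

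I do not expect a genuine obstacle. The only subtle point is the explicit identification of the $p,0$-core as $(H,M,I\cap(H\times M))$; once this is in hand, the equality reduces to the straightforward observation that intents of $\context$ of size at least $p$ survive as intents of $\Scon$ because their derivations are unchanged, together with the fact that every intent of $\Scon$ is an intersection of its object intents.
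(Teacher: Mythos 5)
Your proof is correct, and for one of the two inclusions it takes a genuinely different route from the paper. For the inclusion $\{\bigcap\mathcal{X}\mid\mathcal{X}\subseteq\mathcal{F}\}\subseteq\Int(\Scon)$ the two arguments coincide in substance: both rest on the fact that $\abs{B}\geq p$ implies $B^{II}=B^{JJ}$, combined with closure of $\Int(\Scon)$ under intersection. The paper simply asserts that fact from the definition of the core, whereas you derive it from your explicit identification of the $p,0$-core as $(H,M,I\cap(H\times M))$ with $H=\{g\in G:\abs{\{g\}^{I}}\geq p\}$ --- an identification the paper never states but tacitly relies on, and which is worth having on record. The real divergence is in the reverse inclusion $\Int(\Scon)\subseteq\{\bigcap\mathcal{X}\mid\mathcal{X}\subseteq\mathcal{F}\}$: the paper argues by contradiction, reducing (somewhat informally) to a meet-irreducible intent $B$ with $\abs{B}<p$ and then identifying it as an object intent $\{g\}^{J}$ to contradict the density condition $\abs{\{g\}^{J}}\geq p$. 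You instead write any $D\in\Int(\Scon)$ directly as $D=D^{JJ}=\bigcap_{g\in D^{J}}\{g\}^{I}$ and observe that each factor is an intent of $\context$ of cardinality at least $p$, hence lies in $\mathcal{F}$, with the empty-intersection case $D=M$ covered by taking $\mathcal{X}=\emptyset$. This is more elementary and sidesteps the paper's ``without loss of generality meet-irreducible'' reduction, which is the least tidy step of the published proof. The only blemish is cosmetic: your $\subseteq$/$\supseteq$ labels are swapped relative to the inclusions you actually establish, but both inclusions are proved, so the equality stands.
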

\begin{proof}
  \begin{inparaitem}
  \item[$\subseteq$:] Since $\Scon$ is $p,0$-core of $\context$ we have that
    $\forall B\subseteq M: \abs{B}\geq p \Rightarrow B^{II}=B^{JJ}$. Hence,
    $\forall X\in\mathcal{F}: X^{II}=X^{JJ}\in\Int(\Scon)$. Since $\Int(\Scon)$
    is closed under intersection~\cite{fca-book} we find that for all $\mathcal{X}\subseteq\mathcal{F}:
    \bigcap\mathcal{X}\in \Int(\Scon)$.
  \item[$\supseteq$:] Assume $\exists B\in \Int(\Scon)$ with
    $B\neq\bigcap\mathcal{X}$ for all $\mathcal{X}\subseteq\mathcal{F}$. By
    definition of $\mathcal{F}$ we know that $|B|<p$. Without loss of generality
    $B$ is \emph{meet-irreducible}, i.e., there is not $\mathcal{Y}\subseteq
    \Int(\context):\bigcap \mathcal{Y}=B$. In the case where $B$ is not
    meet-irreducible there is a representation of $B$ by
    $\mathcal{Y}\subseteq\Int(\context)$, in which every element is a proper
    super set of $B$. In this set we either find a meet irreducible set or we go
    to the next representation until we have sets of cardinality $p$.  Thus,
    there exists an object $g$ of the formal context $\Scon=(H,N,J)$ with
    ${g}^{J}=B$. This contradicts $|\{g\}^{J}|\geq p$.
  \end{inparaitem}
\end{proof}

This proof employs the notion of meet irreducible intents. Computing those is
computationally challenging, in particular for larger concept lattices. We
presume that one does often consider multiple \pkcores for investigation. In
this case one may resort to the following idea: given a set of \pkcoresnw,
find a common super core, i.e., some $\Tcon$ of their original context, such that
all considered cores are $\leq \Tcon$. Compute the cover relation of the
conceptual order in $\underline{\mathfrak{B}}(\Tcon)$. Using this relation one can infer
the meet irreducible elements of $\mathfrak{B}(\Tcon)$, which are also the
possible meet irreducible elements in the concept lattices for all sub-core
contexts (or induced sub-contexts).

Based on the above we can now draw some conclusions about computing the \pkcore
concept lattice for some formal context $\context$ and $p,q\in\mathbb{N}$. But
first we may note the following.

\begin{remarks}
  \label{corestepvise}
  For $\Scon=(H,N,J)$ and $\context=(G,M,I)$ with $\Scon\pkleq{p,q}\context$ it
  holds that $\Scon\pkleq{p,0} (G,N,I\cap G\times N)$.
\end{remarks}

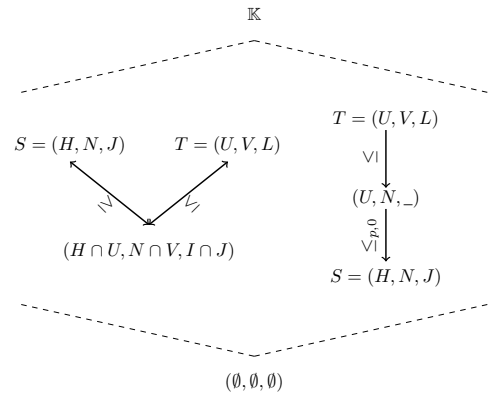
\begin{figure}[b]
  \centering
  \scalebox{.7}{
  \begin{tikzpicture}
    \title{Lattice of all induced
      sub-contexts of $\context$}

    \draw[<->,thick] (-3.5,-2.3) -- (-2,-3.5);
    \draw[<->,thick] (-2,-3.5) -- (-0.5,-2.3);
    \node[draw opacity = 0,draw=white, text=black,rotate=-45] at (-2.8,-3.1) {$\geq$};
    \node[draw opacity = 0,draw=white, text=black,rotate=45] at (-1.2,-3.1) {$\leq$};

    \draw[dashed] (0,-6) -- (-4.5,-5);
    \draw[dashed] (0,-6) -- (4.5,-5);

    \draw[dashed] (0,0) -- (-4.5,-1);
    \draw[dashed] (0,0) -- (4.5,-1);

    \node[draw opacity = 0,draw=white, text=black] at (0,0.5) {$\context$};
    \node[draw opacity = 0,draw=white, text=black] at (-2,-4) {$(H\cap
      U,N\cap V,I\cap J)$};
    \node[draw opacity = 0,draw=white, text=black] at (-0.5,-2) {$T = (U,V,L)$};

    \node[draw opacity = 0,draw=white, text=black] at (-3.5,-2) {$S = (H,N,J)$};

    \node[draw opacity = 0,draw=white, text=black] at (0,-6.5) {$(\emptyset,\emptyset,\emptyset)$};

    \draw[->,thick] (2.5,-3.2) -- (2.5,-4.2);
    \draw[->,thick] (2.5,-1.7) -- (2.5,-2.8);

    \node[draw opacity = 0,draw=white, text=black] at (2.5,-4.5) {$S = (H,N,J)$};
    \node[draw opacity = 0,draw=white, text=black] at (2.5,-3) {$(U,N,\_)$};
    \node[draw opacity = 0,draw=white, text=black] at (2.5,-1.5) {$T = (U,V,L)$};    

    \node[draw opacity = 0,draw=white, text=black, rotate=90] at (2.2,-2.2) {$\leq$};
        
    \node[draw opacity = 0,draw=white, text=black, rotate=90] at (2.2,-3.7) {$\pkleq{p,0}$};

  \end{tikzpicture}}
  \caption{Principle approach for analyzing multiple \pkcores from a formal
    context $\context$ (left) and their order/lattice relation (right).}
  \label{transform_int}
\end{figure}

Taking all the results above together we find a useful correspondence between
the concept lattices of a context, its induced sub-contexts and, in particular,
its cores. Starting with a \pkcore $\Scon\pkleq{p,q} \context$ we are able to
indicate stable concepts (with respect to $\context$ or a more general core) in
the concept lattice of the \pkcorenw. Notably, using~\cref{lem:algo} we are able
to compute efficiently the difference of the concept lattices of
$\Scon\pkleq{p,q}\Tcon\pkleq{\hat p,\hat q}\context$ with $p\leq \hat p$ and
$q\leq \hat q$.

In the last part of this section we may further generalize the findings
above. For some formal context $\context$ consider an arbitrary set of induced
sub-contexts $\mathcal{K}$. We may compare their concept lattices efficiently
using~\cref{lem:algo}, following their super/sub-context relation, as
depicted~\cref{transform_int}.

Given a formal $\context$, the set $\mathcal{K}\coloneqq\{\Scon\leq \context\}$
constitutes a complete lattice. One can see this using the map
$\mathcal{P}(G)\times\mathcal{P}(M)\to \mathcal{K}$, $(H,N)\mapsto (H,N,I\cap
(H\times N))$, which is an order isomorphism from the lattice
$\mathcal{P}(G)\times\mathcal{P}(M)$ to $\mathcal{K}$. Hence, for two arbitrary
induced sub-context $\Scon = (H,N,J)$ and $\Tcon=(U,V,L)$ of $\context=(G,M,I)$
on may compute $\mathfrak{B}(\bigvee\{\Scon,\Tcon\})$ and
$\mathfrak{B}(\bigwedge\{\Scon,\Tcon\})$ in order to infer $\mathfrak{B}(\Tcon)$
efficiently using $\mathfrak{B}(\Scon)$, or vice versa.  The set of all \pkcores
is contained in $\mathcal{K}$, however, it does not constitute a lattice. To see
this a counter example is presented in~\cref{counter}.

\subsection{A Small Case Study}
\label{sec:smallcase}
We apply our notion for \pkcores on a particularly small example, the
\emph{Forum Romanum} (FR) context (\cite[Figure 1.16]{fca-book}), in order to
study the applicability to real world data sets. The data set consists of
monuments on the Forum Romanum (objects) and their star ratings by different
travel guides (attributes). In~\cref{fig:lattice-reduce} we depicted the concept
lattice for FR and indicated by the red dashed lines the $2,4$-core of
FR.

At least all concept between the red lines remain after the core
reduction. In detail, the parameter $p=2$ results in removing all objects that
have a derivation of size two or less, as indicated by the upper horizontal
dashed line. We understand (acc. to~\cref{prop:3}) that in this process all
\mbox{\emph{join-irreducible}} concepts $(A,B)$, i.e., $\nexists\mathcal{F}\subseteq
\mathfrak{B}(FR):\bigvee \mathcal{F}{=}(A,B)$, above the $p=2$ threshold are
removed. For example, the concepts above the horizontal red dashed line having
the short hand notation labels \emph{B*, GB*}, and \emph{P*}, are
join-irreducible and therefore removed. Their attributes are then contained by
those lower concepts that are in cover relation to the removed concepts.  In
contrast, the concept with short hand label \emph{M*} is join-reducible and is
therefor closed after the removal of objects. The removal of attributes results
in dually observations, i.e., \emph{meet-irreducible} concepts are removed.

\section{Implications of $pq$-Cores}
In this section we study the relation of implicational theories of \pkcores with
respect to the original context.  We start with investigating the impact of
object set manipulations. We consider in the following two formal contexts
$\context = (G,M,I)$ and $\Scon = (H,M,J)$ with $\Scon\leq\context$. By removing
objects we possibly remove unique counter examples $g\in G$ to some invalid
implication $A\to B$, i.e., $B’\not\subseteq A’$ but $B’\setminus\{g\}\subseteq
A’$. Hence, new valid implications can emerge in $\Scon$. On the other hand,
valid implications in $\context$ cannot be disproved by removing objects. Thus,
$\Th(\context) \subseteq \Th(\Scon)$. Cores with $p\in\N$ and $q=0$ are of
particular interest to us due to~\cref{corestepvise}. For those, i.e.,
$\Scon\pkleq{p,0}\context$, we find that all valid implications $A \to B$ in
$\Th(\Scon)\setminus\Th(\context)$ have $\abs{A} < p$, since in this core we
only remove objects $g\in G$ with $|\{g\}’|<p$, which are only able to refute
implications with premise $|A|<p$. For the special case of
$\Scon\pkleq{0,q}\context$ we can deduce that
$\Th(\Scon)\subseteq\Th(\context)$.

There are two essential notions when discussing implications in data sets,
\emph{confidence} and \emph{support}. The support of an implication $A\to
B\in\Th(\context)$ is defined by $\supp_{\context}(A\to B)\coloneqq
\nicefrac{|(A\cup B)’|}{|G|}$ and the confidence by $\conf_{\context}(A\to
B)\coloneqq \nicefrac{|(A\cup B)’|}{|A’|}$. We may note that only implications
with confidence one are considered valid in FCA and therefore included in
$\Th(\context)$. Nonetheless there is a strong correspondence to the realm of association rules.

\begin{proposition}[Core Implications]\label{Core-propositions}
  Let $\context ,\Scon$ be formal contexts, with $\Scon\pkleq{p,q} \context$
  where $\context=(G,M,I)$ and $\Scon=(H,N,J)$. For all $A\to B\in\Th(\Scon)$ is
  \begin{enumerate}[i)]
  \item $\nicefrac{\abs{H}}{\abs{G}}\cdot\sup_{\Scon}(A\rightarrow B) \leq
    \sup_{\context}(A\rightarrow B)$
  \item $\sup_{\context}(A\rightarrow B) \leq
    \nicefrac{\abs{H}}{\abs{G}}\cdot \sup_{\Scon}(A\rightarrow B) 
    +\nicefrac{|G\setminus H|}{|G|}$
  \item $\conf_{\context}(A\to B)\geq \nicefrac{|(A\cup B)^{J}|}{|A^{J}|+
      |G\setminus H|}$
  \item $\abs{A}\geq p \implies \conf_{\context}(A\to B) = 1$

\item $ \abs{A\cup B}\geq p \implies \sup_{\context}(A\,{\rightarrow}\,B) =
  \nicefrac{\abs{H}}{\abs{G}} \cdot \sup_{\Scon}(A\,{\rightarrow}\,B)$.
  \end{enumerate}
\end{proposition}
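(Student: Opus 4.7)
The plan is to reduce every clause to two elementary comparisons between the incidences $I$ and $J$ on subsets of $N$. Since $J=I\cap(H\times N)$ and $N\subseteq M$, for any $g\in H$ one has $\{g\}^{J}=\{g\}^{I}\cap N$, so for any $C\subseteq N$ the equivalence $C\subseteq\{g\}^{J}\iff C\subseteq\{g\}^{I}$ holds. This yields the identity $C^{J}=C^{I}\cap H$ and the two-sided cardinality bound
\[
|C^{J}|\ \leq\ |C^{I}|\ \leq\ |C^{J}|+|G\setminus H|.
\]
I would establish this once at the top and then feed it into the remainder of the proof.

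Parts i), ii), and iii) are then pure arithmetic. For i), apply $|(A\cup B)^{I}|\geq|(A\cup B)^{J}|$ in the numerator of $\sup_{\context}(A\to B)=|(A\cup B)^{I}|/|G|$ and multiply and divide by $|H|$. For ii), use the opposite inequality $|(A\cup B)^{I}|\leq|(A\cup B)^{J}|+|G\setminus H|$ and a single rearrangement. For iii), apply the lower bound to the numerator and the upper bound $|A^{I}|\leq|A^{J}|+|G\setminus H|$ to the denominator of $\conf_{\context}(A\to B)=|(A\cup B)^{I}|/|A^{I}|$.

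The substantive step, and the main obstacle, is a structural lemma that I will call the rich-set lemma: for every $C\subseteq N$ with $|C|\geq p$, one has $C^{I}\subseteq H$. I would prove it by invoking \cref{corestepvise}, which gives $\Scon\pkleq{p,0}(G,N,I\cap G\times N)$: in a $p,0$-core no attributes are removed, so no cascading can occur and $H$ must coincide with $\{g\in G:|\{g\}^{I}\cap N|\geq p\}$. Any $g\in C^{I}$ then satisfies $C\subseteq\{g\}^{I}\cap N$ with $|C|\geq p$, forcing $g\in H$. With the lemma in hand, part iv) follows by applying it with $C=A$ and chaining with validity in $\Scon$: $A^{I}\subseteq A^{J}\subseteq B^{J}\subseteq B^{I}$, hence $(A\cup B)^{I}=A^{I}$ and $\conf_{\context}(A\to B)=1$. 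For v), applying the lemma with $C=A\cup B$ gives $(A\cup B)^{I}\subseteq H$, so $(A\cup B)^{I}=(A\cup B)^{J}$ and the inequality in i) collapses to the claimed equality. The only non-routine ingredient is the rich-set lemma; once established, iv) and v) follow immediately from it together with the identities of the first paragraph.
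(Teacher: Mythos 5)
Your proof is correct and follows essentially the same route as the paper: parts i)--iii) by the elementary cardinality comparison between $C^{J}$ and $C^{I}$, and parts iv)--v) by reducing to the fact that $C^{I}=C^{J}$ whenever $C\subseteq N$ and $\abs{C}\geq p$. The one genuine difference is that the paper asserts this last fact ``by definition of $pq$-cores,'' whereas your rich-set lemma actually proves it, via \cref{corestepvise} and the observation that the $p,0$-core of $(G,N,I\cap G\times N)$ has object set exactly $\{g\in G:\abs{\{g\}^{I}\cap N\}}\geq p\}$ --- this closes a small gap in the paper's argument rather than opening one.
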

\begin{proof}
  \begin{inparaenum}[i)]
  \item Since $J\subseteq I$ we can infer that $|A^{J}|\leq |A^{I}|$, we can infer
    $\nicefrac{|H|}{|G|}\cdot \supp_{\Scon}(A)\leq \supp_{\context}(A)$.
  \item With the same argument as in i) we can infer $|A^{I}|\leq |A^{J}|+
    |G\setminus H|$, from which one can deduce the statement. 
  \item Using i) and ii), which would be the best case / worst case for
    supports, since all additional objects are counter examples for $A\to B$, we
    find $\conf_{\context}(A\to B)=\nicefrac{|(A\cup B)^{I}}{|A^{I}|}$ is equal
    to $\nicefrac{H}{G}\cdot\supp_{\Scon}(A\to B)$ divided by
    $\nicefrac{H}{G}\cdot\supp_{\Scon}(A)+\nicefrac{|G\setminus H|}{|G|}$. This
    can be simplified to $\conf_{\context}(A\to B)=\nicefrac{|(A\cup
      B)^{J}|}{|A^{J}|+|G\setminus H|}$. 
  \item For $|A|\geq p$ we have $A^{I}=A^{J}$ by definition of \pkcores and also
    $(A\cup B)^{J}=(A\cup B)^{I}$. Together with the definition of confidence we
    obtain the statement.
  \item From $\abs{A\cup B}\geq p$ we find that $|(A\cup B)^{I}|=|(A\cup
    B)^{J}|$, which results in a equality in i).
  \end{inparaenum}
\end{proof}

Note that i), ii), and iii) are also valid for sub-contexts. We now study minimal
representations of implicational theories, i.e., the canonical base of
$\Th(\context)$ for some formal context $\context$.

\label{sec:canbase}
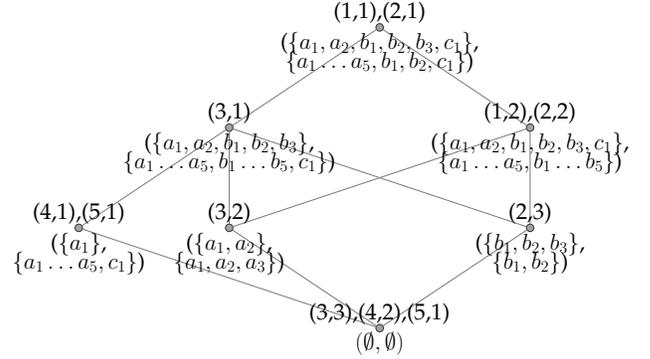
\begin{figure}[t]
  \centering
{\footnotesize\begin{cxt}
    \cxtName{}
    \att{$a_1$}
    \att{$a_2$}
    \att{$a_3$}
    \att{$a_4$}
    \att{$a_5$}
    \att{$b_1$}
    \att{$b_2$}
    \att{$b_3$}
    \att{$b_4$}
    \att{$b_5$}
    \att{$c_1$}
    \obj{xxxxx.....x}{ $a_1$}
    \obj{xxx........}{ $a_2$}
    \obj{.....xxx...}{ $b_1$}
    \obj{.....xx.x..}{$b_2$}
    \obj{.....xx..x.}{$b_3$}
    \obj{...xx......}{$c_1$}
  \end{cxt}}\ \vspace{0.2cm}

\scalebox{0.5}{\colorlet{mivertexcolor}{black!80}
\colorlet{jivertexcolor}{black!80}
\colorlet{vertexcolor}{black!80}
\colorlet{bordercolor}{black!80}
\colorlet{linecolor}{gray}
\tikzset{vertexbase/.style={semithick, shape=circle, inner sep=2pt, outer sep=0pt, draw=bordercolor},%
  vertex/.style={vertexbase, fill=vertexcolor!45},%
  mivertex/.style={vertexbase, fill=mivertexcolor!45},%
  jivertex/.style={vertexbase, fill=jivertexcolor!45},%
  divertex/.style={vertexbase, top color=mivertexcolor!45, bottom color=jivertexcolor!45},%
  conn/.style={-, thick, color=linecolor}%
}
\tikzstyle{line2} = [label distance=0.5cm]
\begin{tikzpicture}[scale=8, font=\LARGE]
  \begin{scope} 
    \begin{scope} 
      \foreach \nodename/\nodetype/\xpos/\ypos in {%
        0/vertex/0.5/0.0,
        1/divertex/0.0/0.333333333333336,
        2/divertex/1.0/0.333333333333336,
        3/divertex/0.0/0.66666666666667,
        4/divertex/1.0/0.66666666666667,
        5/vertex/0.5/1.0,
        6/divertex/-0.5/0.333333333333336
      } \node[\nodetype] (\nodename) at (\xpos, \ypos) {};
    \end{scope}
    \begin{scope} 
      \path (1) edge[conn] (4);
      \path (6) edge[conn] (3);
      \path (6) edge[conn] (0);
      \path (2) edge[conn] (3);
      \path (2) edge[conn] (4);   
      \path (3) edge[conn] (5);
      \path (1) edge[conn] (3);
      \path (0) edge[conn] (1);
      \path (0) edge[conn] (2);
      \path (4) edge[conn] (5);
    \end{scope}
    \begin{scope} 
      \foreach \nodename/\labelpos/\labelopts/\labelcontent in {%
        0/below//{$(\emptyset,\emptyset)$},
        0/above//{(3,3),(4,2),(5,1)},
        1/below//{($\{a_1, a_2\}$,},
        1/below/line2/{$\{a_1, a_2, a_3\}$)},
        1/above//{(3,2)},
        2/below//{($\{b_1, b_2, b_3\}$,},
        2/below/line2/{$\{b_1, b_2\}$)},
        2/above//{(2,3)},
        3/below//{($\{a_1, a_2, b_1, b_2, b_3\}$,},
        3/below/line2/{$\{a_1 \dots a_5, b_1 \dots b_5, c_1\}$)},
        3/above//{(3,1)},
        4/below//{($\{a_1, a_2, b_1, b_2, b_3, c_1\}$,},
        4/below/line2/{$\{a_1 \dots a_5, b_1 \dots b_5\}$)},
        4/above//{(1,2),(2,2)},
        5/below//{($\{a_1, a_2, b_1, b_2, b_3, c_1\}$,},
        5/below/line2/{$\{a_1 \dots a_5, b_1, b_2, c_1\}$)},
        5/above//{(1,1),(2,1)},
        6/below//{($\{a_1\}$,},
        6/below/line2/{$\{a_1 \dots a_5, c_1\}$)},
        6/above//{(4,1),(5,1)}
      } \coordinate[label={[\labelopts]\labelpos:{\labelcontent}}](c) at (\nodename);
    \end{scope}
  \end{scope}
\end{tikzpicture}}
\caption{An example context (upper) and the order relation of all \pkcores
  (lower). Each node in the order diagram represents a \pkcore with its $p,q$
  values written above the node.}
  \label{counter}
\end{figure}

\begin{figure}
  \centering
  \input{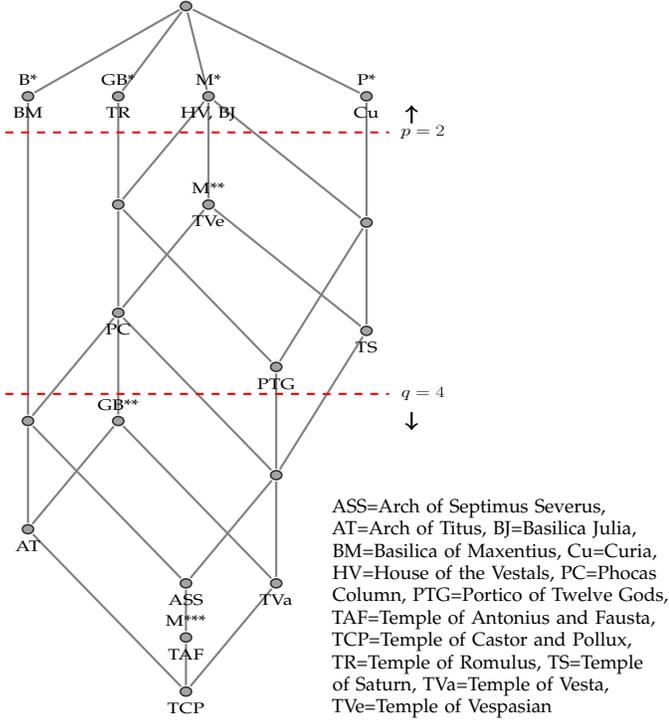}
  \caption[]{The $2,4$-core of the concept lattice is indicated by the red
    lines. All objects present in the short hand notation above the $p=2$
    barrier are removed as well as all attributes below $p=4$ line are removed.}
  \label{fig:lattice-reduce}
\end{figure}

The next logical step would be to partially derive the canonical base for some
formal context $\context$ using a \pkcorenw. However, this endeavor is so far
not understood. In the simple case of formal contexts on disjoints attribute
sets, i.e., computing the canonical base of $(G,N_1\dot\cup N_2,J_2\dot\cup
J_2)$ using the bases of $(G,N_1,J_1),(G,N_2,I_2)$, we refer the reader
to~\cite{mergebase}. Nonetheless, we may yield some results for the \emph{canonical
  direct bases}~\cite{ganter2001implikationen, fca-book} (CDB), i.e., a complete, sound and iteration free basis. Such a
basis for a formal context $\context=(G,M,I)$ is constituted by set of
\emph{proper premises}, i.e., sets $A\subseteq M$ where $A^{II}\setminus (A\cup
\bigcup_{B\subsetneq A} B^{II})\neq \emptyset$ does hold, cf.~\cite{directBase}.

\begin{proposition}[Induced Contexts CDB]
  Let $\context = (G,M,I)$, $\Scon = (G,N,J)$ be two formal contexts
  with $\Scon \leq \context$ and let $\mathcal{L}_p(\Scon),\mathcal{L}_p(\context)$ be
  their canonical direct bases, then
\[\mathcal{L}_p(\Scon)\subseteq \mathcal{L}_p(\context).\]
\end{proposition}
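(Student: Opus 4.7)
The plan is to reduce the inclusion to the single set-theoretic claim that every proper premise of $\Scon$ is a proper premise of $\context$. The whole argument will rest on one identity. For any $A \subseteq N$, the object derivations in the two contexts coincide, $A^{J} = A^{I}$, since $J = I \cap (G \times N)$ and the attributes in $A$ all lie in $N$. Applying the second derivation operator --- which in $\Scon$ restricts to attributes from $N$, while in $\context$ ranges over all of $M$ --- then yields the identity $A^{JJ} = A^{II} \cap N$.

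Next I would fix an arbitrary proper premise $A$ of $\Scon$, so $A \subseteq N$ and by definition there exists a witness $m \in A^{JJ} \setminus \bigl(A \cup \bigcup_{B \subsetneq A} B^{JJ}\bigr)$. I would then argue that the same $m$ certifies $A$ as a proper premise of $\context$. The inclusions $m \in A^{JJ} \subseteq A^{II}$ and $m \notin A$ are immediate, so the remaining task is to verify $m \notin B^{II}$ for every $B \subsetneq A$. For this I would invoke the identity applied to $B$: since $B \subseteq N$ and $m \in N$ (because $A^{JJ} \subseteq N$), we have $m \in B^{II}$ if and only if $m \in B^{JJ}$, and the latter is excluded by assumption. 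Hence $m$ witnesses $A$ as a proper premise of $\context$, so $A \in \mathcal{L}_p(\context)$.

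There is no real obstacle here. Once the identity $A^{JJ} = A^{II} \cap N$ is recorded, the verification is routine set-theoretic bookkeeping, with care paid only to which derivation operator is being applied in which context. It is worth noting that the argument uses only $\Scon \leq \context$ on the common object set $G$, not maximality or any other structural property of \pkcoresnw, so the inclusion of canonical direct bases in fact holds for all induced sub-contexts sharing the same objects; the \pkcorenw\ case is a special instance via~\cref{corestepvise} combined with the dual argument on objects.
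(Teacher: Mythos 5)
Your proof is correct and follows essentially the same route as the paper: your key identity $A^{JJ} = A^{II}\cap N$ is exactly the content of the paper's \cref{lem:prop2} (stated there as $B^{II}=B^{JJ}\cup(B^{II}\setminus N)$), and the witness-transfer argument for a proper premise is the same. Your closing observation that only $\Scon\leq\context$ on a common object set is needed matches the paper, which indeed states the proposition for arbitrary such induced sub-contexts rather than only for \pkcoresnw.
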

\begin{proof}
  Let $A\subseteq N$ be a proper premise of $\Scon$. Hence, we know that
  $A^{JJ}\setminus (A \cup \bigcup_{B\subset A} B^{JJ})
  \neq\emptyset$. Following, there is an $n\in N$ with $n\notin A$ and $n\notin
  B^{JJ}$ for all $B\subsetneq A$. With the following~\cref{lem:prop2}, we find
  that forall $B\subset A$ we have $B^{II}=B^{JJ} \cup (B^{II}\setminus
  N)$. Therefore, we find that $n\notin B^{II}$. From this we can conclude that
  $n\in A^{II}\setminus (A \cup \bigcup_{B\subset A} B^{II})$ which is therefore
  not empty.
\end{proof}
\begin{lemma}
  \label{lem:prop2}
  Let $\context = (G,M,I), \Scon = (G,N,J)$ be two formal contexts with $\Scon
  \leq \context$ and $B\subseteq N$, then $B^{II}=B^{JJ}\cup (B^{II} \setminus
  N)$.
\end{lemma}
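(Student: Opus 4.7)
The plan is to reduce the claim to a set-theoretic decomposition of $B^{II}$ with respect to $N$, namely the trivial identity $B^{II} = (B^{II} \cap N) \cup (B^{II} \setminus N)$. Once this is in hand, the lemma amounts to showing $B^{JJ} = B^{II} \cap N$, and everything follows.

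First, I would establish the equality of object derivations: for any $B \subseteq N$ we have $B^I = B^J$. This is immediate from the definition of an induced sub-context, since $J = I \cap (G \times N)$ and every $m \in B$ lies in $N$, so $(g,m) \in I$ iff $(g,m) \in J$. Thus $B^I$ and $B^J$ describe the same subset of $G$.

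Next I would compute $B^{JJ}$ in terms of $B^{II}$. By the first step, $B^J = B^I$, so
\[
B^{JJ} = \{m \in N \mid \forall g \in B^J: (g,m) \in J\} = \{m \in N \mid \forall g \in B^I: (g,m) \in I\},
\]
where the second equality again uses that for $m \in N$ the incidences $I$ and $J$ agree. The right-hand side is exactly $B^{II} \cap N$. Combining with the trivial partition $B^{II} = (B^{II} \cap N) \cup (B^{II} \setminus N)$ yields the claim.

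I do not anticipate a real obstacle here; the only subtlety is making sure the two derivation operators are carefully distinguished (as the paper already notates by the incidence superscripts $I$ and $J$) so that the step $B^I = B^J$ is not conflated with a closure statement. Once that is set out explicitly, the rest is bookkeeping.
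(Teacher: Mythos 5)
Your proof is correct and rests on the same two observations as the paper's: that $B^{J}=B^{I}$ for $B\subseteq N$ and that $I$ and $J$ agree on $G\times N$. Your packaging is slightly tidier --- you derive the identity $B^{JJ}=B^{II}\cap N$ directly and conclude via the trivial partition of $B^{II}$, whereas the paper argues the two inclusions separately (one by contradiction) --- but this is a reorganization of the same argument, not a different route.
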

\begin{proof}
  \begin{inparaitem}
  \item[$\subseteq$:] The only interesting case is $n\in B^{II}$ with $n\notin
    B$. Assume $n\notin B^{JJ}\cup B^{II}\setminus N$ which yields $n\in
    N\setminus B^{JJ}$. This demands the existence of ${g\in G:(g,n)\in
    I\wedge (g,n)\notin J}$. Since $\Scon\leq \context$ on the same object set
    $G$ this results in a contradiction.
  \item[$\supseteq$:] Since $J=I\cap (G\times N)$ we know that $B^{J}=B^{I}$
    which results in $B^{JJ}=B^{IJ}\subseteq B^{II}$.
  \end{inparaitem}
\end{proof}

\section{Experimental Study}\label{coreChar}
We collected different theoretical properties of \pkcoresnw. In this section we
want to study experimentally their applicability on real-world data sets. The
most pressing question is to identify particularly interesting cores of a given
formal context. A commonly used technique to assess the interestingness of k-cores
in networks is to investigate the number of connected components depending on
the core parameter $k$. A well-known observation is that the number of connected
components increases the greater $k$ is. Parameters that are considered
interesting are those around the steepest rate of increase in the number of
components. Also often considered are changes of some valuation function, such as
the size of the largest connected component or some network statistical
property. We will adapt the former idea and analyze the component structures.

\subsubsection*{Data Sets}
We conduct our investigation on five various sized data sets from different
domains.
\begin{inparadesc}
\item[Living beings in Water] is the well known FCA data
  set~\cite[Figure1.1]{fca-book}. It consists of living beings as objects and
  their properties as attributes.
\item[Forum Romanum] as already used in~\cref{sec:smallcase}, is also taken
  from \cite{fca-book}. It is made of places of interest as objects and their ratings in
  different tour guides as attributes.
\item[Spices] is created by the authors. The objects are dishes and the
  attributes are spices to be used for these dishes. The incidence
  relation is extracted from a spices planer~\cite{herbs}.
\item[Mushroom] is an often used classification data set provided by
  UCI~\cite{Dua:2019}. The objects are mushrooms and the non-binary attributes
  are common mushroom properties. Those were scaled using a nominal scale.
\item[The Pocket Knives] data set was self-created by the authors through
  crawling the Victorinox AG website\footnote{\url{https://www.victorinox.com}}
  in April 2019. The context contains all pocket knives as objects and their
  features as attributes.
\item[Wiki44k] was created in an experimental study~\cite{hanika2019discovering}
  on finding implications in Wikidata. It is a scaled context drawn from the
  most dense part of the Wikidata knowledge graph.
\end{inparadesc}
All presented data sets are available in the FCA software
\texttt{conexp-clj}~\cite{conf/icfca/HanikaH19} through 
GitHub.\footnote{\url{https://github.com/tomhanika/conexp-clj}} We
collected their numerical properties in~\cref{table:datasets}.

\begin{table}[t]
  \centering
  \caption{Numerical description of data sets. We included the number of
    non-empty \pkcores as well as the number of formal concepts.}
  \begin{tabular}{l|r|r|r|r|r}
    \toprule
    Name&$|G|$&$|M|$&$|\mathfrak{B}(\context)$&\# \pkcores&
                                                            density\\
    \midrule
    Water & 8&9&19&20&0.47\\
    Romanum & 14&7&19&34&0.45\\
    Spices & 56&37&421&136&0.23\\
    Knives & 159&108&1061&1072&0.11\\
    Mushroom & 8124&119&238710&80136&0.22\\
    Wiki44k & 45021&101&21923&$\approx$ 98000&0.05\\
    \bottomrule
  \end{tabular}
  \label{table:datasets}
\end{table}

\subsubsection*{Interesting \pkcores}

For all data sets we applied different combinations of parameters $p$ and $q$
and evaluated to what extent this leads to interesting \pkcores using the
steepest increase method. For this we regarded all non-empty \pkcores as
bipartite graph and counted the resulting connected components. We observed that
no data set has a \pkcore with more than one connected component. This is
surprising since constructing a formal context falling apart into multiple
connected components for some $p$ and $q$ is easy. This might indicate that
real-world data sets do not exhibiting this property. However, we acknowledge
that the number of considered data sets is comparatively low. Nonetheless, this
observation might be attributed to the following fact: in all data sets there is
a small number of objects with high support, i.e., many attributes, covering in
union all attributes and having at least pairwise one attribute in common.
These objects are contained in all \pkcoresnw. Hence, we need to adapt the idea
of components to the realm of formal contexts differently. For this we consider
the context size distribution among all \pkcoresnw. In these distribution we may
characterize sub-contexts that are removed while computing a \pkcore as
structural components. This is in contrast to the classical component analysis
for graph $k$-cores. Using those we define interesting \pkcores as those where a
further increase of $p$ or $q$ would result in a high increase in the size of
the removed structural component. In our experiments we find that there are many such
critical $p$ and $q$ for the investigated data sets. To narrow this set we
propose the following pragmatic selection criteria due to computational
limitations:
\begin{inparaenum}
\item The size of a selected core should be in the range of computational
  feasibility (with respect to the to be employed analysis procedures).
\item The parameters $p$ and $q$ of a selected core should differ in
  magnitudes, i.e., either $p \ll q$ or $p\gg q$. 
\end{inparaenum}
The interpretation of either criterion depends on the particular data analysis
application. For example, if one is more interested in keeping a larger
attribute domain then one should choose an interesting core with low $q$ and
high $p$. Analogously one might want to keep more objects.

This being said we want to propose a different approach for characterizing
interesting \pkcoresnw. In contrast to solely considering a \pkcore
$\Scon\pkleq{p,q}\context$ of some context $\context$ one might look into the
concept lattice that is created by this \pkcorenw, i.e., $\mathfrak{B}(\Scon)$.
With this approach the size of the resulting concept lattice could be a
criterion to select a \pkcorenw. The motivation for this is that we rather
select a \pkcores depending on the entailed conceptual knowledge than purely on
contextual size. This approach is computational costly since we need to compute
a large number of concept lattices. However, relying on~\cref{lem:algo},
\cref{prop:3} and~\cref{corestepvise} we may ease this cost significantly.
Analogously we propose selection criteria:
\begin{inparaenum}
\item The diagram of a selected core lattice should be human readable,
  (e.g.,
  the number of concepts should be in a human feasible range)
\item The parameters $p$ and $q$ of a selected core lattice should differ in
  magnitudes, i.e., either $p \ll q$ or $p\gg q$.
\end{inparaenum}
Again, the concrete employment of either criterion depends on the particular
data analysis application. For example, we find a lattice with more than thirty
concepts too large for human comprehension, even if drawn with sophisticated
drawing algorithms. Hence, we will consider this number for the rest of this
work as bound. On a final note in this section, we consider the special cases of
object- and attribute cores not to be interesting. They remove attributes or
objects simply by their object/attribute support and do not represent an
interesting sub-structure.

\subsubsection*{Experiment: Water}
We analyze the \emph{living beings and water} context~\cref{fig:example-core}
and present our core analysis in~\cref{water-core-lattices}. For this we
computed the size of all core concept lattices. A first observation is that
interesting cores, with respect to our just introduced notion of
interestingness, are the $4,3$-, $3,4$- and $2,4$-core. We suspect that they
include important knowledge. Increasing the core parameters more would lead to
an (almost) empty concept lattice. From this list of interesting \pkcores we
present the lattice diagram of $4,3$ in~\cref{water-core-lattices}. This lattice
contains thirteen formal concepts in contrast to the nineteen in the original
concept lattice. The $4,3$-core captures a significant portion of knowledge from
the original domain, however, only six out of eight objects and seven out of
nine attributes are in the picture. We can still infer two different groups of
beings, plants and animals. Nonetheless, the original lattice is much more
refined. For example, the original concept lattice is more distinct in the
subsets of beings that need \emph{chlorophyll} or those who \emph{can move}
around. We consider the \pkcore to be a more coarse representation of the
entailed domain knowledge.

\begin{figure}
  \label{water-core-lattices}
  \centering
  \begin{minipage}{0.44\linewidth}
    \includegraphics[scale=0.3,trim=8ex 1ex 3ex 0ex, clip]{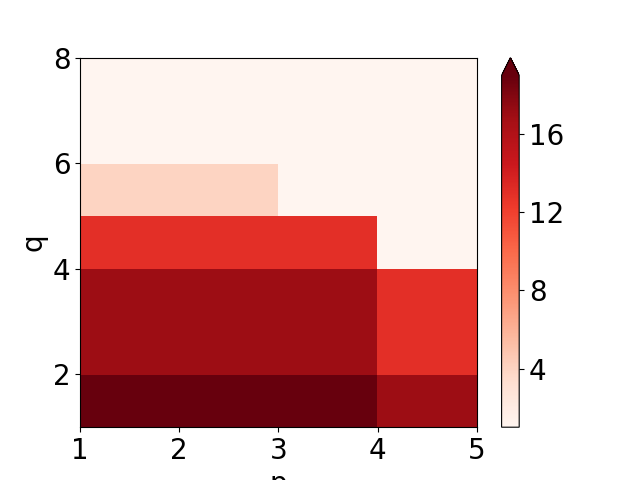}
  \end{minipage}
  \begin{minipage}{0.49\linewidth}
    \colorlet{mivertexcolor}{black!80}
\colorlet{jivertexcolor}{black!80}
\colorlet{vertexcolor}{black!80}
\colorlet{bordercolor}{black!80}
\colorlet{linecolor}{gray}
\tikzset{vertexbase/.style={semithick, shape=circle, inner sep=2pt, outer sep=0pt, draw=bordercolor},%
  vertex/.style={vertexbase, fill=vertexcolor!45},%
  mivertex/.style={vertexbase, fill=mivertexcolor!45},%
  jivertex/.style={vertexbase, fill=jivertexcolor!45},%
  divertex/.style={vertexbase, top color=mivertexcolor!45, bottom color=jivertexcolor!45},%
  conn/.style={-, thick, color=linecolor}%
}
\tikzstyle{line2} = [label distance=0.5cm]
\tikzstyle{line3} = [text width=1.3cm,yshift=0.2cm]
\tikzstyle{line4} = [yshift=0.1cm]
\begin{tikzpicture}[scale=0.12,font=\footnotesize]
  \begin{scope} 
    \begin{scope} 
      \foreach \nodename/\nodetype/\xpos/\ypos in {%
        0/vertex/0/0,
        1/jivertex/-4/6,
        2/jivertex/4/6,
        3/jivertex/-4/10,
        4/jivertex/4/10,
        5/jivertex/-7/11,
        6/jivertex/7/11,
        7/vertex/0/12,
        8/mivertex/-7/15,
        9/mivertex/7/15,
        10/mivertex/-2/18,
        11/mivertex/2/18,
        12/vertex/0/24
      } \node[\nodetype] (\nodename) at (\xpos, \ypos) {};
    \end{scope}
    \begin{scope} 
      \path (3) edge[conn] (11);
      \path (11) edge[conn] (12);
      \path (5) edge[conn] (8);
      \path (1) edge[conn] (5);
      \path (1) edge[conn] (7);
      \path (0) edge[conn] (2);
      \path (2) edge[conn] (4);
      \path (7) edge[conn] (11);
      \path (7) edge[conn] (10);
      \path (3) edge[conn] (8);
      \path (6) edge[conn] (11);
      \path (4) edge[conn] (9);
      \path (9) edge[conn] (12);
      \path (10) edge[conn] (12);
      \path (8) edge[conn] (12);
      \path (0) edge[conn] (1);
      \path (2) edge[conn] (7);
      \path (6) edge[conn] (9);
      \path (5) edge[conn] (10);
      \path (2) edge[conn] (6);
      \path (1) edge[conn] (3);
      \path (4) edge[conn] (10);
    \end{scope}
    \begin{scope} 
      \foreach \nodename/\labelpos/\labelopts/\labelcontent in {%
        1/below left/line4/{Reed},
        2/below right/line4/{Frog},
        3/below//{Spike-weed},
        4/below//{Dog},
        5/below left/line4/{Maize},
        6/below right/line4/{Bream},
        8/left/line3/{\baselineskip=1pt chlorophyll, one seed},
        9/right/line3/{\baselineskip=1pt can move, limbs},
        10/above left//{l.~land},
        11/above right//{l.~water},
        12/above//{needs water}
      } \coordinate[label={[\labelopts]\labelpos:{\labelcontent}}](c) at (\nodename);
    \end{scope}
  \end{scope}
\end{tikzpicture}
  \end{minipage}
  \caption{Figure on the left shows the concept lattice sizes for all \pkcores
    of living beings and water data set, the abscissa indicates $p$ and the
    ordinate $q$. On the right we present the $4,3$-core.}
\end{figure}

\subsubsection*{Experiment: Spices}
\begin{figure}[b]\label{herb-sizes}
  \centering
  \includegraphics[width=6cm,trim=6ex 2ex 2ex 7ex, clip]{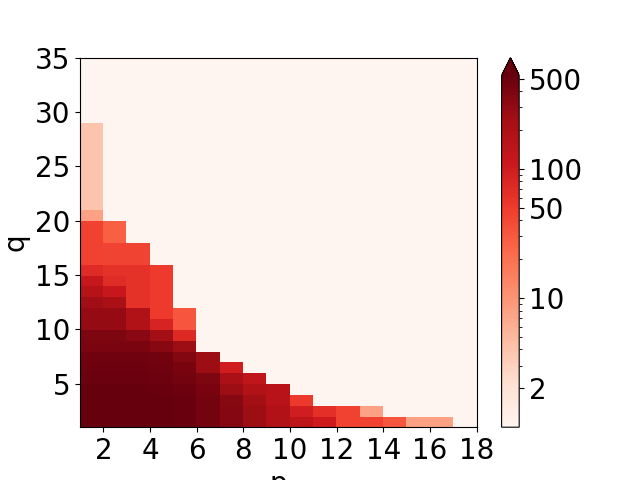}
  \caption{Concept lattice sizes for all \pkcores of spices data set, the
    abscissa indicates $p$ and the ordinate $q$.}
\end{figure}

In this experiment we analyze a spice recommendation data set. This context is
derived from a spice planer published in~\cite{herbs}. It contains 56 meals and
37 spices. Meals in the data set cover nine categories which are not part of the
formal context. There are fifteen vegetables, nine meat, three poultry, five
fish, five potato, four rice dishes, as well as three sauces, eight baked goods
and four diverse dishes. The incidence relation is which meal requires which
spices. The resulting concept lattice of the original context contains 531
formal concepts. The results of applying \pkcores to this data set with
different parameters is depicted in~\cref{herb-sizes}. There is a great number
of candidate cores to be considered, i.e., cores with a steep decrease in the
number of formal concepts while increasing parameters $p$ or $q$. However, many
of those are still very large with respect to the number of formal concepts,
e.g., $5,7$-core or the $9,4$-core. Following our pragmatic criterion for human
readability those are not interesting. In contrast is the $5,11$-core (cf. light
red color in figure). In this core lattice the parameters $p$ and $q$ are
approximately equally sized.  Hence, it only covers a dense object and attribute
selection. In particular there are twelve dishes using six spices.

As another selection we present two different cores exhibiting a large attribute
coverage and large object coverage respectively. A real-world motivation for
this is: one wants to cook lots of different dishes with possibly fewer
spices; one is focused on a diverse usage of spices with possibly fewer
meals. We exemplify this with the $2,18$-core and the $14,1$-core, as depicted
in~\cref{herbs}. The $2,18$-core includes 28 concepts with 33 out of the 56
dishes. The $14,1$-core has 32 concepts with 29 out of the 37 spices. While
having less than 10\% of the size of the original concept lattice, both concept
lattices cover a vast amount of human recognizable knowledge.  A thorough
investigation with respect to implications is done in later in this work.

\begin{figure}\label{herbs}
    \hspace{1cm}\scalebox{0.5}{\colorlet{mivertexcolor}{black!80}
\colorlet{jivertexcolor}{black!80}
\colorlet{vertexcolor}{black!80}
\colorlet{bordercolor}{black!80}
\colorlet{linecolor}{gray}
\tikzset{vertexbase/.style={semithick, shape=circle, inner sep=2pt, outer sep=0pt, draw=bordercolor},%
  vertex/.style={vertexbase, fill=vertexcolor!45},%
  mivertex/.style={vertexbase, fill=mivertexcolor!45},%
  jivertex/.style={vertexbase, fill=jivertexcolor!45},%
  divertex/.style={vertexbase, top color=mivertexcolor!45, bottom color=jivertexcolor!45},%
  conn/.style={-, thick, color=linecolor}%
}  
\tikzstyle{line2} = [label distance=0.5cm]
\tikzstyle{line3} = [label distance=1cm]
\tikzstyle{line4} = [label distance=1.5cm]
\tikzstyle{lines} = [xshift=0.5cm]
\tikzstyle{line2s} = [label distance=0.5cm,xshift=0.5cm]
\tikzstyle{line3s} = [label distance=1cm,xshift=0.5cm]
\tikzstyle{line4s} = [label distance=1.5cm,xshift=0.5cm]
\begin{tikzpicture}[scale=0.3]
  \begin{scope} 
    \begin{scope} 
      \foreach \nodename/\nodetype/\xpos/\ypos in {%
        0/vertex/0/0,
        1/jivertex/4/8,
        2/jivertex/-3/9,
        3/jivertex/-11/13,
        4/jivertex/11/13,
        5/vertex/-1/17,
        6/jivertex/-14/18,
        7/jivertex/12/20,
        8/vertex/-7/21,
        9/vertex/7/21,
        10/vertex/17/23,
        11/vertex/-16/24,
        12/vertex/0/26,
        13/vertex/10/26,
        14/vertex/-8/28,
        15/vertex/-19/29,
        16/vertex/19/29,
        17/vertex/-13/31,
        18/vertex/12/32,
        19/vertex/0/34,
        20/vertex/-6/36,
        21/vertex/6/36,
        22/mivertex/-15/37,
        23/mivertex/15/37,
        24/mivertex/-6/44,
        25/mivertex/6/44,
        26/mivertex/0/46,
        27/vertex/0/54
      } \node[\nodetype] (\nodename) at (\xpos, \ypos) {};
    \end{scope}
    \begin{scope} 
      \path (25) edge[conn] (27);
      \path (7) edge[conn] (19);
      \path (7) edge[conn] (13);
      \path (7) edge[conn] (16);
      \path (18) edge[conn] (23);
      \path (18) edge[conn] (26);
      \path (0) edge[conn] (4);
      \path (0) edge[conn] (3);
      \path (0) edge[conn] (2);
      \path (0) edge[conn] (1);
      \path (4) edge[conn] (7);
      \path (4) edge[conn] (10);
      \path (4) edge[conn] (9);
      \path (4) edge[conn] (12);
      \path (23) edge[conn] (27);
      \path (8) edge[conn] (14);
      \path (8) edge[conn] (21);
      \path (8) edge[conn] (17);
      \path (19) edge[conn] (25);
      \path (19) edge[conn] (24);
      \path (5) edge[conn] (18);
      \path (5) edge[conn] (17);
      \path (20) edge[conn] (24);
      \path (20) edge[conn] (26);
      \path (15) edge[conn] (24);
      \path (15) edge[conn] (22);
      \path (14) edge[conn] (25);
      \path (14) edge[conn] (22);
      \path (3) edge[conn] (8);
      \path (3) edge[conn] (11);
      \path (3) edge[conn] (12);
      \path (3) edge[conn] (6);
      \path (21) edge[conn] (25);
      \path (21) edge[conn] (26);
      \path (11) edge[conn] (20);
      \path (11) edge[conn] (15);
      \path (11) edge[conn] (17);
      \path (24) edge[conn] (27);
      \path (2) edge[conn] (5);
      \path (2) edge[conn] (11);
      \path (2) edge[conn] (9);
      \path (10) edge[conn] (18);
      \path (10) edge[conn] (21);
      \path (10) edge[conn] (16);
      \path (9) edge[conn] (18);
      \path (9) edge[conn] (20);
      \path (9) edge[conn] (13);
      \path (13) edge[conn] (23);
      \path (13) edge[conn] (24);
      \path (1) edge[conn] (8);
      \path (1) edge[conn] (5);
      \path (1) edge[conn] (10);
      \path (12) edge[conn] (19);
      \path (12) edge[conn] (20);
      \path (12) edge[conn] (21);
      \path (26) edge[conn] (27);
      \path (17) edge[conn] (26);
      \path (17) edge[conn] (22);
      \path (22) edge[conn] (27);
      \path (16) edge[conn] (25);
      \path (16) edge[conn] (23);
      \path (6) edge[conn] (19);
      \path (6) edge[conn] (15);
      \path (6) edge[conn] (14);
    \end{scope}
    \begin{scope} 
      \foreach \nodename/\labelpos/\labelopts/\labelcontent in {%
        0/below//{Fried Fish},
        1/below//{Pork Meat, Veal},
        1/below/line2/{Duck, Grilled Fish},
        1/below/line3/{Steamed Fish, Bright Sauce},
        2/below//{Beef, Lamb},
        3/below//{Mushrooms, Pottage, Roast Potato},
        3/below/line2/{Stew, Stove, Potato},
        3/below/line3/{Dark Sauce},
        4/below//{Dip with Herbs},
        4/below/line2/{Hash, Asian Rijsttafel},
        6/below//{Potato Casserole},
        7/below//{Cauliflower},
        10/below/lines/{Risotto, Chicken},
        10/below/line2s/{Vegetable Dauphinoise},
        10/below/line3s/{Baked Fish},
        11/below//{Pasta, Pizza},
        13/below//{Curry Rice},
        14/below//{Omelet},
        16/below//{Carrot},
        19/below//{Beans},
        20/below//{Goulash},
        21/below//{Tomato, Spinach},
        22/above//{Thyme},
        23/above//{Curry},
        24/above//{Cayenne Pepper},
        25/above//{Pepper White},
        26/above//{Garlic}
      } \coordinate[label={[\labelopts]\labelpos:{\labelcontent}}](c) at (\nodename);
    \end{scope}
  \end{scope}
\end{tikzpicture}

    \scalebox{0.5}{\colorlet{mivertexcolor}{black!80}
\colorlet{jivertexcolor}{black!80}
\colorlet{vertexcolor}{black!80}
\colorlet{bordercolor}{black!80}
\colorlet{linecolor}{gray}
\tikzset{vertexbase/.style={semithick, shape=circle, inner sep=2pt, outer sep=0pt, draw=bordercolor},%
  vertex/.style={vertexbase, fill=vertexcolor!45},%
  mivertex/.style={vertexbase, fill=mivertexcolor!45},%
  jivertex/.style={vertexbase, fill=jivertexcolor!45},%
  divertex/.style={vertexbase, top color=mivertexcolor!45, bottom color=jivertexcolor!45},%
  conn/.style={-, thick, color=linecolor}%
}
\tikzstyle{line2} = [label distance=0.5cm]
\tikzstyle{line3} = [label distance=1.1cm]
\tikzstyle{line4} = [label distance=1.5cm]
\tikzstyle{closer} = [label distance=-0.2cm]
\tikzstyle{lines} = [xshift=1.2cm]
\tikzstyle{line2s} = [label distance=0.5cm,xshift=0.5cm]
\tikzstyle{line3s} = [label distance=1cm,xshift=0.5cm]
\tikzstyle{line4s} = [label distance=1.5cm,xshift=0.5cm]
\begin{tikzpicture}[scale=0.3]
  \begin{scope} 
    \begin{scope} 
      \foreach \nodename/\nodetype/\xpos/\ypos in {%
        0/vertex/0/0,
        1/jivertex/0/8,
        2/jivertex/-6/10,
        3/jivertex/6/10,
        4/jivertex/-15/17,
        5/jivertex/15/17,
        6/vertex/-6/18,
        7/vertex/6/18,
        8/vertex/0/20,
        9/vertex/-15/25,
        10/vertex/15/25,
        11/vertex/-21/27,
        12/vertex/-9/27,
        13/vertex/9/27,
        14/vertex/21/27,
        15/vertex/0/28,
        16/vertex/0/34,
        17/vertex/-21/35,
        18/vertex/-9/35,
        19/vertex/9/35,
        20/vertex/21/35,
        21/vertex/-15/37,
        22/vertex/15/37,
        23/vertex/0/42,
        24/vertex/-6/44,
        25/vertex/6/44,
        26/mivertex/-15/45,
        27/mivertex/15/45,
        28/mivertex/-6/52,
        29/mivertex/6/52,
        30/mivertex/0/54,
        31/vertex/0/62
      } \node[\nodetype] (\nodename) at (\xpos, \ypos) {};
    \end{scope}
    \begin{scope} 
      \path (8) edge[conn] (22);
      \path (19) edge[conn] (28);
      \path (0) edge[conn] (2);
      \path (12) edge[conn] (18);
      \path (10) edge[conn] (20);
      \path (0) edge[conn] (3);
      \path (1) edge[conn] (7);
      \path (3) edge[conn] (14);
      \path (7) edge[conn] (15);
      \path (22) edge[conn] (30);
      \path (26) edge[conn] (31);
      \path (6) edge[conn] (17);
      \path (3) edge[conn] (8);
      \path (19) edge[conn] (27);
      \path (4) edge[conn] (11);
      \path (4) edge[conn] (9);
      \path (24) edge[conn] (30);
      \path (2) edge[conn] (8);
      \path (16) edge[conn] (25);
      \path (18) edge[conn] (29);
      \path (12) edge[conn] (25);
      \path (6) edge[conn] (15);
      \path (9) edge[conn] (17);
      \path (16) edge[conn] (23);
      \path (5) edge[conn] (10);
      \path (10) edge[conn] (23);
      \path (0) edge[conn] (1);
      \path (22) edge[conn] (27);
      \path (4) edge[conn] (16);
      \path (5) edge[conn] (13);
      \path (24) edge[conn] (28);
      \path (25) edge[conn] (30);
      \path (30) edge[conn] (31);
      \path (14) edge[conn] (25);
      \path (11) edge[conn] (24);
      \path (16) edge[conn] (24);
      \path (23) edge[conn] (28);
      \path (20) edge[conn] (27);
      \path (8) edge[conn] (15);
      \path (0) edge[conn] (4);
      \path (0) edge[conn] (5);
      \path (12) edge[conn] (21);
      \path (4) edge[conn] (12);
      \path (23) edge[conn] (29);
      \path (10) edge[conn] (19);
      \path (3) edge[conn] (7);
      \path (14) edge[conn] (22);
      \path (27) edge[conn] (31);
      \path (7) edge[conn] (18);
      \path (21) edge[conn] (30);
      \path (15) edge[conn] (27);
      \path (13) edge[conn] (22);
      \path (5) edge[conn] (16);
      \path (6) edge[conn] (19);
      \path (15) edge[conn] (26);
      \path (13) edge[conn] (24);
      \path (29) edge[conn] (31);
      \path (2) edge[conn] (6);
      \path (1) edge[conn] (10);
      \path (28) edge[conn] (31);
      \path (25) edge[conn] (29);
      \path (5) edge[conn] (14);
      \path (13) edge[conn] (19);
      \path (2) edge[conn] (11);
      \path (2) edge[conn] (13);
      \path (7) edge[conn] (20);
      \path (18) edge[conn] (26);
      \path (14) edge[conn] (20);
      \path (8) edge[conn] (21);
      \path (20) edge[conn] (29);
      \path (9) edge[conn] (18);
      \path (17) edge[conn] (26);
      \path (1) edge[conn] (6);
      \path (11) edge[conn] (17);
      \path (21) edge[conn] (26);
      \path (9) edge[conn] (23);
      \path (11) edge[conn] (21);
      \path (1) edge[conn] (9);
      \path (17) edge[conn] (28);
      \path (3) edge[conn] (12);
    \end{scope}
    \begin{scope} 
      \foreach \nodename/\labelpos/\labelopts/\labelcontent in {%
        1/below//{Lamb meat},
        1/above/line3/{Meat(grouped)},
        1/above//{Ginger},
        1/above/line2/{Cinnamon, Mugwort},
        2/below//{Stew},
        3/below/lines/{Dark Sauce},
        3/above/lines/{Sauce(Grouped)},
        4/below//{Pottage},
        4/above//{Saffron, Marjoram},
        5/below//{Dip with Herbs},
        5/above//{Various(Grouped)},
        6/above//{Savory},
        8/above//{Juniper Berries},
        10/above//{Curry},
        11/above//{Vegetable(Grouped)},
        13/above//{Basil, Cilantro},
        18/above//{Allspice},
        19/above//{Tarragon},
        21/above//{Bay Leaves, Nutmeg},
        26/above//{Thyme},
        27/above//{Pepper Black},
        28/above//{Oregano},
        29/above//{Caraway},
        30/above//{Paprika, Paprika sweet},
        30/above/line2/{Pepper white},
        31/above//{Garlic, Cayenne pepper}
      } \coordinate[label={[\labelopts]\labelpos:{\labelcontent}}](c) at (\nodename);
    \end{scope}
  \end{scope}
\end{tikzpicture}} 
    \caption{The concept lattice diagrams of the $2,18$-core (top) and the
      $14,1$-core (bottom) of the spice data set.}
\end{figure}
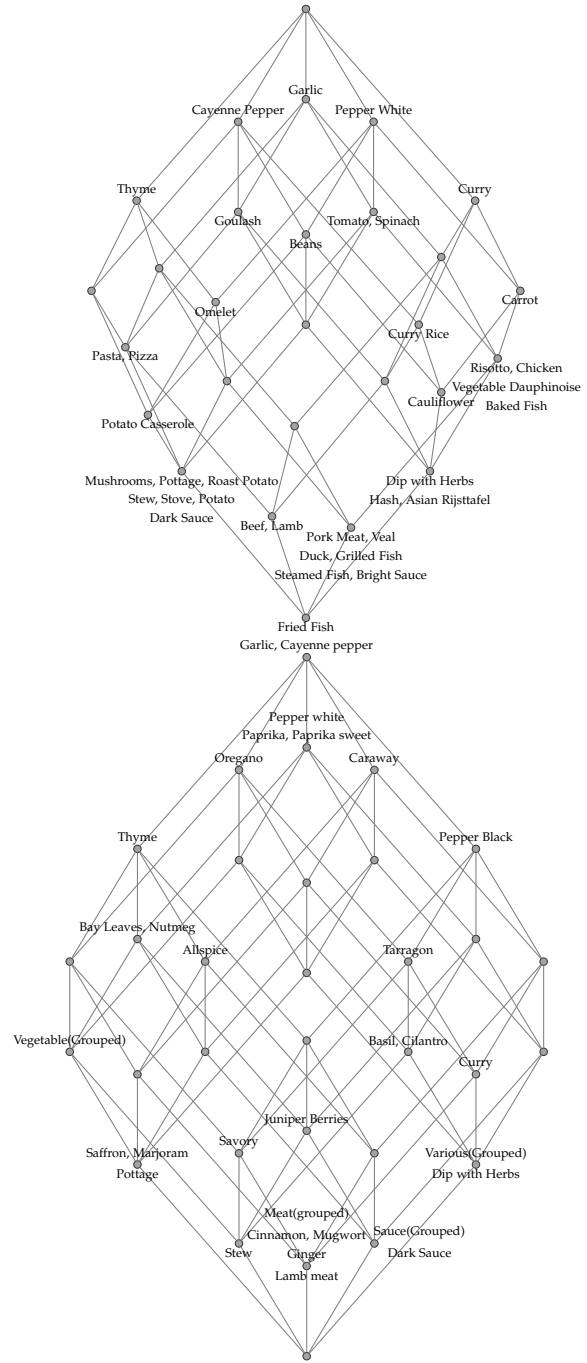

\section{The Problem of Large Contexts}
\label{larger}
Large formal contexts constitute a infeasible problem for classical formal
concept analysis. This is in particular true when computing implicational
theories of them. Applying FCA notions only to \pkcores may be a possible
resort. However, this results in a large number of \pkcores to be considered,
which constitutes a problem in its own, see~\cref{table:datasets}.  Since our
ultimate goal in this work is to present a novel method for coping with large
formal contexts, we demonstrate and evaluate an approach for reducing the search
space for $p$ and $q$ in this section.
For $\Scon\pkleq{p,q}\context$ we know from~\cref{prop:embed} that
$|\mathfrak{B}(\Scon)|$ decreases monotonously in $p$ and $q$.  Let $\hat
p\in\mathbb{N}$ be the maximal number such that for all
$\Scon\pkleq{p,q}\context$ with $p\geq q$ and $|\mathfrak{B}(\Scon)|\leq 30$ we
have that $\Scon\pkleq{p,q}\Tcon\pkleq{\hat p,1}\context$. Furthermore, let
$\hat q\in\mathbb{N}$ be the maximal number such that for all
$\Scon\pkleq{p,q}\context$ with $p<q$ and $|\mathfrak{B}(\Scon)|\leq 30$ we have
that $\Scon\pkleq{p,q}\Tcon\pkleq{1,\hat q}\context$.  This implies that cores
with human readable sized concept lattices are sub-contexts of particular
object- and attribute cores. Our computational approach now is based on finding
those particular cores. Equipped with these contexts we do only need to consider
\pkcores $\Scon\pkleq{p,q}\context$ that are sub-contexts of $\Tcon\pkleq{\hat
  p,1}\context$ or $\Tcon\pkleq{1,\hat q}\context$. Since a direct computation
of $\hat p$ and $\hat q$ is infeasible we suggest an estimation. A naïve
solution for this would be to examine the derivation size distribution of all
objects or attributes. For the data sets investigated in this work this approach
was unsuccessful. More fruitful is a binary search among the parameters. We set
for this the bound for the concept lattice size to 60 as threshold (which is
twice as large as what we consider as readable). Therefore, even if the $\hat
p,1$-core is not human readable, we may encounter $\hat p,q$-core with $q > 1$
that is readable. A general observation for large formal contexts in the
following experiments is that cores with readable concept lattice tend to having
extreme values for parameters $p,q$, i.e., either $p\ll q$ or $q\ll p$.

\subsubsection*{Binary Search For Cores In Mushroom}\label{experiments}
\begin{figure}
  \begin{center}
    \includegraphics[trim=0ex 2ex 1ex 2ex, clip,scale=0.45]{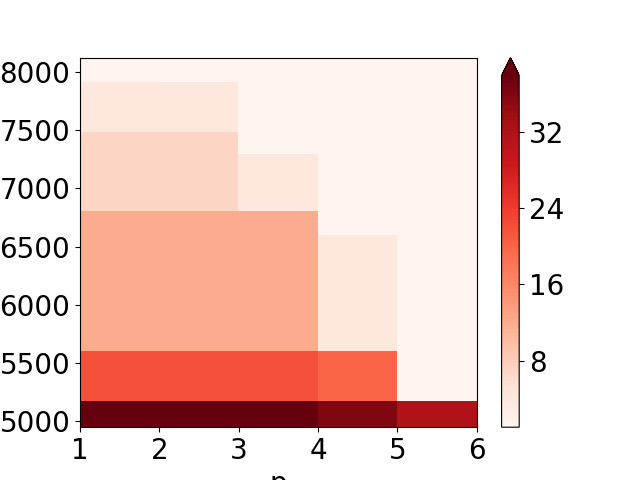}
    \scalebox{1}{\colorlet{mivertexcolor}{black!80}
\colorlet{jivertexcolor}{black!80}
\colorlet{vertexcolor}{black!80}
\colorlet{bordercolor}{black!80}
\colorlet{linecolor}{gray}
\tikzset{vertexbase/.style={semithick, shape=circle, inner sep=2pt, outer sep=0pt, draw=bordercolor},%
  vertex/.style={vertexbase, fill=vertexcolor!45},%
  mivertex/.style={vertexbase, fill=mivertexcolor!45},%
  jivertex/.style={vertexbase, fill=jivertexcolor!45},%
  divertex/.style={vertexbase, top color=mivertexcolor!45, bottom color=jivertexcolor!45},%
  conn/.style={-, thick, color=linecolor}%
}
\tikzstyle{line2} = [label distance=0.25cm]
\tikzstyle{lines} = [xshift=0.7cm,yshift=-0.1cm]
\tikzstyle{lines2} = [xshift=0.7cm,label distance=0.25cm]
\begin{tikzpicture}[scale=0.17]
  \begin{scope} 
    \begin{scope} 
      \foreach \nodename/\nodetype/\xpos/\ypos in {%
        0/vertex/0/0,
        1/jivertex/0/8,
        2/jivertex/-6/10,
        3/jivertex/6/10,
        4/jivertex/-11/13,
        5/jivertex/15/17,
        6/vertex/6/18,
        7/vertex/0/20,
        8/vertex/-10/22,
        9/vertex/-5/23,
        10/vertex/-15/25,
        11/vertex/15/25,
        12/vertex/-21/27,
        13/vertex/9/27,
        14/vertex/21/27,
        15/vertex/4/30,
        16/vertex/-4/32,
        17/vertex/-21/35,
        18/vertex/-9/35,
        19/vertex/21/35,
        20/vertex/-15/37,
        21/vertex/15/37,
        22/vertex/5/39,
        23/vertex/10/40,
        24/vertex/0/42,
        25/vertex/-6/44,
        26/mivertex/-15/45,
        27/mivertex/11/49,
        28/mivertex/-6/52,
        29/mivertex/6/52,
        30/mivertex/0/54,
        31/vertex/0/62
      } \node[\nodetype] (\nodename) at (\xpos, \ypos) {};
    \end{scope}
    \begin{scope} 
      \path (10) edge[conn] (24);
      \path (25) edge[conn] (30);
      \path (2) edge[conn] (12);
      \path (1) edge[conn] (8);
      \path (21) edge[conn] (30);
      \path (18) edge[conn] (29);
      \path (9) edge[conn] (20);
      \path (5) edge[conn] (14);
      \path (0) edge[conn] (5);
      \path (26) edge[conn] (31);
      \path (2) edge[conn] (10);
      \path (18) edge[conn] (26);
      \path (13) edge[conn] (24);
      \path (27) edge[conn] (31);
      \path (28) edge[conn] (31);
      \path (3) edge[conn] (9);
      \path (16) edge[conn] (26);
      \path (23) edge[conn] (30);
      \path (20) edge[conn] (26);
      \path (22) edge[conn] (27);
      \path (5) edge[conn] (13);
      \path (5) edge[conn] (15);
      \path (14) edge[conn] (21);
      \path (3) edge[conn] (6);
      \path (22) edge[conn] (28);
      \path (13) edge[conn] (21);
      \path (6) edge[conn] (16);
      \path (30) edge[conn] (31);
      \path (10) edge[conn] (18);
      \path (4) edge[conn] (15);
      \path (2) edge[conn] (13);
      \path (6) edge[conn] (18);
      \path (23) edge[conn] (27);
      \path (0) edge[conn] (2);
      \path (8) edge[conn] (16);
      \path (0) edge[conn] (4);
      \path (19) edge[conn] (29);
      \path (11) edge[conn] (22);
      \path (21) edge[conn] (29);
      \path (0) edge[conn] (3);
      \path (24) edge[conn] (28);
      \path (8) edge[conn] (17);
      \path (7) edge[conn] (18);
      \path (29) edge[conn] (31);
      \path (1) edge[conn] (6);
      \path (16) edge[conn] (27);
      \path (15) edge[conn] (23);
      \path (12) edge[conn] (17);
      \path (14) edge[conn] (19);
      \path (11) edge[conn] (19);
      \path (9) edge[conn] (23);
      \path (7) edge[conn] (21);
      \path (11) edge[conn] (24);
      \path (12) edge[conn] (25);
      \path (13) edge[conn] (25);
      \path (25) edge[conn] (28);
      \path (8) edge[conn] (22);
      \path (0) edge[conn] (1);
      \path (4) edge[conn] (12);
      \path (1) edge[conn] (10);
      \path (17) edge[conn] (26);
      \path (12) edge[conn] (20);
      \path (10) edge[conn] (17);
      \path (19) edge[conn] (27);
      \path (7) edge[conn] (20);
      \path (4) edge[conn] (8);
      \path (20) edge[conn] (30);
      \path (4) edge[conn] (9);
      \path (15) edge[conn] (22);
      \path (9) edge[conn] (16);
      \path (15) edge[conn] (25);
      \path (14) edge[conn] (23);
      \path (2) edge[conn] (7);
      \path (17) edge[conn] (28);
      \path (1) edge[conn] (11);
      \path (3) edge[conn] (14);
      \path (5) edge[conn] (11);
      \path (24) edge[conn] (29);
      \path (3) edge[conn] (7);
      \path (6) edge[conn] (19);
    \end{scope}
    \begin{scope} 
      \foreach \nodename/\labelpos/\labelopts/\labelcontent in {%
        0/below//{2576},  %
        1/below//{384},  
        2/below//{1360},  
        3/below//{296},  
        4/below//{1440},  
        5/below//{192},  
        6/below//{144},  
        8/below//{384},  
        9/below//{34},  
        10/below//{224}, 
        12/below//{896}, 
        26/above//{veil-color:white},
        26/above/line2/{gill-attachment:free},
        27/above//{gill-size:broad},
        28/above/line2/{ring-number:},
        28/above//{one},
        29/above/lines2/{stalk surface},
        29/above/lines/{above ring:smooth},
        30/above/line2/{gill-spacing:close},
        31/above//{veil-type:partial}
      } \coordinate[label={[\labelopts]\labelpos:{\labelcontent}}](c) at (\nodename);
    \end{scope}
  \end{scope}
\end{tikzpicture}}
    \caption{Heat-map for the core concept lattice sizes (above) and the
      concept lattice of the $5,5176$-core of the mushroom context (below).}
  \end{center}
  \label{size-distro-mushroom}
\end{figure}

Due to its size (in context as well as in concept lattice terms) the Mushroom
data set is an ideal candidate for the just proposed binary search. Computing
the sizes of all core concept lattices is infeasible. We search as an initial
core for our search paradigm $\hat q$ with $p=1$. We start with $\hat q=|G|$,
which results almost surely in an empty context for real-world data sets. The
binary search in $[1,|G|]$ gives a \pkcore with $p=1$ and $q=4937$. With 38
formal concepts the concept of this sub-context has less than two times 30
concepts, which we considered human readable. Using this core we reduce the
search space to 12832 different $p,q$, which are all bound by 38 in the number
of formal concepts. We may note that searching for some $\hat p$ is impractical
for this data set. This is due to the fact that it was created by scaling
twenty-three non-binary attributes into 119 nominal-scaled attributes. Hence,
there are only two sub-contexts of the mushroom context which are in core
relation for $q=1$. More accurately, these are the mushroom context and the
empty context. We depicted a heat-map of the core concept lattices
in~\cref{size-distro-mushroom} for $q\in[4937,8123]$ and $p\in[1,5]$.  We are
 interested in cores with as much readable conceptual information as
possible, which are cores with $4937<p<5100$, that are also interesting. Out of
those we find the $5,5176$-core is interesting. This core contains seven
distinct attributes and 7930 mushrooms. In the depiction of the corresponding
concept~\cref{size-distro-mushroom} we refrained from annotating all
objects and indicated the number of mushrooms instead (using short-hand
notation from FCA). Hence, to get the total number of objects associated to some
concept one has to add to the object count all numbers from concepts in the
order ideal of that concept. When comparing the core  lattice with the
original lattice we notice that the object number for all concepts with at least
five attributes is similar, which is expected from our theoretical
considerations.

\subsubsection*{Binary Search for Cores in Wiki44k}
To provide another example, we perform the same search in the Wiki44k data
set. The corresponding concept lattice contains 21,923 formal concepts and we
were able to compute that there are approximately 98,000 non-empty \pkcore
contexts. Hence, computing all interesting (\cref{coreChar}) cores is
computationally costly. Therefore, we resort again to the binary search
approach. As the largest attribute core with a readable concept lattice we
identified $1,5202$-core, having 54 formal concepts. We display a heat-map for
the concept lattice size distribution of all sub-cores starting from this bound
in~\cref{wiki1}. As for the object core we discovered that the $15,1$-core has
139 concepts. However, the $16,1$-core is empty, thus we are constrained to
employ the $15,2$-core.  Starting from this we can report that the $15,3$-core
and the $15,4$-core have twenty-five concepts and beyond that the cores are
empty. Hence, those two are interesting candidates. Despite having more concepts
than we considered readable we looked more thorough into the $15,2$-core. Using
background knowledge about the Wikidata properties we are able to present a
well-drawn diagram of its lattice, as depicted in~\cref{wiki1}. We realized that
in this core we do only cover eighteen out of 101 attributes. This is, for
example, in contrast to our observations for the Spice data set, where more than
50\% were covered using a similar sized \pkcorenw. Nonetheless, the $15,2$-core
provides a rough overview about the most important properties in the Wiki44k
data set, in terms of usage for items, and how they are connected.

\begin{figure}
  \scalebox{.2}{\includegraphics{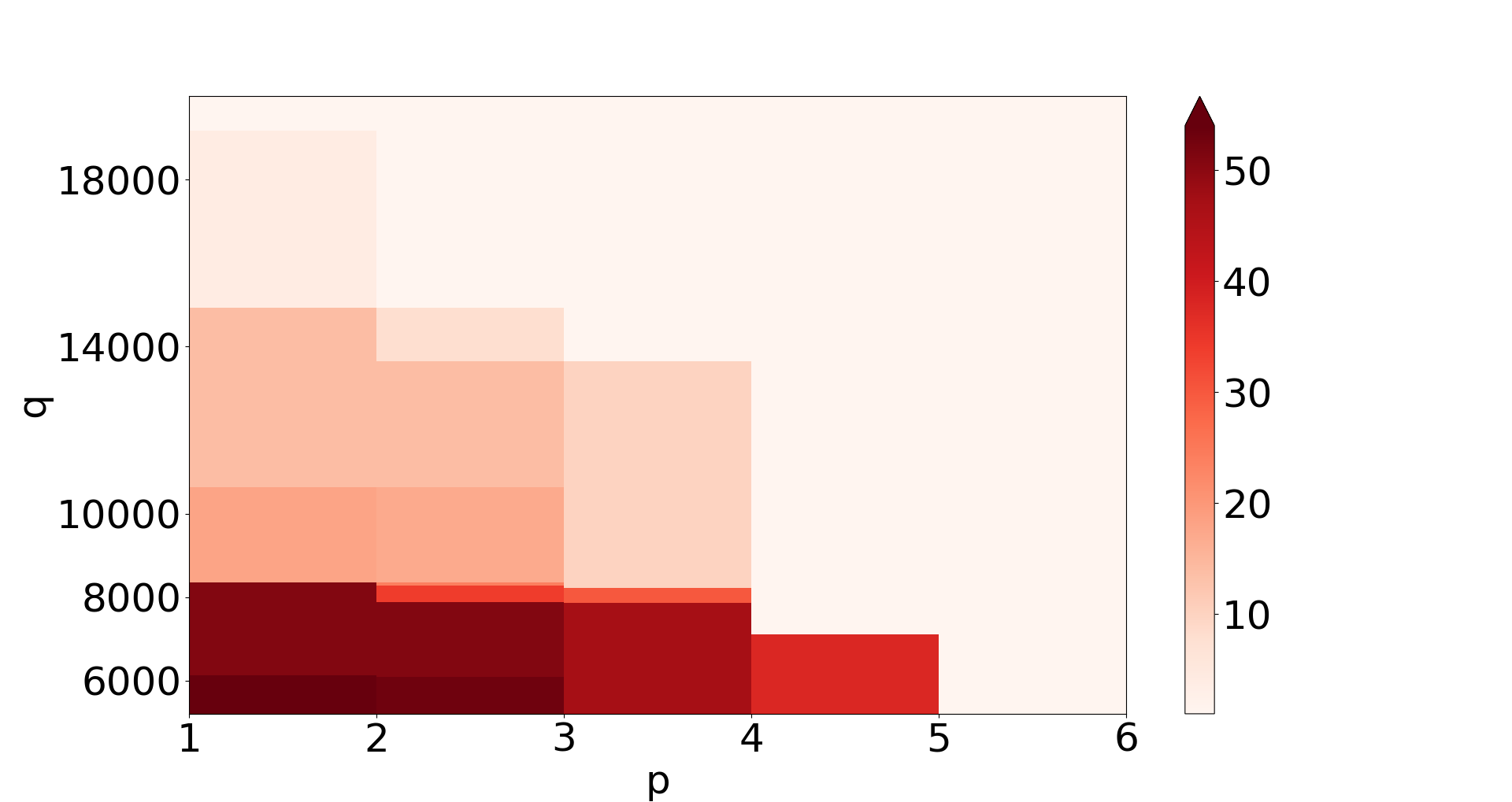}}
  \scalebox{.5}{\colorlet{mivertexcolor}{black!80}
\colorlet{jivertexcolor}{black!80}
\colorlet{vertexcolor}{black!80}
\colorlet{bordercolor}{black!80}
\colorlet{linecolor}{gray}
\tikzset{vertexbase/.style={semithick, shape=circle, inner sep=2pt, outer sep=0pt, draw=bordercolor},%
  vertex/.style={vertexbase, fill=vertexcolor!45},%
  mivertex/.style={vertexbase, fill=mivertexcolor!45},%
  jivertex/.style={vertexbase, fill=jivertexcolor!45},%
  divertex/.style={vertexbase, top color=mivertexcolor!45, bottom color=jivertexcolor!45},%
  conn/.style={-, thick, color=linecolor}%
}
\tikzstyle{line2} = [label distance=0.5cm]
\tikzstyle{lines} = [xshift=-0.7cm,yshift=-0.1cm]
\tikzstyle{lines2} = [yshift=0.1cm]
\tikzstyle{lines3} = [xshift=0.7cm]
\tikzstyle{lines4} = [yshift=-0.2cm]
\begin{tikzpicture}[scale=0.1]
  \begin{scope} 
    \begin{scope} 
      \foreach \nodename/\nodetype/\xpos/\ypos in {%
        0/vertex/92.64738071136499/-33.598201869751264,
        1/jivertex/91.73496876740114/1.4914119685447105,
        2/jivertex/125.56075489784867/4.3545406132275275,
        3/vertex/96.33269413253242/9.79286054447617,
        4/jivertex/56.11196056177543/20.102566539774756,
        5/jivertex/-13.531264052929291/21.835539966206,
        6/jivertex/-23.523033701358173/22.616194954781733,
        7/jivertex/106.97808430452308/27.34665846598648,
        8/vertex/-23.60681130988825/29.569736462778692,
        9/jivertex/20.428848628911314/37.69539185449391,
        10/vertex/20.173419441959577/45.48598205652191,
        11/vertex/10.50848731674862/45.80355215203988,
        12/jivertex/76.59324651242231/45.9132890440605,
        13/mivertex/125.08831412005226/51.75609865213467,
        14/vertex/65.89268390032984/52.52029589859217,
        15/mivertex/10.50848731674862/53.343536919747415,
        16/vertex/76.17435846977189/53.95593946294854,
        17/vertex/46.69078446306169/54.31454683929935,
        18/vertex/65.80890629179979/59.47383740658911,
        19/vertex/38.98324447829398/60.346534653465376,
        20/vertex/45.93678598629094/63.36252856054839,
        21/vertex/-7.32353880549347/69.11652703731919,
        22/mivertex/38.73191165270373/69.22696115765426,
        23/vertex/2.0373191165270264/69.64584920030468,
        24/vertex/82.65494354652999/70.52957216748098,
        25/vertex/72.791419973329/75.78090499307127,
        26/vertex/-7.407316414023544/76.07006854531615,
        27/vertex/91.53537005071888/76.22644954752667,
        28/vertex/82.23605550387957/76.89667041576735,
        29/vertex/40.07235338918508/77.52094440213256,
        30/vertex/1.2217772645750848/77.66184310738774,
        31/vertex/9.912414318354902/78.27494287890332,
        32/vertex/72.70764236479893/82.73444650106823,
        33/vertex/91.36781483365871/83.43132388111388,
        34/vertex/82.57116593799991/83.76643431523422,
        35/mivertex/1.031987814166019/84.0555978674791,
        36/vertex/30.35415079969535/85.0609291698401,
        37/vertex/39.48591012947449/85.0609291698401,
        38/vertex/47.696115765422704/85.22848438690026,
        39/mivertex/81.90094506975925/90.6361982147011,
        40/mivertex/30.270373191165273/92.01447067783705,
        41/mivertex/47.61233815689262/92.93602437166797,
        42/mivertex/38.8994668697639/93.60624523990865,
        43/vertex/38.70967741935483/100.0
      } \node[\nodetype] (\nodename) at (\xpos, \ypos) {};
    \end{scope}
    \begin{scope} 
      \path (24) edge[conn] (29);
      \path (13) edge[conn] (33);
      \path (27) edge[conn] (34);
      \path (17) edge[conn] (19);
      \path (31) edge[conn] (35);
      \path (20) edge[conn] (22);
      \path (23) edge[conn] (26);
      \path (39) edge[conn] (43);
      \path (17) edge[conn] (29);
      \path (5) edge[conn] (10);
      \path (25) edge[conn] (34);
      \path (2) edge[conn] (13);
      \path (3) edge[conn] (30);
      \path (26) edge[conn] (40);
      \path (0) edge[conn] (7);
      \path (6) edge[conn] (21);
      \path (29) edge[conn] (37);
      \path (6) edge[conn] (11);
      \path (7) edge[conn] (16);
      \path (18) edge[conn] (32);
      \path (34) edge[conn] (39);
      \path (27) edge[conn] (33);
      \path (1) edge[conn] (3);
      \path (31) edge[conn] (41);
      \path (26) edge[conn] (35);
      \path (0) edge[conn] (6);
      \path (21) edge[conn] (30);
      \path (7) edge[conn] (13);
      \path (10) edge[conn] (20);
      \path (1) edge[conn] (24);
      \path (12) edge[conn] (16);
      \path (24) edge[conn] (27);
      \path (2) edge[conn] (3);
      \path (20) edge[conn] (37);
      \path (4) edge[conn] (14);
      \path (33) edge[conn] (39);
      \path (14) edge[conn] (18);
      \path (33) edge[conn] (41);
      \path (11) edge[conn] (19);
      \path (19) edge[conn] (22);
      \path (15) edge[conn] (22);
      \path (12) edge[conn] (14);
      \path (24) edge[conn] (28);
      \path (6) edge[conn] (8);
      \path (5) edge[conn] (23);
      \path (0) edge[conn] (9);
      \path (9) edge[conn] (17);
      \path (18) edge[conn] (22);
      \path (4) edge[conn] (11);
      \path (38) edge[conn] (41);
      \path (9) edge[conn] (11);
      \path (38) edge[conn] (42);
      \path (5) edge[conn] (8);
      \path (29) edge[conn] (38);
      \path (11) edge[conn] (15);
      \path (3) edge[conn] (27);
      \path (34) edge[conn] (42);
      \path (36) edge[conn] (42);
      \path (21) edge[conn] (26);
      \path (0) edge[conn] (2);
      \path (21) edge[conn] (36);
      \path (22) edge[conn] (40);
      \path (12) edge[conn] (24);
      \path (35) edge[conn] (43);
      \path (29) edge[conn] (36);
      \path (37) edge[conn] (40);
      \path (17) edge[conn] (20);
      \path (14) edge[conn] (19);
      \path (0) edge[conn] (5);
      \path (27) edge[conn] (38);
      \path (16) edge[conn] (18);
      \path (23) edge[conn] (37);
      \path (32) edge[conn] (40);
      \path (16) edge[conn] (20);
      \path (24) edge[conn] (25);
      \path (14) edge[conn] (25);
      \path (1) edge[conn] (21);
      \path (23) edge[conn] (31);
      \path (42) edge[conn] (43);
      \path (9) edge[conn] (10);
      \path (0) edge[conn] (12);
      \path (30) edge[conn] (42);
      \path (3) edge[conn] (31);
      \path (0) edge[conn] (1);
      \path (36) edge[conn] (40);
      \path (28) edge[conn] (37);
      \path (28) edge[conn] (33);
      \path (37) edge[conn] (41);
      \path (19) edge[conn] (36);
      \path (8) edge[conn] (26);
      \path (41) edge[conn] (43);
      \path (10) edge[conn] (15);
      \path (25) edge[conn] (36);
      \path (40) edge[conn] (43);
      \path (0) edge[conn] (4);
      \path (8) edge[conn] (15);
      \path (25) edge[conn] (32);
      \path (12) edge[conn] (17);
      \path (16) edge[conn] (28);
      \path (32) edge[conn] (39);
      \path (1) edge[conn] (23);
      \path (28) edge[conn] (32);
      \path (30) edge[conn] (35);
    \end{scope}
    \begin{scope} 
      \foreach \nodename/\labelpos/\labelopts/\labelcontent in {%
        1/below//{Q37134},
        2/below//{Q17142},
        4/below//{Q76736},
        5/below/lines4/{Q152480, Q150665},
        6/below//{Q150494},
        7/below//{Q32347},
        9/below//{Q151321, Q151075},
        12/below//{Q58023, Q130734, Q152756, Q183085},
        13/above//{cause of death (P509)},
        15/above//{position held (P39)},
        22/above//{place of burial (P119)},
        35/above//{occupation (P106)},
        39/above//{country of citizenship (P27)},
        40/above/lines/{follows (P155)},
        41/above/lines3/{mother (P25)},
        42/above/lines2/{spouse (P26)},
        43/above//{religion (P140), place of death (P20), sex or gender (P21), followed by (P156), family (P53)},
        43/above/line2/{place of birth (P19), instance of (P31), child (P40), father (P22), award received (P166)}
      } \coordinate[label={[\labelopts]\labelpos:{\labelcontent}}](c) at (\nodename);
    \end{scope}
  \end{scope}
\end{tikzpicture}}  
     \scalebox{.45}{\colorlet{mivertexcolor}{black!80}
\colorlet{jivertexcolor}{red}
\colorlet{vertexcolor}{black!80}
\colorlet{bordercolor}{black!80}
\colorlet{linecolor}{gray}
\tikzset{vertexbase/.style={semithick, shape=circle, inner sep=2pt, outer sep=0pt},%
  vertex/.style={vertexbase, fill=vertexcolor!45},%
  mivertex/.style={vertexbase, fill=mivertexcolor!45},%
  jivertex/.style={vertexbase, fill=jivertexcolor!45,minimum size=.4cm},%
  divertex/.style={vertexbase, top color=mivertexcolor!45, bottom color=jivertexcolor!45},%
  conn/.style={-, thick, color=linecolor}%
}
\tikzstyle{line} = [xshift=-1cm]
\tikzstyle{liner} = [xshift=0.3cm]
\begin{tikzpicture}[scale=0.3,font=\Large]
  \begin{scope} 
    \begin{scope} 
      \foreach \nodename/\nodetype/\xpos/\ypos in {%
        0/vertex/0/0,
        1/vertex/11/13,
        2/vertex/-15/17,
        3/vertex/16/20,
        4/vertex/-18/22,
        5/vertex/7/23,
        6/vertex/14/26,
        7/vertex/18/26,
        8/vertex/-19/27,
        9/vertex/1/27,
        10/vertex/-13/29,
        11/vertex/23/29,
        12/vertex/-24/30,
        13/jivertex/-4/30,
        14/vertex/8/30,
        15/vertex/-19/31,
        16/vertex/-12/32,
        17/vertex/15/33,
        18/vertex/-24/34,
        19/vertex/21/35,
        20/vertex/25/35,
        21/vertex/-19/37,
        22/vertex/9/37,
        23/vertex/-25/39,
        24/vertex/3/39,
        25/vertex/15/39,
        26/vertex/-7/38,
        27/mivertex/-20/42,
        28/mivertex/22/42,
        29/mivertex/16/46,
        30/vertex/-13/47,
        31/mivertex/10/48,
        32/vertex/-1/51,
        33/vertex/0/66
      } \node[\nodetype] (\nodename) at (\xpos, \ypos) {};
    \end{scope}
    \begin{scope} 
      \path (27) edge[conn] (33);
      \path (11) edge[conn] (19);
      \path (11) edge[conn] (20);
      \path (11) edge[conn] (25);
      \path (29) edge[conn] (33);
      \path (19) edge[conn] (31);
      \path (19) edge[conn] (28);
      \path (3) edge[conn] (11);
      \path (3) edge[conn] (6);
      \path (3) edge[conn] (14);
      \path (3) edge[conn] (7);
      \path (17) edge[conn] (28);
      \path (17) edge[conn] (32);
      \path (9) edge[conn] (22);
      \path (9) edge[conn] (26);
      \path (8) edge[conn] (18);
      \path (8) edge[conn] (26);
      \path (8) edge[conn] (15);
      \path (21) edge[conn] (27);
      \path (21) edge[conn] (30);
      \path (22) edge[conn] (29);
      \path (22) edge[conn] (32);
      \path (31) edge[conn] (33);
      \path (28) edge[conn] (33);
      \path (13) edge[conn] (24);
      \path (13) edge[conn] (26);
      \path (12) edge[conn] (21);
      \path (12) edge[conn] (18);
      \path (5) edge[conn] (17);
      \path (5) edge[conn] (26);
      \path (20) edge[conn] (29);
      \path (20) edge[conn] (28);
      \path (16) edge[conn] (27);
      \path (16) edge[conn] (32);
      \path (6) edge[conn] (19);
      \path (6) edge[conn] (17);
      \path (6) edge[conn] (24);
      \path (0) edge[conn] (2);
      \path (0) edge[conn] (1);
      \path (23) edge[conn] (33);
      \path (32) edge[conn] (33);
      \path (2) edge[conn] (13);
      \path (2) edge[conn] (4);
      \path (4) edge[conn] (8);
      \path (4) edge[conn] (12);
      \path (4) edge[conn] (10);
      \path (18) edge[conn] (23);
      \path (18) edge[conn] (30);
      \path (14) edge[conn] (22);
      \path (14) edge[conn] (24);
      \path (14) edge[conn] (25);
      \path (24) edge[conn] (31);
      \path (24) edge[conn] (32);
      \path (30) edge[conn] (33);
      \path (1) edge[conn] (3);
      \path (1) edge[conn] (9);
      \path (1) edge[conn] (13);
      \path (1) edge[conn] (5);
      \path (26) edge[conn] (32);
      \path (26) edge[conn] (30);
      \path (15) edge[conn] (23);
      \path (15) edge[conn] (32);
      \path (10) edge[conn] (21);
      \path (10) edge[conn] (16);
      \path (10) edge[conn] (26);
      \path (7) edge[conn] (17);
      \path (7) edge[conn] (22);
      \path (7) edge[conn] (20);
      \path (25) edge[conn] (29);
      \path (25) edge[conn] (31);
    \end{scope}
    \begin{scope} 
      \foreach \nodename/\labelpos/\labelopts/\labelcontent in {%
        1/below//{1},
        2/below//{1},
        3/below//{6758},
        4/below//{12284},
        5/below//{3},
        6/below//{1327},
        7/below//{1636},
        8/below//{1511},
        9/below//{1},
        10/below//{2659},
        11/below//{2},
        12/below//{19},
        14/below//{207},
        15/below//{1},
        16/below//{73},
        17/below//{1386},
        18/below//{2},
        19/below//{1},
        21/below//{52},
        22/below//{26},
        23/above/line/{Disambiguation P132},
        24/below//{66},
        25/below//{2},
        26/below//{2593},
        27/below/liner/{3},
        27/above//{located in (P131)},
        28/above//{sex or gender (P21)},
        29/above//{country of citizenship (P27)},
        30/below//{241},
        30/above//{country (P17)},
        31/above//{occupation (P106)},
        32/below//{11645},
        32/above//{instance of (P31)}
      } \coordinate[label={[\labelopts]\labelpos:{\labelcontent}}](c) at (\nodename);
    \end{scope}
  \end{scope}
\end{tikzpicture}}
     \caption{The heatmapt of all core concept lattice sizes of the
       $1,5202$-core (above) of the wiki44k data set, the concept
       lattice of its $15,2$-core (middle) and the $1,8290$-core
       (bottom).}
  \label{wiki1}
\end{figure}

Coming back to the object core investigation, we start with the
$1,5202$-core. From there we find two candidates for interesting
\pkcores, namely the $1,8290$-core on 41735 objects, seven attributes
with 34 concepts and the $4,7115$-core on 20748 objects, eight
attributes with 38 concepts. Although the latter covers more
attributes we decided to look into the former. The reason for this is
the increased readability (due to a lower number of concepts) and the
higher object coverage. Cores with a higher object coverage entail
implications with a higher confidence in the original concept lattice,
see~\cref{Core-propositions}. For the visualizations of~\cref{wiki1}
we decided to indicate the objects using their Wikidata item numbers
instead of their labels. This core describes a majority of the
WikiData entities contained in the dataset. The Wiki44k data set
employs properties used for countries or people for the majority of
statements. Using our proposed core analysis we are able to provide an
human readable diagram representing how these properties are
related. This, in turn, enables us to identify logical errors. For
example, we found that there entities which are countries with an
occupation and a gender, see the concept in~\cref{wiki1} indicated in
red. The Wikidata description of these properties, however, states
that the country property should not be used on human. By a closer
look into the data set we found that one of these entities is "Alfred
A. Knopf", which is both a person (Q61108) and the name of an American
book publisher (Q1431868). Hence, someone added claim to Wikidata on a
wrong item. Besides the study of property usage we can also employ our
analysis method for the identification of missing information, i.e.,
missing statements in Wikidata. We see in~\cref{wiki1} that all
properties that are depicted on the right part of the diagram describe
human features, e.g., occupation (P106), country of citizenship (P27),
and gender (P21). Honoring the constraint that occupation is only to
be used for instances of (P31) human (Q5), we find 66 items having
P106 but missing the property P27. For example, one is "James Blunt"
(Q130799), an English singer-songwriter.

The approach described above can be conducted for arbitrary combinations of
Wikidata properties. Hence, \pkcores enable the user to validate or contradict
reasonable constraints in incomprehensible sized data sets, at least to some
confidence. Furthermore, the \pkcore approach enables an automated procedure for
checking implicational bases, cf.~\cref{Core-propositions}. In particular, one
could employ methods from~\cite{hanika2019discovering} to investigate
implicational bases in Wikidata through pre-computing feasible sized \pkcore
contexts.

\subsection{Comparison with the TITANIC approach}
TITANIC~\cite{stumme2002computing} is an a Apriori based approach that computes
all formal concepts having a minimum support in the data set. Like Apriori,
TITANIC computes these concepts in a bottom-up fashion, with respect to the
attributes. This results in an ordered set of concepts which constitutes a
join-semilattice. An example of such a result, here based on the Mushroom data
set, is depicted in~\cref{core-titanic}. In the following we compare concept
lattices arising from \pkcores to the join-semilattices computed through
TITANIC. We reuse for our analysis the pre-identified interesting $5,5176$-core
$\Scon$ of Mushroom (see~\cref{core-titanic}, above) and indicated
support-values (in $\Scon$) for all object concepts, i.e., for all concepts that
fulfill $({g}^{JJ},{g}^{J})$ for $g\in H$. These numbers are to be read as
follows: the true support value for some concept $c$ is the sum of all support
values of concepts in the order ideal ${\downarrow} c$ from $c$. How support
values of $\Scon$ relate to support values in $\context$ was discussed
in~\cref{Core-propositions}. We observe that $\Scon$ comprises seven attributes
compared to the TITANIC semilattice which has twelve. Both conceptual structures
are built-on thirty-two formal concept. In particular, twenty-one intents of
$\Scon$ are present in the TITANIC semilattice. Hence, the \pkcore data
reduction approach exhibits a different notion for selecting important subsets
of data. Nonetheless, a more thorough investigation of the differences in
applicability to real-world problems is deemed future work.

\begin{figure}[t]
  \begin{center}
    \scalebox{0.6}{\colorlet{mivertexcolor}{black!80}
\colorlet{jivertexcolor}{black!80}
\colorlet{vertexcolor}{black!80}
\colorlet{bordercolor}{black!80}
\colorlet{linecolor}{gray}
\tikzset{vertexbase/.style={semithick, shape=circle, inner sep=2pt, outer sep=0pt, draw=bordercolor},%
  vertex/.style={vertexbase, fill=vertexcolor!45},%
  mivertex/.style={vertexbase, fill=mivertexcolor!45},%
  jivertex/.style={vertexbase, fill=jivertexcolor!45},%
  divertex/.style={vertexbase, top color=mivertexcolor!45, bottom color=jivertexcolor!45},%
  conn/.style={-, thick, color=linecolor}%
}
\tikzstyle{line2} = [label distance=0.25cm]
\tikzstyle{line3} = [label distance=1cm]
\tikzstyle{line4} = [label distance=1.5cm]
\tikzstyle{lines} = [xshift=0.7cm,yshift=-0.1cm]
\tikzstyle{lines2} = [xshift=0.7cm,label distance=0.25cm]
\begin{tikzpicture}[scale=0.2]
  \begin{scope} 
    \begin{scope} 
      \foreach \nodename/\nodetype/\xpos/\ypos in {%
        0/vertex/0/0,
        1/jivertex/0/8,
        2/jivertex/-6/10,
        3/jivertex/6/10,
        4/jivertex/-11/13,
        5/jivertex/15/17,
        6/vertex/6/18,
        7/vertex/0/20,
        8/vertex/-10/22,
        9/vertex/-5/23,
        10/vertex/-15/25,
        11/vertex/15/25,
        12/vertex/-21/27,
        13/vertex/9/27,
        14/vertex/21/27,
        15/vertex/4/30,
        16/vertex/-4/32,
        17/vertex/-21/35,
        18/vertex/-9/35,
        19/vertex/21/35,
        20/vertex/-15/37,
        21/vertex/15/37,
        22/vertex/5/39,
        23/vertex/10/40,
        24/vertex/0/42,
        25/vertex/-6/44,
        26/mivertex/-15/45,
        27/mivertex/11/49,
        28/mivertex/-6/52,
        29/mivertex/6/52,
        30/mivertex/0/54,
        31/vertex/0/62
      } \node[\nodetype] (\nodename) at (\xpos, \ypos) {};
    \end{scope}
    \begin{scope} 
      \path (10) edge[conn] (24);
      \path (25) edge[conn] (30);
      \path (2) edge[conn] (12);
      \path (1) edge[conn] (8);
      \path (21) edge[conn] (30);
      \path (18) edge[conn] (29);
      \path (9) edge[conn] (20);
      \path (5) edge[conn] (14);
      \path (0) edge[conn] (5);
      \path (26) edge[conn] (31);
      \path (2) edge[conn] (10);
      \path (18) edge[conn] (26);
      \path (13) edge[conn] (24);
      \path (27) edge[conn] (31);
      \path (28) edge[conn] (31);
      \path (3) edge[conn] (9);
      \path (16) edge[conn] (26);
      \path (23) edge[conn] (30);
      \path (20) edge[conn] (26);
      \path (22) edge[conn] (27);
      \path (5) edge[conn] (13);
      \path (5) edge[conn] (15);
      \path (14) edge[conn] (21);
      \path (3) edge[conn] (6);
      \path (22) edge[conn] (28);
      \path (13) edge[conn] (21);
      \path (6) edge[conn] (16);
      \path (30) edge[conn] (31);
      \path (10) edge[conn] (18);
      \path (4) edge[conn] (15);
      \path (2) edge[conn] (13);
      \path (6) edge[conn] (18);
      \path (23) edge[conn] (27);
      \path (0) edge[conn] (2);
      \path (8) edge[conn] (16);
      \path (0) edge[conn] (4);
      \path (19) edge[conn] (29);
      \path (11) edge[conn] (22);
      \path (21) edge[conn] (29);
      \path (0) edge[conn] (3);
      \path (24) edge[conn] (28);
      \path (8) edge[conn] (17);
      \path (7) edge[conn] (18);
      \path (29) edge[conn] (31);
      \path (1) edge[conn] (6);
      \path (16) edge[conn] (27);
      \path (15) edge[conn] (23);
      \path (12) edge[conn] (17);
      \path (14) edge[conn] (19);
      \path (11) edge[conn] (19);
      \path (9) edge[conn] (23);
      \path (7) edge[conn] (21);
      \path (11) edge[conn] (24);
      \path (12) edge[conn] (25);
      \path (13) edge[conn] (25);
      \path (25) edge[conn] (28);
      \path (8) edge[conn] (22);
      \path (0) edge[conn] (1);
      \path (4) edge[conn] (12);
      \path (1) edge[conn] (10);
      \path (17) edge[conn] (26);
      \path (12) edge[conn] (20);
      \path (10) edge[conn] (17);
      \path (19) edge[conn] (27);
      \path (7) edge[conn] (20);
      \path (4) edge[conn] (8);
      \path (20) edge[conn] (30);
      \path (4) edge[conn] (9);
      \path (15) edge[conn] (22);
      \path (9) edge[conn] (16);
      \path (15) edge[conn] (25);
      \path (14) edge[conn] (23);
      \path (2) edge[conn] (7);
      \path (17) edge[conn] (28);
      \path (1) edge[conn] (11);
      \path (3) edge[conn] (14);
      \path (5) edge[conn] (11);
      \path (24) edge[conn] (29);
      \path (3) edge[conn] (7);
      \path (6) edge[conn] (19);
    \end{scope}
    \begin{scope} 
      \foreach \nodename/\labelpos/\labelopts/\labelcontent in {%
        0/below//{0.325},  
        1/below//{0.048},  
        2/below//{0.172},  
        3/below//{0.037},  
        4/below//{0.182},  
        5/below//{0.024},  
        6/below//{0.018},  
        8/below//{0.048},  
        9/below//{0.004},  
        10/below//{0.028}, 
        12/below//{0.113}, 
        26/above//{veil-color:white},
        26/above/line2/{gill-attachment:free},
        27/above//{gill-size:broad},
        28/above/line2/{ring-number:},
        28/above//{one},
        29/above/lines2/{stalk surface},
        29/above/lines/{above ring:smooth},
        30/above/line2/{gill-spacing:close},
        31/above//{veil-type:partial}
      } \coordinate[label={[\labelopts]\labelpos:{\labelcontent}}](c) at (\nodename);
    \end{scope}
  \end{scope}
\end{tikzpicture}}  
    \resizebox{0.5\textwidth}{0.5\textwidth}{\includegraphics{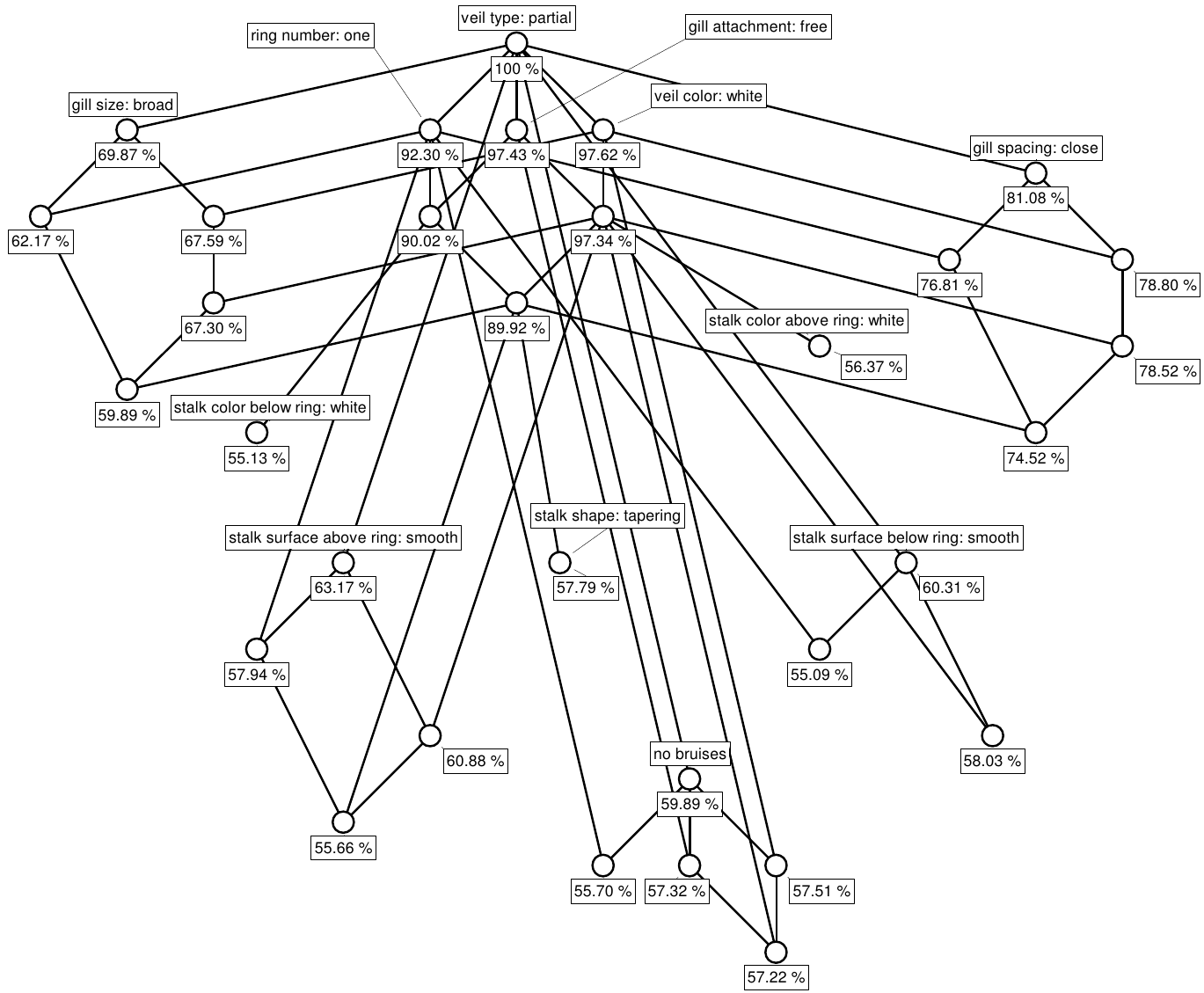}}
  \end{center}
  \caption{The concept lattice of the $5,5176$-core of the mushroom (above) and the TITANIC output with minimum support value
of $55\%$ of the same context (below).}
  \label{core-titanic}
\end{figure}

For the rest of this section we investigate the implications one can draw from
the TITANIC semilattice $\mathfrak{T}$ and compare them to the ones valid in the
\pkcore $\Scon$.  We know from~\cref{Core-propositions} that all implications
$A\to B$ with premise length at least five are also valid in the mushroom
context. In the \pkcore we have 70 such implications. However, there are no
non-trivial implications with premise length greater or equal five entailed in
$\mathfrak{T}$. We consider this a major advantage of the novel \pkcore approach
in contrast to TITANIC. As for valid implications in $\Scon$ with premise length
less than five we know from~\cref{Core-propositions} that those are implications
with high confidence in the Mushroom context. The support value in the mushroom
context of such an (valid) implication can also be computed according
to~\cref{Core-propositions}. For example, since $|H|=7930$ and $|G|=8123$ we
know that the valid implication $A\to B$ with $s\coloneqq \sup_{\Scon}(A\to B)$
has in the Mushroom data set at least $7930/8123\cdot s$ support. Due to $\Scon$
being a $5,5176$-core we know that support of $A\to B$ is $s$ in the Mushroom
context if $|A\cup B|\geq 5$. From our analysis we conclude: while the \pkcore
of the Mushroom context does not have as much attributes as the TITANIC
semilattice, it may contain more information in terms of implications.


\section{Algorithms}
\label{algorithms}
For a novel data reduction approach it is essential to have efficient algorithms
available. In this section we present two computational problems concerned with
\pkcores and their algorithmic solution. We start with the fundamental problem
of computing the \pkcore $\Scon$ for a given formal context $\context$. Our
solution to this problem is an adaption of an algorithm by Matula and Beck 1983
\cite{journals/jacm/MatulaB83} for computing $k$-cores of graphs. Given some
graph $G=(V,E)$ with $E\subseteq{V\choose{2}}$ it uses bucket queues to
repeatedly find and remove vertices of small degree. The bucket queue \texttt{Q}
is generated with ${\texttt{Q[k]}\coloneqq\{v \in V \mid \mathop{deg}_{G}(v) =
  k\}}$. After that, the algorithm removes iteratively all vertices in buckets
with index smaller than $k$ and reassigns the remaining vertices to buckets of
corresponding degree. Our adaption to \pkcores employs this algorithm. However,
due to the bipartite nature of our data we provision two bucket queues, for
objects and attributes, respectively. The computational cost for initializing
these bucket queues for a context $(G,M,I)$ is $O(\abs{G}\cdot \abs{M})$. The
worst case cost for one removal iteration on both queues is bound by $O(\abs{G}p
+ \abs{M}q)$. In this particular case the algorithm has to update the remaining
derivation size of at most $p$ attributes for each removed object and $q$
objects for each removed attribute respectively. Hence, the total computation
complexity for our algorithm, as presented in Algorithm~\ref{computeCore}, is
$O(\abs{G}\cdot\abs{M})$. A worst case context is one of
interordinal scale as seen in \cref{worstCore}.

\begin{figure}
  \begin{tikzpicture}
      \node (a) at (0,0)
      {  \scalebox{0.7}{\begin{cxt}
            \cxtName{}
            \att{1}
            \att{2}
            \att{3}
            \att{4}
            \att{5}
            \att{6}
            \obj{xxx...}{1}
            \obj{.xxx..}{2}
            \obj{..xxx.}{3}
            \obj{...xxx}{4}
            \obj{....XX}{\textcolor{red}5}
          \end{cxt}}};
      \node (b) at (2.2,0) {...};
      \node (c) at (3.7,0) 
      {\scalebox{0.7}{\begin{cxt}
            \cxtName{}
            \att{1}
            \att{2}
            \att{3}
            \att{4}
            \att{\textcolor{red}5} 
            \obj{xxx..}{1}
            \obj{.xxx.}{2}
            \obj{..xxX}{3}
            \obj{.....}{}
            \obj{.....}{}
          \end{cxt}}};
      \node (d) at (5.6,0) {...};
      \node (e) at (6,0) 
      {\scalebox{0.7}{\begin{tabular}{|l||}
                        \hline
                        \phantom{1} \\
                        \hline                        
                        \hline
                        \phantom{1} \\
                        \hline
                        \phantom{1} \\
                        \hline
                        \phantom{1} \\
                        \hline
                        \phantom{1} \\
                        \hline
                        \phantom{1} \\
                        \hline
                      \end{tabular}}};
  \end{tikzpicture} 
  \label{worstCore}
  \caption{Example for an worst case instance data set for
    Algorithm~\ref{computeCore}. Computing the $3,2$-core (right) results in a cascading
    sequence of removing either one object or attribute (left, middle) in each
    step.}
\end{figure}
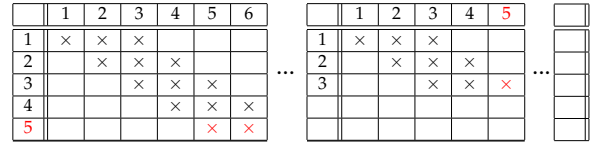

\begin{algorithm}[b]
  \scriptsize
  \caption{Compute $p,q$-core}
  \label{computeCore}
  \DontPrintSemicolon
  \SetKwInOut{Input}{Input}
  \SetKwInOut{Output}{Output}
  \SetKwInOut{Return}{return}

  \Input{ A context $\context = (G,M,I)$ and $p,k \in \mathbb{N}$}
  \Output{\ $\Scon$, with $\Scon \subseteq_{pk} \context$}

  \tcp{initialize core context}
  init output $(H,N,J)$ as $(G,M,I)$\\
  \tcp{initialize bucket lists}
  init $A$, with $A[i] = \{g \in U \mid \abs{g^J} = i \}$\\
  init $B$, with $B[i] = \{m \in V \mid \abs{m^J} = i \}$\\
  \While{$\exists g \in A[i<p]\quad$ or $\quad \exists m \in B[i<k]$}{
    $U = U \setminus \{g \in A[i<p]\}$\\
    $V = V \setminus \{m \in B[i<k]\}$\\
    $J = J \cap U\times V$\\
    update $A$ and $B$
  }
  \Return{\ $\Scon = (H,N,J)$}
\end{algorithm}

\subsubsection*{Navigating Between \pkcore Lattices}\label{algorithm}
In \cref{coreChar} we characterized the interestingness of cores. This required
knowledge about the corresponding concept lattice sizes of \pkcoresnw. However,
every computation of such an concept lattice is (possibly) costly and the number
of these computations is large. For example, we have seen that the Wiki44k data
set has 97,773 non-empty \pkcoresnw. To overcome this issue (to some extent), we
developed an algorithm based on the theory presented in~\cref{transform_int}
(right).

\begin{problem}[Core Lattice]
  \label{prob:1}
  Given $\context$ and the set of all its concepts $\mathfrak{B}(\context)$
  compute for $\Scon \pkleq{p,q} \context$ the set of concepts
  $\mathfrak{B}(\Scon)$.
\end{problem}

For solving this problem we present Algorithm~\ref{transformInt}, which is based
on~\cref{prop:1,prop:3}. This algorithm employs a so for not recollected notion
in FCA, \emph{duality}. We say the dual of a formal context $\Scon=(H,N,J)$ is
$\Scon^{d}\coloneqq (N,H,J^{-1})$. Furthermore, by abuse of notation, we denote
by $\mathfrak{B}(\Scon)^{d}$ the set of concepts of the dual context. The
algorithm solves~\cref{prob:1} in following manner. First, all attributes not in
$\Scon$ are removed by the method \texttt{remove\_attributes}. This is realized
by intersecting all intents with the set~$N$ {(Line~8,~left)}. We construct the
new extent as follows: we compute all extents associated to the same intent,
i.e., intersection with $N$ yields $\intent{c}\cap N$ and form the union of them
(Line 8, right). We justify this using the following lemma.
\begin{lemma}\label{algorithmextent}
  Let $\Tcon = (U,V,L)$ and $\mathbb{S}=(U,N,J)$ with $\mathbb{S}\leq
  \context$. Then we find that  $\forall D \in \Int(\mathbb{S}): D^{J} =
  \bigcup_{B\in\Int(\context),B\cap N=D} B^{I}$.
\end{lemma}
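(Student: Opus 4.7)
The plan is to prove the set equality $D^J = \bigcup_{B \in \Int(\context),\, B \cap N = D} B^I$ by verifying both inclusions and exploiting the simple observation that $J$ and $I$ agree on all pairs involving an attribute from $N$, since $J = I \cap (G \times N)$ (interpreting the statement with $\context = (G,M,I)$ in place of $\Tcon$, as this matches $\Int(\context)$ and $B^I$ appearing on the right-hand side). The main preparatory observation is that for any $D \subseteq N$ one has $D^J = D^I$: evaluating the derivation only requires looking at pairs $(g,d)$ with $d \in D \subseteq N$, and on such pairs $J$ and $I$ coincide. With this identity at hand, the two inclusions become almost immediate.

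For the inclusion $\supseteq$, I would take an arbitrary $B \in \Int(\context)$ with $B \cap N = D$. Since $D = B \cap N \subseteq B$, the antitonicity of the derivation operator yields $B^I \subseteq D^I = D^J$. Taking the union over all such $B$ therefore still lies inside $D^J$.

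For the inclusion $\subseteq$, the strategy is to exhibit one concrete $B$ in the union that already realises $D^J$. The natural candidate is $B \coloneqq D^{II}$: it is an intent of $\context$ by construction, and by the proof of~\cref{prop:1}~ii) (equivalently~\cref{prop:2}~i)) we have $B \cap N = D^{II} \cap N = D$ because $D \in \Int(\mathbb{S})$ and $D^J = D^I$ gives $D^{JI} \cap N = D$. Applying $I$ once more and using idempotence of the closure operator, $B^I = D^{III} = D^I = D^J$. Hence $D^J$ itself appears as one of the sets in the union, proving the reverse containment.

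I do not expect a real obstacle: the whole argument hinges on the identity $D^J = D^I$ for $D \subseteq N$, after which both inclusions are short. The only subtle point is justifying $B \cap N = D$ for the witness $B = D^{II}$, which is however already contained (with exactly this construction) in the proofs of~\cref{prop:1} and~\cref{prop:2} and can simply be cited.
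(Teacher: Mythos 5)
Your proof is correct and follows essentially the same route as the paper: the $\supseteq$ direction via antitonicity of derivation together with the identity $D^J=D^I$ for $D\subseteq N$, and the $\subseteq$ direction by exhibiting the closure of $D$ in the ambient context as the witness $B$ with $B\cap N=D$ and $B^I=D^J$ (your $D^{II}$ coincides with the paper's $D^{JL}$ precisely because $D^J=D^I=D^L$). The only cosmetic difference is that the paper separates off the case $D\in\Int(\Tcon)$ as ``simple,'' whereas your argument handles it uniformly.
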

\begin{proof}
  We omit the simple case of $D\in\Int(\Tcon)$ and have therefore
  $D\notin\Int(\Tcon)$.
    \begin{inparaitem}
    \item[$\supseteq$:] Since $D\subseteq B$ it follows that
      $B^{L}\subseteq D^{L}$. We also know that $D^{L}= D^{J}$ because
      of $D\subseteq N$ and the fact that $\mathbb{S}$ is a induced
      sub-context of $\context$ on the same object set. Thus,
      $B^{L}\subseteq D^{J}$.
    \item[$\subseteq$:] For each $D\in \Int(\mathbb{S})$, $D^J\in \Ext(\Tcon)$,
      according to~\cite[Proposition 30]{fca-book}. Therefore, we know that
      $D^{JL}\in \Int(\Scon)$. With $\Scon \leq \context$ we know that
      $D\subseteq D^{JL}$ and following $D^{JL}\cap N = D^{LL}=D$.  Therefore,
      for each $D\in \Int(\Scon)$ there exists a $B\in \Int(\Tcon)$ with $B\cap
      N = D$ and $B^L=D^J$. Hence, $D^J\subseteq \bigcup_{B\in \Int(\Tcon),
        B\cap N=D} B^{L}$.
    \end{inparaitem}
\end{proof}
\noindent 
Secondly, we remove all objects that are not contained in $\Scon$ from the
extents of $\mathcal{B}$ and apply the same \texttt{remove\_attributes} method
to the duals (see Line 4).

The overall run-time complexity of this algorithm is linear in the number of
concepts, since the computation of duals is linear and the overall iteration
consumes the set of concepts. This is an improvement compared to the output
polynomial time complexity of the common computation of $\mathfrak{B}(\Scon)$.

In case we only require to compute the set of all concept intents of a
\pkcorenw, we can apply~\cref{prop:3} in combination with the cover relation of
the concept lattice. This relation of $(\mathfrak{B}(\context),\leq)$ is given
by $\prec\subseteq \leq$ such that for all $c,d\in\mathfrak{B}(\context)$ we
have $c\prec d$ iff $c<d$ and there is no $e\in\mathfrak{B}(\context)$ with $c <
e < d$. Using both~\cref{prop:3} and $\prec$ we can remove all attributes
through intersecting with $N$ (cf. Algorithm~\ref{transformInt}). Afterwards it
is sufficient to remove meet-irreducible intents with cardinality $<p$. These
can be identified easily using the cover relation, i.e., the elements with
exactly one upper neighbor.

\begin{algorithm}[t]
  \scriptsize
  \caption{Transform Core Concepts}
  \label{transformInt}
  \DontPrintSemicolon

  \SetKwInOut{Input}{Input}
  \SetKwInOut{Output}{Output}
  \SetKwInOut{Return}{return}
  \Input{
    $\Tcon = (U,V,L)$ and $\Scon = (H,N,J)$\\
    with $\Scon\pkleq{p,q}T$ and $\mathfrak{B}(\Tcon)$}
  \Output{$\mathcal{B}(\Scon)$}
  \SetKwProg{Trans}{def \texttt{compute-core-lattice}($\Tcon,\Scon,\mathcal{B}(\Tcon)$)}{:}{}
  \SetKwProg{RemA}{def \texttt{remove\_attributes}($\Tcon,\Scon,\mathfrak{B}$)}{:}{}
    $\Ocon\coloneqq (U,N,L{\cap}U{\times}N)$\\
    $\hat{\mathfrak{B}}=$~\texttt{remove\_attributes}($\Tcon,\Ocon,\mathfrak{B}(\Tcon)$)\\
    \nl
    $\mathfrak{B}(\Scon)=$ \texttt{remove\_attributes}$(\Ocon^{d},\Scon^{d},\hat{\mathfrak{B}}^{d})^{d}$\\
  \RemA{}{
    \tcp{Map $\texttt{M}:\Int(\Tcon)\mapsto \Ext(\Tcon)$}
    Initialize empty map \texttt{M}\tcp*{\texttt{M} returns
      $\emptyset$ for unused keys}
    \For{$c\in \mathcal{B}$}{      
      \texttt{M}[$\intent{c}\cap N$]$\mapsto
      \texttt{M}[\intent{c}]\cup \extent{c}$}
      extract $\mathcal{B}$ from \texttt{M}\\
    \Return{$\mathcal{B}$}
   }

\end{algorithm}

In \cref{transform_int} we illustrate a generalization
of Algorithm~\ref{transformInt} to arbitrary sub-contexts as stated by the
following problem:

\begin{problem}[Lattices of Sub-contexts]\label{problemtransformLattice}
  Let $\Scon = (H,N,J)$ be a formal context and $\mathfrak{B}(\Scon)$ its
  concepts. Compute the set of concepts $\mathfrak{B}(\Tcon)$ of
  $\Tcon=(U,V,L)$, with $L\cap H\times N = J\cap U\times V$.
\end{problem}

\begin{algorithm}[t]
  \scriptsize
  \caption{Transform Concepts}
  \label{transforml}
  \DontPrintSemicolon
  \SetKwInOut{Input}{Input}
  \SetKwInOut{Output}{Output}
  \SetKwInOut{Return}{return}
  \SetKwProg{Prime}{def \texttt{lattice\_transformer}($\context ,\Scon,\Tcon,
    \BV(\Scon) $)}{:}{}
  \SetKwProg{Ins}{def \texttt{insert\_attributes}($\Scon ,\Tcon, \mathfrak{B}$)}{:
    }{}
  \Input{ $\context = (G,M,I)$\\
    $\Scon = (H,N,J)$, induced sub-context of $\context$\\
    $\Tcon = (U,V,L)$, induced sub-context of $\context$\\
    $\BV(\Scon)$}
  \Output{$\BV(\Tcon)$}
    \tcp{Adjust the set of attributes}
    $\Ocon_{a1}=(H,N\cap V,\_),\ \Ocon_{a2}=(H,V,\_)$\\
    $\mathfrak{B}_{a_1} {=} \texttt{remove\_attribute}(\Scon,\Ocon_{a1},\mathfrak{B}(\Scon))$\\
    $\mathfrak{B}_{a_2} {=} \texttt{insert\_attributes}(\Ocon_{a1},\Ocon_{a2},\mathfrak{B}(\Ocon_{a_1})$\\
    \tcp{Adjust the set of objects}
    $\Ocon_{b1}=(H\cap U,V, \_ )$\\
    $\mathfrak{B}_{b_1}^{d} {=} \texttt{remove\_attributes}(\Ocon_{a2}^{d}, \Ocon_{b1}^{d}, \mathfrak{B}_{a_2}^{d})$\\
    $\mathfrak{B}_{b_2}^{d} {=} \texttt{insert\_attributes}(\Ocon_{b1}^{d} ,\Tcon^{d}, \mathfrak{B}_{b1}^{d})$\\
    $\mathfrak{B}(\Tcon) = \mathfrak{B}_{b_2}$\\
  \Ins{}{
  \tcp{init order $\leq$ on attributes $V$ such that}
  \tcp{$\forall m \in V{\setminus} N, \forall n \in N: m\leq n$}
  $\hat{\mathfrak{B}}$ = \texttt{next\_closure} on $\Tcon$ in
  \texttt{lectic}$(\leq)$ starting with $N$ \\
  \For{$(E,I)\in \hat{\mathcal{B}}$}{
      \If{$I{\cap}N$ not closed in $\Tcon$}{
        remove the concept $((I\cap N)^{\Tcon},I{\cap}N)$ form $\mathfrak{B}$
      }
   }
  \Return{$\mathfrak{B}\cup \hat{\mathfrak{B}}$}}
\end{algorithm}

With Algorithm~\ref{transforml} we present an approach
for~\cref{problemtransformLattice}, which is based on~\cref{prop:1,prop:2}.  The
algorithm starts by adapting the intents of $\Scon$ to the attribute set of
$\Tcon$ in two steps. First, attributes not included in $\Tcon$ are removed. For
this we apply the \texttt{remove\_attributes} method of
Algorithm~\ref{transformInt}. Second, to insert missing intents the algorithm
employs the \texttt{insert\_attributes} method which enumerates the set of
missing intents from $\mathfrak{B}(\Tcon)\setminus \mathfrak{B}(\Scon)$. Since
any intent of this set contains at least one element of $V\setminus N$ the
algorithms starts with computing \texttt{next\_closure} of $N$ (see Line 9) in an
pre-chosen order $\leq$ on $V$ such that $\forall m \in V{\setminus} N, \forall
n \in N: m\leq n$. Finally in this step, concepts in
$\mathfrak{B}(\Scon)\setminus \mathfrak{B}(\Tcon)$ need to be removed
(cf.~\cref{prop:2}, ii). Thus, we can perform the removal (see Line 12) using a
simple check (see Line 11). The result $\mathfrak{B}(H,V,\_)$ is then stored as
indicated (see Line 3). The necessary adjustment of the set of objects is performed
in a similar fashion due to duality.

The overall run-time complexity can be estimated by
$O(|\mathfrak{B}(\Scon)|+|\mathfrak{B}(\Tcon)\setminus\mathfrak{B}(\Scon)|\cdot|\Tcon|)$.
This is apparent since the first step is the same as in
Algorithm~\ref{transformInt} and the second step employs one scan of $\Tcon$.
This result enables a fast solution of~\cref{problemtransformLattice}, in
particular in the case of~\pkcores.

\section{Related Work}
\label{relwork}
As FCA is interested in representing knowledge through formal concepts and
knowledge bases, it is computationally demanding. Hence, it is crucial to
develop methods that can compute meaningful reductions of data sets or enable a
computational feasible navigation in them. A popular and simple technique to
achieve this is random sampling from contexts~\cite{journals/ijfcs/RothOK08}. This approach, however, does not allow for a
meaningful control of the result. Moreover, the computed concept lattices do
mostly elude from interpretation or even explanation. Also, another disadvantage
of random sampling of objects and attributes, compared to the proposed \pkcore
method, is that rare attribute combinations are unlikely to be drawn. Yet, these
may represent essential counter-examples for learning a sound propositional Horn
logic of the domain.

Other approaches compress formal contexts with popular machine learning
procedures such as \emph{latent semantic analysis} or unsupervised clustering
algorithms on the object set/ attribute set~\cite{conf/cla/CodocedoTA11,
  journals/eswa/AswanikumarS10}. However, we find the resulting concept lattices
do lack on meaningfulness. Since all mentioned approaches introduces new
attributes, e.g., as linear combination of the original attributes, they often
loose their human explainability. Contrary there are also procedures to
automatically/manually select attributes and objects of relevance to the user
\cite{conf/cla/AndrewsO10,DBLP:conf/iccs/HanikaKS19}. However, these approaches
may require a fair amount of domain knowledge, which is not always
available. Furthermore, such processes are very often time consuming for large
data sets, e.g., with hundreds of attributes, when done manually. A major
shortfall of these techniques is that they do not provide proper estimations for
their impact on the concept lattice of the original data set.

Another course of action to cope with large formal contexts are techniques such
as TITANIC~\cite{stumme2002efficient}. They address the computational and
knowledge size issue by omitting rare attribute combinations, i.e., less
supported ones. We consider this a problem as discussed in the first
paragraph. Nonetheless, an advantage of TITANIC is that the resulting iceberg
’lattice’ is sized comprehensively and does not introduce any error with respect
to the original concept lattice. Nonetheless, when dealing with implicational
knowledge of the investigated domain we can draw less knowledge from iceberg
concept lattices, as observed in~\cref{larger}.

A well-established method for data set reduction originates from the research
field of network analysis, called \emph{cores} \cite{conf/waw/HealyJMA06,
  doerfel2013analysis}. The original idea for this goes back to
Seidman~\cite{Seidman1983}. In there, a network is reduced to a densely
connected part. A variation for bipartite networks are
\pkcoresnw~\cite{conf/apvis/AhmedBFHMM07}. Cores are also applied in the realm
of pattern structures~\cite{conf/www/SoldanoSBBL18}. Our presented work on
\pkcores is based on the research results mentioned in this paragraph and
extends them to knowledge cores in formal contexts. Notions, like the impact of
\pkcores on concept lattices and the canonical bases are so far not
investigated, to the best of our knowledge.

\section{Conclusion}\label{sec:conclusion}
In this work we presented an approach to define and investigate the knowledge
core of a formal context. For this we employed a notion from two-mode networks,
called \pkcoresnw. We transferred the idea from graph theory to formal concept
analysis and introduced the notion of \pkcore formal contexts in a formal
manner. Based on that, we identified essential differences of \pkcore lattices
and their originating concept lattice. In particular we investigated conceptual
differences for general sub-contexts and demonstrated their application to
cores. Secondly, we demonstrated different approaches to data analysis using
\pkcoresnw. Crucial here was the characterization of \emph{interestingness}
among core lattices.

As for practical demonstration we analyzed different data sets. We could show
that our method is able to compute two meaningful core lattices for the
\emph{spices} data set that are also human comprehensible in size. For the
\emph{wiki44k} data set, we were able to pinpoint wrongly used properties as
well as missing information using a core lattice diagram.

Furthermore, we found theoretical results enabling us to depict different
algorithms for computing and transforming core structures from formal context
data sets. As for knowledge bases we were able to provide different estimations
for the validity of implicational knowledge in a concept lattice based on core
concept lattice computations. We notably showed that some transformations can be
done in time linear in the size of the original concept lattice. An
exceptionally interesting result is the now achieved ability to navigate
efficiently between arbitrary core lattices of a data set without recalculating
partially shared concepts.  The more these contexts have in common, with respect
to their closure systems, the faster a transformation will perform.  All
algorithms presented in this work are implemented and provided via the FCA
software~\texttt{conexp-clj}\cite{conf/icfca/HanikaH19}, a free and open-source
research tool written in Clojure.

For future work we identify different meaningful lines of research. First of all
a large experimental study on real-world data sets is required. In such a study
domain experts from different fields should evaluate the meaningfulness of core
knowledge to their research investigations. Second, we envision a combination of
\pkcores with other data reduction approaches. For example, one could couple the
TITANIC approach with \pkcoresnw. In such a setup one could compute an initial
interesting core with our method and employ in a second step TITANIC to compute
an highly supported fraction. In a third research thread we propose a more
thorough investigation of the set of all \pkcoresnw. Although we could show that
this set does not constitute a lattice structure one may draw meaningful
knowledge from investigating the shown order relation with tools from directed
graph analysis. Finally, we anticipate an application of \pkcores in temporal
knowledge settings. Due to the shown efficient adaptability to small changes in
objects or attributes \pkcores are an ideal candidate to maintain the dynamic
knowledge of a domain.


\ifCLASSOPTIONcompsoc
  \section*{Acknowledgments}
\else
  \section*{Acknowledgment}
  \fi This work was funded by the German Federal Ministry of Education and
  Research (BMBF) in its program ``CIDA - Computational Intelligence \& Data
  Analytics'' under grant number 01IS17057.

\ifCLASSOPTIONcaptionsoff
  \newpage
\fi



%

\sloppy
\printbibliography

@InBook{Wille1982,
  Title                    = {Ordered Sets: Proc. of the NATO Adv. Study Institute held at Banff, Canada, August 28 to September 12, 1981},
  Author                   = {Wille, R.},
  Chapter                  = {Restructuring Lattice Theory: An Approach Based on Hierarchies of Concepts},
  Editor                   = {Rival, Ivan},
  Pages                    = {445--470},
  Publisher                = {Springer},
  Year                     = {1982},
  Address                  = {Dordrecht},
  OPTDoi                      = {10.1007/978-94-009-7798-3_15},
  ISBN                     = {978-94-009-7798-3},
  OPTUrl                      = {http://dx.doi.org/10.1007/978-94-009-7798-3_15}
}

@book{fca-book,
  author    = {Ganter, B. and Wille, R.},
  title     = {Formal Concept Analysis: Mathematical Foundations},
  publisher = {Springer-Verlag, Berlin},
  year      = 1999,
  pages     = {x+284}
}

@article{journals/jacm/MatulaB83,
  added-at = {2018-11-14T00:00:00.000+0100},
  author = {Matula, David W. and Beck, Leland L.},
  biburl = {https://www.bibsonomy.org/bibtex/229a96404789cf86e4ca440b6511afcbd/dblp},
  ee = {https://www.wikidata.org/entity/Q57259028},
  interhash = {6e0e90d5a677f2e5d5bafd9200a7860e},
  intrahash = {29a96404789cf86e4ca440b6511afcbd},
  journal = {J. ACM},
  keywords = {dblp},
  month = {July},
  number = 3,
  pages = {417-427},
  timestamp = {2018-11-15T13:07:53.000+0100},
  title = {Smallest-Last Ordering and clustering and Graph Coloring Algorithms},
  url = {http://dblp.uni-trier.de/db/journals/jacm/jacm30.html#MatulaB83},
  volume = 30,
  year = 1983
}

@inproceedings{stumme2002efficient,
  added-at = {2019-02-08T14:27:52.000+0100},
  author = {Stumme, Gerd},
  biburl = {https://www.bibsonomy.org/bibtex/2ae1e4895a7f0c5bbdfbc35417dc06051/hirth},
  booktitle = {DEXA},
  interhash = {56611a15d60e2711a0aafc257715c03d},
  intrahash = {ae1e4895a7f0c5bbdfbc35417dc06051},
  keywords = {analysis core fca network sna social},
  pages = {534-546},
  publisher = {Springer},
  series = {LNCS},
  timestamp = {2019-02-08T14:27:52.000+0100},
  title = {Efficient Data Mining Based on Formal Concept Analysis},
  volume = 2453,
  year = 2002
}

@inproceedings{conf/cla/CodocedoTA11,
  added-at = {2019-02-08T14:27:14.000+0100},
  author = {Codocedo, Víctor and Taramasco, Carla and Astudillo, Hernán},
  biburl = {https://www.bibsonomy.org/bibtex/23e6a3bf12bceed1b2257543e2956df4d/hirth},
  booktitle = {CLA},
  editor = {Napoli, Amedeo and Vychodil, Vilém},
  ee = {http://ceur-ws.org/Vol-959/paper24.pdf},
  interhash = {8d45a818a967a1ef15adacb54292e97d},
  intrahash = {3e6a3bf12bceed1b2257543e2956df4d},
  keywords = {core fca lsa},
  pages = {349-362},
  publisher = {CEUR-WS.org},
  timestamp = {2019-02-08T14:27:14.000+0100},
  title = {Cheating to achieve Formal Concept Analysis over a Large Formal Context.},
  url = {http://dblp.uni-trier.de/db/conf/cla/cla2011.html#CodocedoTA11},
  volume = 959,
  year = 2011
}

@inproceedings{conf/cla/AndrewsO10,
  added-at = {2019-02-08T14:26:29.000+0100},
  author = {Andrews, Simon and Orphanides, Constantinos},
  biburl = {https://www.bibsonomy.org/bibtex/22d2a1eb3ffcabbbb67bfc384350689d3/hirth},
  booktitle = {CLA},
  editor = {Kryszkiewicz, Marzena and Obiedkov, Sergei A.},
  ee = {http://ceur-ws.org/Vol-672/paper10.pdf},
  interhash = {4018cc7b4abe7eaffbaf49596b5b277e},
  intrahash = {2d2a1eb3ffcabbbb67bfc384350689d3},
  keywords = {core fca},
  pages = {104-115},
  publisher = {CEUR-WS.org},
  timestamp = {2019-02-08T14:26:29.000+0100},
  title = {Analysis of Large Data Sets using Formal Concept Lattices.},
  url = {http://dblp.uni-trier.de/db/conf/cla/cla2010.html#AndrewsO10},
  volume = 672,
  year = 2010
}

@article{Seidman1983,
  added-at = {2019-02-08T14:22:50.000+0100},
  author = {Seidman, Stephen B.},
  biburl = {https://www.bibsonomy.org/bibtex/2402ff073bdbfef97765e307068f59110/hirth},
  groups = {public},
  interhash = {bdba8b78574faec3a7315423e29b7556},
  intrahash = {402ff073bdbfef97765e307068f59110},
  journal = {Social Networks},
  keywords = {core graph networks},
  number = 3,
  pages = {269-287},
  timestamp = {2019-02-08T14:22:50.000+0100},
  title = {Network structure and minimum degree},
  username = {lantiq},
  volume = 5,
  year = 1983
}

@article{journals/eswa/AswanikumarS10,
  added-at = {2010-07-21T15:44:07.000+0200},
  author = {Aswanikumar, Ch. and Srinivas, S.},
  biburl = {https://www.bibsonomy.org/bibtex/20f38be0f80dd68a43c356797e3725b7d/dblp},
  date = {2010-05-03},
  ee = {http://dx.doi.org/10.1016/j.eswa.2009.09.026},
  interhash = {f00421c20ef63e09dfa3b4db647529e6},
  intrahash = {0f38be0f80dd68a43c356797e3725b7d},
  journal = {Expert Syst. Appl.},
  keywords = {dblp},
  number = 3,
  pages = {2696-2704},
  timestamp = {2010-07-21T15:44:07.000+0200},
  title = {Concept lattice reduction using fuzzy K-Means clustering.},
  url = {http://dblp.uni-trier.de/db/journals/eswa/eswa37.html#AswanikumarS10},
  volume = 37,
  year = 2010
}

@article{journals/dam/DistelS11,
  added-at = {2017-05-20T13:13:35.000+0200},
  author = {Distel, Felix and Sertkaya, Baris},
  biburl = {https://www.bibsonomy.org/bibtex/26209b86cdc38fc37e007961df9c51503/johirth},
  ee = {http://dx.doi.org/10.1016/j.dam.2010.12.004},
  interhash = {b17ec934c9cd9260bb411971ee304b68},
  intrahash = {6209b86cdc38fc37e007961df9c51503},
  journal = {Discrete Applied Mathematics},
  keywords = {2017 fca kde pseudo-intent seminar},
  number = 6,
  pages = {450-466},
  timestamp = {2017-07-03T10:53:27.000+0200},
  title = {On the complexity of enumerating pseudo-intents.},
  url = {http://dblp.uni-trier.de/db/journals/dam/dam159.html#DistelS11},
  volume = 159,
  year = 2011
}

@article{kuznetsov2004icd,
  added-at = {2017-05-21T14:48:31.000+0200},
  author = {Kuznetsov, S.O.},
  biburl = {https://www.bibsonomy.org/bibtex/27d4e785ce77ab0df94fd2786c2ea0a7a/johirth},
  description = {It is shown, that determining the number of pseudo intents of a formal context is a #P-hard problem.},
  interhash = {25e972f2dc80f7362dae4fd68c4e7f47},
  intrahash = {7d4e785ce77ab0df94fd2786c2ea0a7a},
  journal = {Journal of Universal Computer Science},
  keywords = {2017 complexity fca kde pseudo-intent seminar},
  number = 8,
  pages = {927--933},
  timestamp = {2017-07-03T10:52:45.000+0200},
  title = {{On the intractability of computing the Duquenne-Guigues base}},
  volume = 10,
  year = 2004
}

@inproceedings{conf/apvis/AhmedBFHMM07,
  added-at = {2019-05-29T11:12:24.000+0200},
  author = {Ahmed, Adel and Batagelj, Vladimir and Fu, Xiaoyan and Hong, Seok-Hee and Merrick, Damian and Mrvar, Andrej},
  biburl = {https://www.bibsonomy.org/bibtex/282b3116ecee96cf2f621f16a6c93cdf3/johirth},
  booktitle = {APVIS},
  editor = {Hong, Seok-Hee and Ma, Kwan-Liu},
  ee = {https://www.wikidata.org/entity/Q58863620},
  interhash = {98c887b94e1826332d7739e3af78265e},
  intrahash = {82b3116ecee96cf2f621f16a6c93cdf3},
  isbn = {1-4244-0808-3},
  keywords = {cores networks two-mode},
  pages = {17-24},
  publisher = {IEEE Computer Society},
  timestamp = {2019-05-29T11:12:24.000+0200},
  title = {Visualisation and analysis of the internet movie database.},
  url = {http://dblp.uni-trier.de/db/conf/apvis/apvis2007.html#AhmedBFHMM07},
  year = 2007
}

@inproceedings{conf/waw/HealyJMA06,
  added-at = {2019-06-07T11:35:02.000+0200},
  author = {Healy, John and Janssen, Jeannette C. M. and Milios, Evangelos E. and Aiello, William},
  biburl = {https://www.bibsonomy.org/bibtex/2236b5dfa7fdfb7bd0d8ece0829f8c648/johirth},
  booktitle = {WAW},
  editor = {Aiello, William and Broder, Andrei Z. and Janssen, Jeannette C. M. and Milios, Evangelos E.},
  ee = {http://dx.doi.org/10.1007/978-3-540-78808-9_13},
  interhash = {025aad4879617af59052b1f2b0ef7d5f},
  intrahash = {236b5dfa7fdfb7bd0d8ece0829f8c648},
  isbn = {978-3-540-78807-2},
  keywords = {cores},
  pages = {137-148},
  publisher = {Springer},
  series = {LNCS},
  timestamp = {2019-06-07T11:35:02.000+0200},
  title = {Characterization of Graphs Using Degree Cores.},
  url = {http://dblp.uni-trier.de/db/conf/waw/waw2006.html#HealyJMA06},
  volume = 4936,
  year = 2006
}

@inproceedings{doerfel2013analysis,
  author    = {Stephan Doerfel and
               Robert J{\"{a}}schke},
  editor    = {Qiang Yang and
               Irwin King and
               Qing Li and
               Pearl Pu and
               George Karypis},
  title     = {An analysis of tag-recommender evaluation procedures},
  booktitle = {RecSys '13},
  pages     = {343--346},
  publisher = {{ACM}},
  year      = {2013},
  url       = {https://doi.org/10.1145/2507157.2507222},
  doi       = {10.1145/2507157.2507222},
  timestamp = {Sun, 02 Jun 2019 21:26:56 +0200},
  biburl    = {https://dblp.org/rec/conf/recsys/DoerfelJ13.bib},
  bibsource = {dblp computer science bibliography, https://dblp.org}
}

@inproceedings{conf/www/SoldanoSBBL18,
  author    = {Henry Soldano and
               Guillaume Santini and
               Dominique Bouthinon and
               Sophie Bary and
               Emmanuel Lazega},
  editor    = {Pierre{-}Antoine Champin and
               Fabien L. Gandon and
               Mounia Lalmas and
               Panagiotis G. Ipeirotis},
  title     = {Bi-Pattern Mining of Two Mode and Directed Networks},
  booktitle = {WWW Companion},
  pages     = {1287--1294},
  publisher = {{ACM}},
  year      = 2018,
  url       = {https://doi.org/10.1145/3184558.3191568},
  doi       = {10.1145/3184558.3191568},
  timestamp = {Wed, 21 Nov 2018 12:44:11 +0100},
  biburl    = {https://dblp.org/rec/conf/www/SoldanoSBBL18.bib},
  bibsource = {dblp computer science bibliography, https://dblp.org}
}

@article{journals/ijfcs/RothOK08,
  added-at = {2019-06-02T00:00:00.000+0200},
  author = {Roth, Camille and Obiedkov, Sergei A. and Kourie, Derrick G.},
  biburl = {https://www.bibsonomy.org/bibtex/2fd50b9600510ed6565fe1da7fa7baf71/dblp},
  ee = {https://www.wikidata.org/entity/Q62046331},
  interhash = {85ad6667372d7a716c7814bc0095a45a},
  intrahash = {fd50b9600510ed6565fe1da7fa7baf71},
  journal = {Int. J. Found. Comput. Sci.},
  keywords = {dblp},
  number = 2,
  pages = {383-404},
  timestamp = {2019-06-04T12:28:47.000+0200},
  title = {On Succinct Representation of Knowledge Community Taxonomies with Formal Concept Analysis.},
  url = {http://dblp.uni-trier.de/db/journals/ijfcs/ijfcs19.html#RothOK08},
  volume = 19,
  year = 2008
}

@misc{Dua:2019 ,
author = "Dua, Dheeru and Graff, Casey",
year = "2017",
title = "{UCI} Machine Learning Repository",
url = "http://archive.ics.uci.edu/ml",
institution = "University of California, Irvine, School of Information and Computer Sciences" }

@incollection{ganter2010basic,
  author = {Ganter, Bernhard},
  biburl = {https://www.bibsonomy.org/bibtex/200d3cdaea05efaed9cb99c94f93ddacc/sdo},
  booktitle = {Formal Concept Analysis},
  description = {Two Basic Algorithms in Concept Analysis - Springer},
  doi = {10.1007/978-3-642-11928-6_22},
  editor = {Kwuida, Léonard and Sertkaya, Barış},
  interhash = {f44d214d7176b9183d2bf29b8efbdc00},
  intrahash = {00d3cdaea05efaed9cb99c94f93ddacc},
  isbn = {978-3-642-11927-9},
  keywords = {algorithm fca nextClosure},
  language = {English},
  pages = {312-340},
  publisher = {Springer Berlin Heidelberg},
  series = {LNCS},
  timestamp = {2014-11-27T11:59:29.000+0100},
  title = {Two Basic Algorithms in Concept Analysis},
  url = {http://dx.doi.org/10.1007/978-3-642-11928-6_22},
  volume = 5986,
  year = 2010
}

@inproceedings{hanika2019discovering,
  author    = {Tom Hanika and
               Maximilian Marx and
               Gerd Stumme},
  title     = {Discovering Implicational Knowledge in Wikidata},
  booktitle = {Formal Concept Analysis - 15th International Conference, {ICFCA} 2019,
                Proceedings},
  pages     = {315--323},
  year      = {2019},
  crossref  = {DBLP:conf/icfca/2019},
  url       = {https://doi.org/10.1007/978-3-030-21462-3\_21},
  doi       = {10.1007/978-3-030-21462-3\_21},
  timestamp = {Fri, 14 Jun 2019 10:43:04 +0200},
  biburl    = {https://dblp.org/rec/bib/conf/icfca/Hanika0S19},
  bibsource = {dblp computer science bibliography, https://dblp.org},
  keywords  = {conf}
}

@proceedings{DBLP:conf/icfca/2019,
  editor    = {Diana Cristea and
               Florence Le Ber and
               Baris Sertkaya},
  title     = {Formal Concept Analysis - 15th International Conference, {ICFCA} 2019,
                Proceedings},
  series    = {LNCS},
  volume    = {11511},
  publisher = {Springer},
  year      = {2019},
  url       = {https://doi.org/10.1007/978-3-030-21462-3},
  doi       = {10.1007/978-3-030-21462-3},
  isbn      = {978-3-030-21461-6},
  timestamp = {Fri, 14 Jun 2019 10:41:46 +0200},
  biburl    = {https://dblp.org/rec/bib/conf/icfca/2019},
  bibsource = {dblp computer science bibliography, https://dblp.org}
}

@article{stumme2002computing,
  abstract = {We introduce the notion of iceberg concept lattices and show their use in knowledge discovery in databases. Iceberg lattices are a conceptual clustering method, which is well suited for analyzing very large databases. They also serve as a condensed representation of frequent itemsets, as starting point for computing bases of association rules, and as a visualization method for association rules. Iceberg concept lattices are based on the theory of Formal Concept Analysis, a mathematical theory with applications in data analysis, information retrieval, and knowledge discovery. We present a new algorithm called TITANIC for computing (iceberg) concept lattices. It is based on data mining techniques with a level-wise approach. In fact, TITANIC can be used for a more general problem: Computing arbitrary closure systems when the closure operator comes along with a so-called weight function. The use of weight functions for computing closure systems has not been discussed in the literature up to now. Applications providing such a weight function include association rule mining, functional dependencies in databases, conceptual clustering, and ontology engineering. The algorithm is experimentally evaluated and compared with Ganter's Next-Closure algorithm. The evaluation shows an important gain in efficiency, especially for weakly correlated data.},
  added-at = {2011-01-19T15:17:53.000+0100},
  address = {Amsterdam, The Netherlands, The Netherlands},
  author = {Stumme, Gerd and Taouil, Rafik and Bastide, Yves and Pasquier, Nicolas and Lakhal, Lotfi},
  biburl = {https://www.bibsonomy.org/bibtex/2fc31933f0eec502e305b6aecb9ef6e8a/sdo},
  doi = {10.1016/S0169-023X(02)00057-5},
  interhash = {d500ac8a249ca8bf0fb05f382799d48f},
  intrahash = {fc31933f0eec502e305b6aecb9ef6e8a},
  issn = {0169-023X},
  journal = {Data \& Knowledge Engineering},
  keywords = {computing concept fca iceberg kdd titanic},
  month = aug,
  number = 2,
  pages = {189--222},
  publisher = {Elsevier Science Publishers B. V.},
  timestamp = {2011-01-19T15:17:53.000+0100},
  title = {Computing iceberg concept lattices with TITANIC},
  url = {http://portal.acm.org/citation.cfm?id=606457},
  volume = 42,
  year = 2002
}

@inproceedings{conf/icfca/HanikaH19,
  added-at = {2019-07-19T21:52:56.000+0200},
  author = {Hanika, Tom and Hirth, Johannes},
  biburl = {https://www.bibsonomy.org/bibtex/2432006ec0f26e5c6c5e26900faadc7e3/tomhanika},
  booktitle = {ICFCA (Supplements)},
  editor = {Cristea, Diana and Ber, Florence Le and Missaoui, Rokia and Kwuida, Léonard and Sertkaya, Baris},
  ee = {http://ceur-ws.org/Vol-2378/shortAT8.pdf},
  interhash = {6e7358098278bfc44f5c07cae36aee6f},
  intrahash = {432006ec0f26e5c6c5e26900faadc7e3},
  keywords = {clojure conexp fca itegpub kde kdepub myown},
  pages = {70-75},
  publisher = {CEUR-WS.org},
  timestamp = {2019-07-19T21:52:56.000+0200},
  title = {Conexp-Clj - A Research Tool for FCA.},
  url = {http://dblp.uni-trier.de/db/conf/icfca/icfca2019suppl.html#HanikaH19},
  volume = 2378,
  year = 2019
}

@ARTICLE{1401887,
author={M. J. {Zaki} and C. -. {Hsiao}},
journal={IEEE Transactions on Knowledge and Data Engineering},
title={Efficient algorithms for mining closed itemsets and their lattice structure},
year={2005},
volume={17},
number={4},
pages={462-478},
keywords={data mining;very large databases;tree searching;closed itemsets;frequent itemsets;closed itemset lattice;association rules;data mining;search tree;hash approach;Data mining;Itemsets;Lattices;Frequency;Association rules;Transaction databases;Data visualization;Visual databases;Multidimensional systems;Degradation;Index Terms- Closed itemsets;frequent itemsets;closed itemset lattice;association rules;data mining.},
doi={10.1109/TKDE.2005.60},
ISSN={},
month={April},}

@book{herbs,
  added-at = {2019-12-02T16:36:06.000+0100},
  address = {München},
  author = {Mahn, Manuela},
  biburl = {https://www.bibsonomy.org/bibtex/25793a5617f562469bfac13ff9f67a4dd/tomhanika},
  dnb = {http://d-nb.info/1050363566},
  editor = {Müller-Wallraf, Gundula},
  interhash = {72b40bce4812233f8502cc8eee94786a},
  intrahash = {5793a5617f562469bfac13ff9f67a4dd},
  isbn = {9783862446773},
  keywords = {context fca fun gewuerze spices},
  pages = 319,
  publisher = {Christian},
  timestamp = {2019-12-02T16:36:06.000+0100},
  title = {Gewürze : das Standardwerk},
  year = 2014
}

@inproceedings{cispa2924,
           month = {September},
           year = 2019,
           booktitle = {ECML/PKDD},
           title = {Sets of Robust Rules, and How to Find Them},
          author = {Jonas Fischer and Jilles Vreeken},
             url = {https://ecmlpkdd2019.org/downloads/paper/650.pdf}
}

@article{implbase,
author = {J.~L. Guigues and V. Duquenne},
journal = {Mathématiques et Sciences Humaines},
language = {fre},
pages = {5-18},
publisher = {Ecole Pratique des hautes études, Centre de mathématique sociale et de statistique},
title = {Familles minimales d'implications informatives résultant d'un tableau de données binaires},
url = {http://eudml.org/doc/94331},
volume = 95,
year = 1986
}

@inproceedings{mergebase,
  author    = {Petko Valtchev and
               Vincent Duquenne},
  editor    = {Raoul Medina and
               Sergei A. Obiedkov},
  title     = {On the Merge of Factor Canonical Bases},
  booktitle = {ICFCA},
  series    = {LNCS},
  volume    = {4933},
  pages     = {182--198},
  publisher = {Springer},
  year      = {2008},
  url       = {https://doi.org/10.1007/978-3-540-78137-0\_14},
  doi       = {10.1007/978-3-540-78137-0\_14},
  timestamp = {Tue, 14 May 2019 10:00:45 +0200},
  biburl    = {https://dblp.org/rec/conf/icfca/ValtchevD08.bib},
  bibsource = {dblp computer science bibliography, https://dblp.org}
}

@inproceedings{directBase,
  added-at = {2006-09-21T13:18:26.000+0200},
  address = { Frankfurt },
  author = {Ganter, Bernhard and Wille, Rudolf},
  biburl = {https://www.bibsonomy.org/bibtex/22d4152794fd5b454fafb8655ff5a9337/stumme},
  booktitle = { Die Klassifikation und ihr Umfeld },
  description = {FCA publications until 1980},
  editor = {Degens, P. O. and Hermes, H.-J. and Opitz, O.},
  interhash = {6722f59ae1b6f5857384c27e4aecf60e},
  intrahash = {2d4152794fd5b454fafb8655ff5a9337},
  keywords = {fba formal begriffsanalyse concept forschungsgruppe ag1 tu formale analysis lattices fca darmstadt},
  pages = { 171-185 },
  publisher = { Indeks--Verlag },
  timestamp = {2006-09-21T13:18:26.000+0200},
  title = { Implikationen und Abh{\"a}ngigkeiten zwischen
      Merkmalen },
  year = { 1986 }
}

@inproceedings{DBLP:conf/iccs/HanikaKS19,
  author    = {Tom Hanika and
               Maren Koyda and
               Gerd Stumme},
  editor    = {Dominik Endres and
               Mehwish Alam and
               Diana Sotropa},
  title     = {Relevant Attributes in Formal Contexts},
  booktitle = {ICCS},
  series    = {LNCS},
  volume    = {11530},
  pages     = {102--116},
  publisher = {Springer},
  year      = {2019},
  url       = {https://doi.org/10.1007/978-3-030-23182-8\_8},
  doi       = {10.1007/978-3-030-23182-8\_8},
  timestamp = {Tue, 25 Jun 2019 14:43:17 +0200},
  biburl    = {https://dblp.org/rec/conf/iccs/HanikaKS19.bib},
  bibsource = {dblp computer science bibliography, https://dblp.org}
}

@book{ganter2001implikationen,
  address = {Frankfurt am Main},
  pages={171-185},
  title={Implikationen und abh{\"a}ngigkeiten zwischen Merkmalen},
  biburl = {https://www.bibsonomy.org/bibtex/22fe1e6738a099a81a7fa9e52b773aa1c/lepsky},
  editor = {Degens, PO and Hermes, HJ and Opitz, O},
  isbn = {978-3-88672-017-0},
  publisher = {Indeks},
  series = {Studien zur Klassifikation},
  booktitle = {Die Klassifikation und ihr Umfeld},
  year = 1986
}

\end{document}